\documentclass[twoside]{article}
%\usepackage{aistats2016}

% If your paper is accepted, change the options for the package
% aistats2016 as follows:
%
\usepackage[accepted]{aistats2016}
%
% This option will print headings for the title of your paper and
% headings for the authors names, plus a copyright note at the end of
% the first column of the first page.

\usepackage{hyperref}
\usepackage{url}

\usepackage{algpseudocode}
\usepackage{algorithmicx}
\usepackage{algorithm}
\usepackage{times,amsmath,amssymb,amsfonts,epsfig,graphicx}
\usepackage{amsthm}  % defines the 'proof' environment with square symbol
\usepackage{enumerate}
\usepackage{enumitem}
\usepackage{stackrel}
\usepackage{multirow}
\usepackage{graphicx}
\usepackage{subfig}
\usepackage{psfrag}
\usepackage{color}
\usepackage{comment}
\usepackage{bbm}
\usepackage[normalem]{ulem}
\usepackage{xcolor}
\usepackage{framed}
\usepackage{tikz}
\usepackage{booktabs}
\usepackage{wrapfig}
\usepackage{nicefrac}

\newcommand{\savefootnote}[2]{\footnote{\label{#1}#2}}
\newcommand{\repeatfootnote}[1]{\textsuperscript{\ref{#1}}}

%------------------------------------------------------------
% Theorem like environments
%
\newtheorem{theorem}{Theorem}

\newtheorem{assumption}{Assumption}

\newtheorem{definition}{Definition}

\newtheorem{proposition}{Proposition}
\newtheorem{remark}{Remark}

\numberwithin{equation}{section}

%--------------------------------------------------------
% Commands for symbols used in the report.
% \mathcal symbols

\newcommand{\calC}{\ensuremath{\mathcal{C}}}

\newcommand{\calH}{\ensuremath{\mathcal{H}}}

\newcommand{\calI}{\ensuremath{\mathcal{I}}}
\newcommand{\calP}{\ensuremath{\mathcal{P}}}

\newcommand{\calS}{\ensuremath{\mathcal{S}}}
\newcommand{\calN}{\ensuremath{\mathcal{N}}}

\newcommand{\calV}{\ensuremath{\mathcal{V}}}

\newcommand{\calE}{\ensuremath{\mathcal{E}}}
\newcommand{\calVp}{\ensuremath{\calV^{\prime}}}
\newcommand{\calVpp}{\ensuremath{\calV^{\prime\prime}}}

% Miscellaneous symbols 

\newcommand{\norm}[1]{\|{#1}\|}
\newcommand{\abs}[1]{|{#1}|}

\newcommand{\set}[1]{\left\{{#1}\right\}}
\newcommand{\dotprod}[2]{\langle#1,#2\rangle}
\newcommand{\est}[1]{\widehat{#1}}
\newcommand{\expec}{\ensuremath{\mathbb{E}}}
\newcommand{\grad}{\ensuremath{\nabla}}
\newcommand{\hess}{\ensuremath{\nabla^2}}
\newcommand{\matR}{\ensuremath{\mathbb{R}}}
\newcommand{\matZ}{\ensuremath{\mathbb{Z}}}

\newcommand{\argmin}[1]{\underset{#1}{\operatorname{argmin}}}

\newcommand{\prob}{\ensuremath{\mathbb{P}}}
\newcommand{\Linfnorm}{L_{\infty}}
\newcommand{\vp}{\ensuremath{v^{\prime}}}
\newcommand{\vpp}{\ensuremath{v^{\prime\prime}}}
\newcommand{\tp}{\ensuremath{t^{\prime}}}
\newcommand{\lp}{\ensuremath{l^{\prime}}}
\newcommand{\qp}{\ensuremath{q^{\prime}}}

% Vectors and matrices
\newcommand{\vecx}{\mathbf x}
\newcommand{\vecn}{\mathbf n}

\newcommand{\vecv}{\mathbf v}
\newcommand{\vecvp}{\mathbf v^{\prime}}
\newcommand{\vecvpp}{\mathbf v^{\prime\prime}}
\newcommand{\vecw}{\mathbf w}

\newcommand{\vecy}{\mathbf y}

\newcommand{\vecz}{\mathbf z}

\newcommand{\matA}{\ensuremath{\mathbf{A}}}

\newcommand{\matV}{\ensuremath{\mathbf{V}}}

\newcommand{\matVpp}{\ensuremath{\mathbf{V^{\prime\prime}}}}
%----------------------------
% Paper specific symbols
%----------------------------

\newcommand{\phitil}{\ensuremath{\tilde{\phi}}}

\newcommand{\numcen}{\ensuremath{m_{x}}}
\newcommand{\numdirec}{\ensuremath{m_{v}}} 
\newcommand{\numdirecp}{\ensuremath{m_{v^{\prime}}}} 
\newcommand{\numdirecpp}{\ensuremath{m_{v^{\prime\prime}}}} 
\newcommand{\numcenpair}{\ensuremath{m^{\prime}_{x}}}
\newcommand{\totsparsity}{\ensuremath{k}} 
\newcommand{\univsparsity}{\ensuremath{k_1}} 
\newcommand{\bivsparsity}{\ensuremath{k_2}} 
\newcommand{\univsupp}{\ensuremath{\calS_1}} 
\newcommand{\bivsupp}{\ensuremath{\calS_2}} 
\newcommand{\bivsuppvar}{\ensuremath{\calS_2^{\text{var}}}} 
\newcommand{\totsupp}{\ensuremath{\calS}}

\newcommand{\thirdtayrem}{\ensuremath{R}}

\newcommand{\ripconst}{\ensuremath{\kappa}}
\newcommand{\spacespl}{\ensuremath{{\calH}}}

\newcommand{\exnoisemag}{\ensuremath{\varepsilon}}
\newcommand{\exnoisemagp}{\ensuremath{\varepsilon^{\prime}}}
\newcommand{\exnoise}{\ensuremath{z}} 
\newcommand{\exnoisevec}{\ensuremath{\mathbf{\exnoise}}}   
\newcommand{\exnoisep}{\ensuremath{z^{\prime}}}  
\newcommand{\dimn}{\ensuremath{d}} 
\newcommand{\degree}{\ensuremath{\rho}} 
\newcommand{\maxdegree}{\ensuremath{\rho_{m}}}
\newcommand{\numdegree}{\ensuremath{\alpha}} % --- We dont use this parameter explicitly in our proofs now.
\newcommand{\smconst}{\ensuremath{B}}
\newcommand{\idenconst}{\ensuremath{D}}
\newcommand{\critintmeas}{\ensuremath{\lambda}}
\newcommand{\gradstep}{\ensuremath{\mu}}
\newcommand{\gradstepp}{\ensuremath{\mu^{\prime}}}
\newcommand{\hessstep}{\ensuremath{\mu_1}}
\newcommand{\taynoisvec}{\ensuremath{\vecn}}
\newcommand{\taynoissca}{\ensuremath{n}}
\newcommand{\baseset}{\ensuremath{\chi}}
\newcommand{\twohashfam}{\ensuremath{\calH_{2}^{d}}}
\newcommand{\thashfam}{\ensuremath{\calH_{t}^{d}}}
\newcommand{\hashfn}{\ensuremath{h}}
\newcommand{\canvec}{\ensuremath{\mathbf{e}}}

\newcommand{\hesssamperr}{\ensuremath{\tau^{\prime}}}
\newcommand{\derivsamperrpp}{\ensuremath{\tau^{\prime\prime}}}

\newcommand{\lpair}{\ensuremath{(l,l^{\prime})}} 
\newcommand{\lpairi}{\ensuremath{(l^{\prime},l)}} 
\newcommand{\qpair}{\ensuremath{(q,q^{\prime})}} 
\newcommand{\qpairi}{\ensuremath{(q^{\prime},q)}}

\newcommand{\xlpair}{\ensuremath{(x_{l},x_{l^{\prime}})}}

\newcommand{\hessestnoisa}{\ensuremath{\mathbf{\eta_{q,1}}}}
\newcommand{\hessestnoisb}{\ensuremath{\mathbf{\eta_{q,2}}}}

\newcommand{\gradtayremf}{\ensuremath{\begin{pmatrix}
  {\vecvp}^{T} \hess \partial_1 f(\zeta_1) \vecvp  \\ \vdots \\ {\vecvp}^{T} \hess \partial_{\dimn} f(\zeta_{\dimn}) \vecvp
 \end{pmatrix}}}

\begin{document}

% If your paper is accepted and the title of your paper is very long,
% the style will print as headings an error message. Use the following
% command to supply a shorter title of your paper so that it can be
% used as headings.
%
%\runningtitle{I use this title instead because the last one was very long}

% If your paper is accepted and the number of authors is large, the
% style will print as headings an error message. Use the following
% command to supply a shorter version of the authors names so that
% they can be used as headings (for example, use only the surnames)
%
%\runningauthor{Surname 1, Surname 2, Surname 3, ...., Surname n}

\twocolumn[
\aistatstitle{Learning Sparse Additive Models with Interactions in High Dimensions}
% \aistatsauthor{Anonymous Authors}
% \aistatsaddress{Unknown Institutions} 
\aistatsauthor{ Hemant Tyagi \And Anastasios Kyrillidis \And Bernd G\"{a}rtner \And Andreas Krause }
\aistatsaddress{ ETH Z\"urich \And UT Austin, Texas \And ETH Z\"urich \And ETH Z\"urich } 
]

% \author{
% Bernd G\"{a}rtner \\
% \texttt{gaertner@inf.ethz.ch} \\
% \and
% Andreas Krause \\
% \texttt{krausea@inf.ethz.ch} \\
% \and
% Hemant Tyagi \\
% \texttt{htyagi@inf.ethz.ch} \\
% }

%\author{Hemant Tyagi\\ ETH Z\"urich\\ \texttt{htyagi@inf.ethz.ch} 
%\And Anastasios Kyrillidis\\ UT Austin, Texas\\ \texttt{anastasios@utexas.edu}
%\And Bernd G\"{a}rtner\\ ETH Z\"urich\\ \texttt{gaertner@inf.ethz.ch}} 
%\And Andreas Krause\\ ETH Z\"urich \\ \texttt{krausea@ethz.ch}  

%----- Abstract --------
\begin{abstract}
A function $f: \matR^d \rightarrow \matR$ is referred to as a Sparse Additive Model (SPAM), if it is of the
form $f(\vecx) = \sum_{l \in \totsupp}\phi_{l}(x_l)$, where $\totsupp \subset [\dimn]$, $\abs{\totsupp} \ll \dimn$.
Assuming $\phi_l$'s and $\calS$ to be unknown, the problem of estimating $f$ from its samples has been
studied extensively. In this work, we consider a generalized SPAM, allowing for \emph{second order} interaction terms.
For some $\univsupp \subset [\dimn], \bivsupp \subset {[d] \choose 2}$, the function $f$ is assumed to be of the form: 
$$f(\vecx) = \sum_{p \in \univsupp}\phi_{p} (x_p) + \sum_{\lpair \in \bivsupp}\phi_{\lpair} \xlpair.$$
Assuming $\phi_{p},\phi_{\lpair}$, $\univsupp$ and, $\bivsupp$ to be unknown,
we provide a randomized algorithm that queries $f$ and  
\emph{exactly recovers} $\univsupp,\bivsupp$. Consequently, this also enables us to estimate the underlying 
$\phi_p, \phi_{\lpair}$. We derive sample complexity bounds for our scheme and also extend our analysis
to include the situation where the queries are corrupted with noise -- either stochastic, 
or arbitrary but bounded. Lastly, we provide simulation results on synthetic data, that
validate our theoretical findings.
\end{abstract}
\section{Introduction} \label{sec:intro}
Many scientific problems involve estimating an unknown function $f$, defined over a
compact subset of $\matR^{\dimn}$, with $\dimn$ large. Such problems arise for instance, 
in modeling complex physical processes \cite{Muller2008,Maathuis09,Wainwright09}. Information 
about $f$ is typically available in the form of point values $(x_i,f(x_i))_{i=1}^{n}$, which 
are then used for learning $f$. It is well known that the problem suffers from the curse of dimensionality, if only smoothness 
assumptions are placed on $f$. For example, if $f$ is $C^s$ smooth, then for uniformly approximating 
$f$ within error $\delta \in (0,1)$, one needs $n = \Omega(\delta^{-\dimn/s})$ samples \cite{Traub1988}. 

A popular line of work in recent times considers the setting where $f$ possesses an intrinsic 
low dimensional structure, \textit{i.e.}, depends on only a small subset of $\dimn$ variables. 
There exist algorithms for estimating such $f$ (tailored to the underlying structural assumption),
along with attractive theoretical guarantees that do not suffer from the curse of dimensionality;  
see \cite{Devore2011,Cohen2010,Tyagi2012_nips,Fornasier2010}. One such assumption leads to the 
class of sparse additive models (SPAMs), wherein: 
$$f(x_1, \dots, x_d) = \sum_{l \in \totsupp} \phi_l(x_l),$$ for some unknown 
$\totsupp \subset \set{1,\dots,\dimn}$ with $\abs{\totsupp} = \totsparsity \ll \dimn$. There exist several 
algorithms for learning these models; we refer to \cite{Ravi2009,Meier2009,Huang2010,Raskutti2012,Tyagi14_nips} 
and references therein. 

In this paper, we focus on a generalized SPAM model, where $f$ can also contain a small number of \emph{second order 
interaction terms}, \textit{i.e.},
\begin{equation} \label{eq:intro_gspam_form}
f(x_1,\dots,x_d) = \sum_{p \in \univsupp}\phi_{p} (x_p) + \sum_{\lpair \in \bivsupp}\phi_{\lpair} \xlpair; 
\end{equation}
$\univsupp \subset [\dimn], \bivsupp \subset {[d] \choose 2},$ with $\abs{\univsupp} \ll \dimn,\abs{\bivsupp} \ll \dimn^2$. 
There exist relatively few results for learning models of the form \eqref{eq:intro_gspam_form}, with the existing work being in the regression 
framework \cite{Lin2006, Rad2010, Storlie2011}. Here, $(x_i,f(x_i))_{i=1}^{n}$ are typically samples 
from an unknown probability measure $\prob$. 

We consider the setting where we have the 
freedom to query $f$ at any desired set of points. We propose a strategy for querying $f$, along with an 
efficient recovery algorithm, which leads to much stronger guarantees, compared to those known in the regression setting. 
In particular, we provide the first \emph{finite sample bounds} for exactly recovering sets $\univsupp$ and $\bivsupp$. 
Subsequently, we \emph{uniformly} estimate the individual components: $\phi_p, \phi_{\lpair}$ via additional queries 
of $f$ along the subspaces corresponding to $\univsupp,\bivsupp$. 
% and provide error rates for the same.

\paragraph{Contributions.} We make the following contributions for learning models of the form \eqref{eq:intro_gspam_form}.  
\begin{enumerate}[leftmargin=0.7cm]
\item[$(i)$] Firstly, we provide a randomized algorithm which provably recovers 
$\univsupp, \bivsupp$ \emph{exactly}, with $O(\totsparsity \maxdegree (\log \dimn)^3)$ noiseless point queries. 
Here, $\maxdegree$ denotes the maximum number of occurrences of a variable in $\bivsupp$, and captures the 
underlying \emph{complexity} of the interactions. 
\item[$(ii)$] An important tool in our analysis is a compressive sensing based sampling scheme, 
for recovering each row of a sparse Hessian matrix, for functions that also possess sparse gradients. 
This might be of independent interest.
\item[$(iii)$] We theoretically analyze the impact of additive noise in the point queries on the performance of our algorithm, 
for two noise models: arbitrary bounded noise and independent, identically distributed (i.i.d.) noise. In particular, for additive Gaussian noise,
we show that with $O(\maxdegree^5 \totsparsity^2 (\log \dimn)^4)$ noisy point queries, our algorithm recovers $\univsupp, \bivsupp$ exactly. 
We also provide simulation results on synthetic data that validate our theoretical findings.
\end{enumerate}

\paragraph{Notation.} For any vector $\vecx \in \matR^{\dimn}$, we denote its $\ell_p$-norm by $\norm{\vecx}_p
:= \left ( \sum_{l=1}^\dimn \abs{x_i}^p \right )^{1/p}$. For a set $\calS$, $(\vecx)_{\calS}$ denotes the 
restriction of $\vecx$ onto $\calS$, \textit{i.e.}, $((\vecx)_{\calS})_l = x_l$ if $l \in \calS$ and $0$ otherwise. 
For a function $g: \mathbb{R}^m \rightarrow \mathbb{R}$ of $m$ variables,  
$\expec_p[g]$, $\expec_{\lpair}[g], \expec[g]$ denote expectation with respect to uniform distributions over 
$x_p, (x_l, x_{l^{\prime}})$ and $(x_1,\dots,x_m)$, respectively.
For any compact $\Omega \subset \matR^n$, $\norm{g}_{\Linfnorm (\Omega)}$ denotes the $\Linfnorm$ norm of $g$ in $\Omega$.
The partial derivative operator $\partial/\partial x_i$ is denoted by $\partial_i$. 
For instance, $\partial_1^2 \partial_2 g$ denotes $\partial^3 g/\partial x_1^2 \partial x_2$.
\section{Problem statement}{\label{sec:problem}}
We are interested in the problem of approximating functions $f:\matR^{\dimn} \rightarrow \matR$ 
from point queries. For some unknown sets $\univsupp \subset [\dimn], \bivsupp \subset {[\dimn] \choose 2}$, 
the function $f$ is assumed to have the following form. 
\begin{equation} \label{eq:gspam_form}
 f(x_1,\dots,x_d) = \sum_{p \in \univsupp}\phi_{p} (x_p) + \sum_{\lpair \in \bivsupp}\phi_{\lpair} \xlpair.
\end{equation}
Here, $\phi_{\lpair}$ is considered to be ``truly bivariate'' meaning that $\partial_l \partial_{\lp} \phi_{\lpair} \not\equiv 0$. 
The set of all variables that occur in $\bivsupp$, is denoted by $\bivsuppvar$.
For each $l \in \bivsuppvar$, we refer to $\degree(l)$ as the \emph{degree} of $l$, \textit{i.e.},  
the number of occurrences of $l$ in $\bivsupp$, formally defined as:
\begin{equation*}
\degree(l) := \abs{\set{l^{\prime} \in \bivsuppvar : \lpair \in \bivsupp \ \text{or} \ \lpairi \in \bivsupp}}; \quad l \in \bivsuppvar. 
\end{equation*}
The largest such degree is denoted by $\maxdegree := \max\limits_{l \in \bivsuppvar} \degree(l).$ 

Our goal is to query $f$ at suitably chosen points in its domain, in order to estimate it within the compact 
region\footnote{One could more generally consider the region $[\alpha, \beta]^\dimn$ and transform the variables to 
$[-1,1]^\dimn$ via scaling and transformation.} 
$[-1,1]^{\dimn}$. To this end, note that representation \eqref{eq:gspam_form} is 
not unique\footnote{Firstly, we could add constants to
each $\phi_l, \phi_{\lpair}$, which sum up to zero. Furthermore, for each $l \in \bivsuppvar: \degree(l) > 1$, or 
$l \in \univsupp \cap \bivsuppvar : \degree(l) = 1$, we could add
univariates that sum to zero.}. This is avoided by re-writing \eqref{eq:gspam_form} in the following unique 
ANOVA form \cite{Gu02}: 
\begin{align} \label{eq:unique_mod_rep} % Unique ANOVA based representation of our model
f(x_1,\dots,x_d) &= c + \sum_{p \in \univsupp}\phi_{p} (x_p) + \sum_{\lpair \in \bivsupp} \phi_{\lpair} \xlpair \nonumber \\ &+ 
\sum_{q \in \bivsuppvar: \degree(q) > 1} \phi_{q} (x_q),
\end{align}
where $\univsupp \cap \bivsuppvar = \emptyset.$ Here, $c = \expec[f]$ and 
$\expec_p[\phi_p] = \expec_{\lpair}[\phi_{\lpair}] = 0$; $\forall p \in \univsupp, \lpair \in \bivsupp$, 
with expectations being over uniform distributions with respect to variable range $[-1,1]$.  
In addition, $\expec_{l}[\phi_{\lpair}] = 0$ if $\degree(l) = 1$. The univariate $\phi_q$ corresponding to 
$q \in \bivsuppvar$ with $\degree(q) > 1$, 
represents the net marginal effect of the variable and has $\expec_q[\phi_q] = 0$. 
We note that $\univsupp, \bivsuppvar$ are disjoint in \eqref{eq:unique_mod_rep}
as each $p \in \univsupp \cap \bivsuppvar$ can be merged with their bivariate counterparts, uniquely.
The uniqueness of \eqref{eq:unique_mod_rep} is shown formally in the appendix.

We assume the setting $\abs{\univsupp} = \univsparsity \ll \dimn$, $\abs{\bivsupp} = \bivsparsity \ll \dimn^2$. 
The set of \emph{all} active variables \textit{i.e.}, $\univsupp \cup \bivsuppvar$ is denoted by $\totsupp$, with 
$\totsparsity: = \abs{\totsupp} = \univsparsity +  \abs{\bivsuppvar}$ being 
the \emph{total sparsity} of the problem. 

Due to the special structure of $f$ in \eqref{eq:unique_mod_rep}, we note that if $\univsupp, \bivsupp$ were 
known beforehand, then one can estimate $f$ via standard results from approximation theory or from 
regression\footnote{This is discussed later.}. 
Hence, our primary focus in the paper is to recover $\univsupp, \bivsupp$.
Our main assumptions for this problem are listed below.

\begin{assumption}
$f$ can be queried from the slight enlargement: $[-(1+r),(1+r)]^d$, for some small $r > 0$.
\end{assumption}
%
%\paragraph{Assumptions.}  
%
\begin{assumption}\label{assum:smooth} 
Each $\phi_{\lpair},\phi_p$ is three times continuously differentiable, 
within $[-(1+r),(1+r)]^2$ and $[-(1+r),(1+r)]$ respectively. Since these domains are compact, 
there exist constants $\smconst_m\geq 0$ ($m=0,1,2,3$) so that:
\begin{align*}
\norm{\partial_l^{m_1} \partial_{l^{\prime}}^{m_2} \phi_{\lpair}}_{\Linfnorm[-(1+r),(1+r)]^2} \leq \smconst_m; \ m_1 + m_2 = m,
\end{align*} where $\lpair \in \bivsupp,$ and
\begin{align*}
\norm{\partial_p^{m} \phi_{p}}_{\Linfnorm[-(1+r),(1+r)]} \leq \smconst_m,
\end{align*} where $p \in \univsupp \ \text{or}, \ p \in \bivsuppvar \text{ and } \ \degree(p) > 1.$
\end{assumption}

Our next assumption is for identifying $\univsupp$. 

\begin{assumption} \label{assum:actvar_iden} 
For some constants $\idenconst_1, \critintmeas_1 > 0$, 
we assume that for each $p \in \univsupp$, $\exists$ connected $\calI_p \subset [-1,1]$,
of Lebesgue measure at least $\critintmeas_1 > 0$, such that $\abs{\partial_p \phi_p(x_p)} > \idenconst_1$, $\forall x_p \in \calI_p$.
This assumption is in a sense necessary. If say $\partial_p \phi_{p}$ was zero throughout $[-1,1]$, then it implies that $\phi_{p} \equiv 0$, 
since each $\phi_p$ has zero mean in \eqref{eq:unique_mod_rep}.
\end{assumption}

Our last assumption concerns the identification of  $\bivsupp$. 

\begin{assumption} \label{assum:pair_iden} 
For some constants $\idenconst_2, \critintmeas_2 > 0$, we assume that 
for each $\lpair \in \bivsupp$, $\exists$ connected $\calI_{l}, \calI_{l^{\prime}} \subset [-1,1]$, each interval of 
Lebesgue measure at least $\critintmeas_2 > 0$, such that 
$\abs{\partial_l \partial_{l^{\prime}} \phi_{\lpair} \xlpair}  > \idenconst_2, \ \forall \xlpair \in \calI_{l} \times \calI_{l^{\prime}}$.
\end{assumption}

Given the above, our problem specific parameters are: $(i)$ $\smconst_i$; $i=0,..,3$, 
$(ii)$ $\idenconst_j,\critintmeas_j$; $j=1,2$ and, $(iii)$ $\totsparsity, \maxdegree$.
We do not assume $\univsparsity, \bivsparsity$ to be known, but instead assume that $\totsparsity$ is known. 
Furthermore it suffices to use estimates for the problem 
parameters instead of exact values: In particular, we can use upper bounds for: $\totsparsity, \maxdegree$, $\smconst_i$; $i=0,..,3$ and lower bounds for: 
$\idenconst_j,\critintmeas_j$; $j=1,2$. 
%---- Smapling scheme and algorithm --------
\section{Our sampling scheme and algorithm} \label{sec:algo}
We start by explaining our sampling scheme, followed by our algorithm for identifying $\univsupp,\bivsupp$. 
Our algorithm proceeds in two phases -- we first estimate $\bivsupp$ and then $\univsupp$. Its theoretical properties for the 
\emph{noiseless} query setting are described in Section \ref{sec:noiseless_query_res}. Section \ref{sec:noise_impact} then analyzes how the sampling 
conditions can be adapted to handle the \emph{noisy} query setting. 
\subsection{Sampling scheme for estimating $\bivsupp$} 
Our main idea for estimating $\bivsupp$ is to estimate the off-diagonal entries of the Hessian of $f$, at appropriately chosen points. 
The motivation is the observation that for any 
$\lpair \in {[\dimn] \choose 2}$: 
$$\partial_l \partial_{\lp} f =  \left\{
	\begin{array}{ll}
		\partial_l \partial_{\lp} \phi_{\lpair}  & \mbox{if } \lpair \in \bivsupp, \\
		0 & \mbox{otherwise.}
	\end{array}
\right.$$ To this end, consider the Taylor expansion of 
the gradient $\grad f$, at $\vecx \in \matR^{\dimn}$, along the direction $\vecvp \in \matR^{\dimn}$, 
with step size $\hessstep$. Since $f$ is $C^3$ smooth, we have for $\zeta_q = \vecx + \theta_q \vecvp$, 
for some $\theta_q \in (0,\hessstep)$, $q = 1,\dots,\dimn$:
\begin{align} \label{eq:grad_tay_exp_f}
&\frac{\grad f(\vecx + \hessstep\vecvp) - \grad f(\vecx)}{\hessstep} \nonumber \\ &= \hess f(\vecx) \vecvp + \frac{\hessstep}{2} \gradtayremf.
\end{align} 
We see from \eqref{eq:grad_tay_exp_f} that the $l^{th}$ entry of $(\grad f(\vecx + \hessstep\vecvp) - \grad f(\vecx))/\hessstep$,
corresponds to a ``noisy'' linear measurement of the $l^{th}$ row of $\hess f(\vecx)$ with $\vecvp$. The noise corresponds 
to the third order Taylor remainder terms of $f$. 

Denoting the $l^{\text{th}}$ row of $\hess f(\vecx)$ by 
$\grad \partial_l f(\vecx) \in \matR^{\dimn}$, we make the following crucial observation: 
if $l \in \bivsuppvar$ then $\grad \partial_l f(\vecx)$ has at most $\maxdegree$ non-zero \emph{off-diagonal} entries,  
implying that it is $(\maxdegree+1)$ sparse. This follows on account of the structure of $f$ \eqref{eq:unique_mod_rep}.
Furthermore, if $l \in \univsupp$ then $\grad \partial_l f(\vecx)$ has at most 
one non zero entry (namely the diagonal entry), while if $l \notin \totsupp$, then $\grad \partial_l f(\vecx) \equiv 0$.

\paragraph{Compressive sensing based estimation.} Assuming for now that we have access to an oracle that provides 
us with gradient estimates of $f$, 
this suggests the following idea. We can obtain random linear measurements, for \emph{each row} 
of $\hess f(\vecx)$ via gradient differences, as in \eqref{eq:grad_tay_exp_f}. As each row is 
sparse, it is known from compressive sensing (CS) \cite{Candes2006,Donoho2006} 
that it can be recovered with only a few 
measurements. 

Inspired by this observation, consider an oracle that provides us with the estimates: 
$\est{\grad}{f}(\vecx), \{\est{\grad}{f}(\vecx+\hessstep\vecvp_j)\}_{j=1}^{\numdirecp}$ where 
$\vecvp_j$ belong to the set:
\begin{align*} %\label{eq:samp_direc_set_hess}
\calVp := \{\vecvp_j \in \matR^{\dimn} : \vp_{j,q} &= \pm1/\sqrt{\numdirecp} \ \text{w.p.} \ 1/2 \ \text{each};
 \nonumber \\  j&=1,\dots,\numdirecp \ \text{and} \ q=1,\dots,{\dimn}\}. 
\end{align*}
Let $\est{\grad} f(\vecx) = \grad f(\vecx) + \vecw(\vecx)$, where $\vecw(\vecx) \in \matR^{\dimn}$ denotes the
gradient estimation noise. Denoting $\matV^{\prime} = [\vecvp_1 \dots \vecvp_{\numdirecp}]^T$, 
we obtain $\dimn$ linear systems, by employing \eqref{eq:grad_tay_exp_f} at each $\vecvp_j \in \calVp$:  
\begin{align} \label{eq:hessrow_est_linfin}
\vecy_q = \matV^{\prime} \grad \partial_q f(\vecx) + \hessestnoisa + \hessestnoisb; \quad q=1,\dots,\dimn.
\end{align} 
$\vecy_q \in \matR^{\numdirecp}$ represents the measurement vector for the $q^{\text{th}}$ row, 
with $$(\vecy_q)_j = ((\est{\grad} f(\vecx + \hessstep\vecvp_j) - \est{\grad} f(\vecx))_q)/\hessstep$$ while 
$\hessestnoisa, \hessestnoisb \in \matR^{\numdirecp}$ represent noise with 
$(\hessestnoisa)_j = (\hessstep/2) {\vecvp_j}^{T} \hess \partial_q f(\zeta_{q,j}) \vecvp_j$ and 
$(\hessestnoisb)_j = (w_q(\vecx + \hessstep\vecvp_j) - w_q(\vecx))/\hessstep.$
Given the measurement vector $\vecy_q$, we can then obtain the estimate $\est{\grad} \partial_q f(\vecx)$ 
individually for each $q = 1,\dots,\dimn$, via $\ell_1$ minimization \cite{Candes2006,Donoho2006,Wojta2012}. 

\paragraph{Estimating sufficiently many Hessian's.} Having estimated \emph{each row} of $\hess f$ at some fixed $\vecx$, 
we have at hand an estimate of the set: $\{\partial_i \partial_{j} f(\vecx) : (i,j) \in {[\dimn] \choose 2} \}$. 
Our next goal is to repeat the process, at sufficiently many $\vecx$'s within $[-1,1]^{\dimn}$. 

We will denote the set of such points as $\baseset$. 
This will then enable us to sample each underlying $\partial_l \partial_{\lp} \phi_{\lpair}$ within 
its respective critical interval, as defined in Assumption \ref{assum:pair_iden}. Roughly speaking, 
since $\abs{\partial_l \partial_{\lp} \phi_{\lpair}}$ is ``suitably large'' in such an interval, 
we will consequently be able to detect each $\lpair \in \bivsupp$, via a thresholding procedure. To this end, 
we make use of a family of hash functions, defined as follows.
%-------Hash function family ---------
\begin{definition} \label{def:thash_fam}
For some $t \in \mathbb{N}$ and $j=1,2,\dots$, let $h_j : [\dimn] \rightarrow \set{1,2,\dots,t}$.
Then, the set $\thashfam = \set{\hashfn_1,\hashfn_2,\dots}$ is a $(\dimn,t)$-hash family if for 
any distinct $i_1,\dots,i_t \in [\dimn]$, $\exists$ $\hashfn \in \thashfam$ such that $h$ is an 
injection when restricted to $i_1,i_2,\dots,i_t$.
\end{definition}
%
%
%----------------------------------------
%  Figures to illustrate sampling scheme
%----------------------------------------
\begin{figure}
     \centering
     \subfloat[][]{\includegraphics[width=0.6\linewidth]{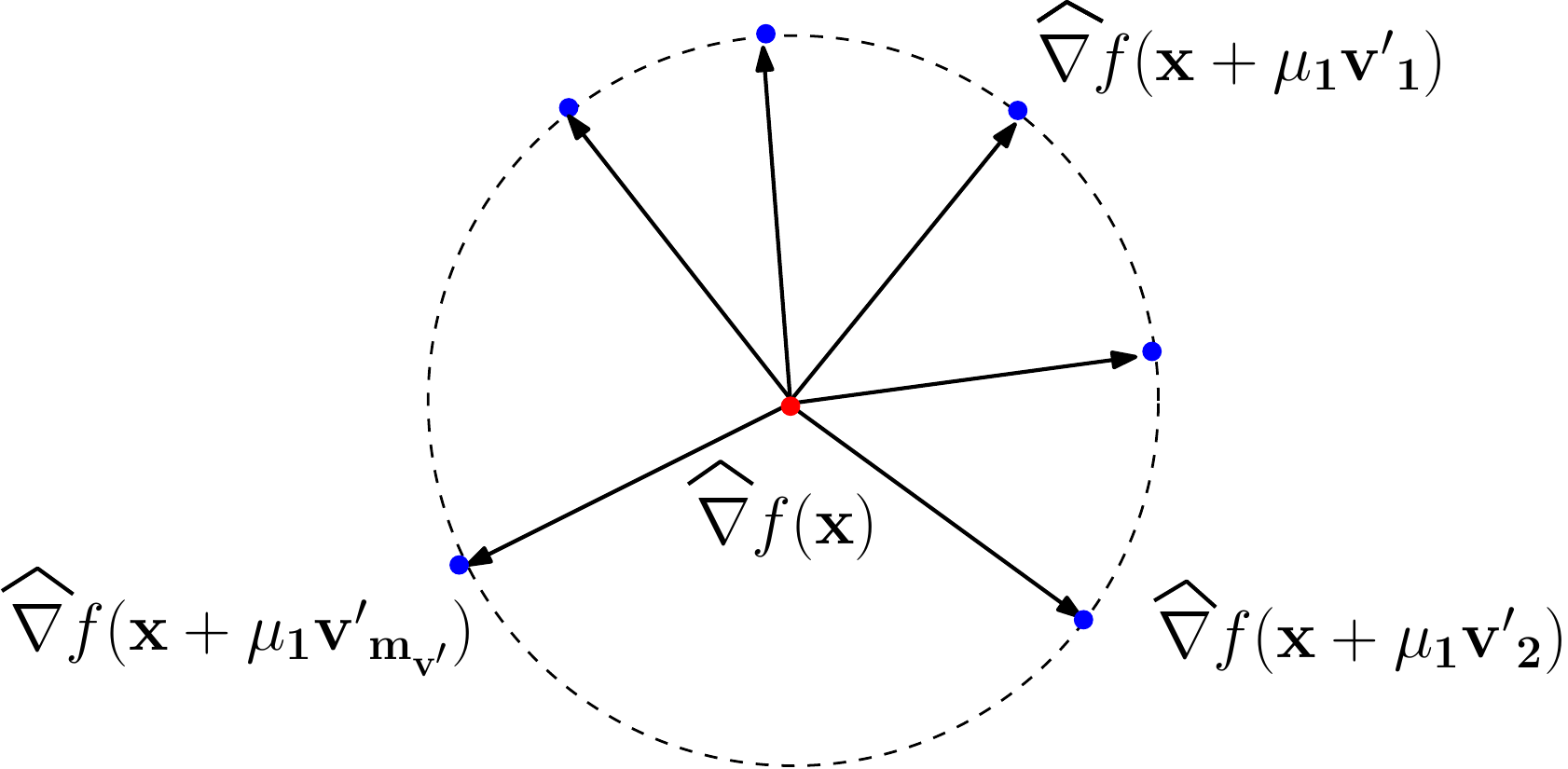}\label{fig:hess_samp}} \hspace{7mm}
     \subfloat[][]{\includegraphics[width=0.3\linewidth]{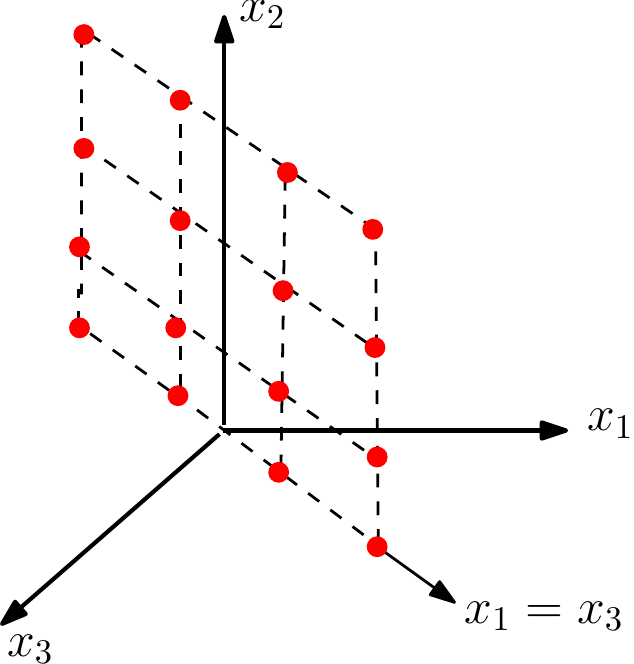}\label{fig:hash_samp}} 
     \caption{\small (a) $\hess f(\vecx)$ estimated using: $\est{\grad}f(\vecx)$ (at red disk) and 
		  neighborhood gradient estimates (at blue disks)
	    (b) Geometric picture: $\dimn = 3$, $\hashfn \in \calH_2^3$ with $\hashfn(1) = \hashfn(3) \neq \hashfn(2)$. 
	    Red disks are points in $\baseset(\hashfn)$. }
\end{figure}
Hash functions are widely used in theoretical computer science, such as in finding juntas \cite{Mossel03}. 
There exists a simple probabilistic method for constructing such a 
family, so that for any constant $C > 1$,  $\abs{\thashfam} \leq (C + 1)t e^t \log \dimn$ with high 
probability (w.h.p)\footnote{With probability $1-O(d^{-c})$ for some constant $c > 0$.} \cite{Devore2011}. 
For our purposes, we consider the family $\twohashfam$ so that for any distinct $i,j$, there exists 
$\hashfn \in \twohashfam$ such that $\hashfn(i) \neq \hashfn(j)$. 

For any $\hashfn \in \twohashfam$, let us now denote the vectors $\canvec_1(\hashfn), \canvec_2(\hashfn) \in \matR^{\dimn}$ where 
$$ (\canvec_i(\hashfn))_q = 
\left\{
	\begin{array}{ll}
		1  & \mbox{if } h(q) = i, \\
		0 & \mbox{otherwise,}
	\end{array}
\right.
$$
for $i=1,2$ and $q=1,\dots,\dimn$. 
Given at hand $\twohashfam$, we construct our set $\baseset$ using the 
procedure\footnote{Such sets were used in \cite{Devore2011} for a more general problem involving 
functions that are intrinsically $\totsparsity$ variate.} in \cite{Devore2011}. 
Specifically, for some $\numcen \in \matZ^{+}$, we construct for each $h \in \twohashfam$ the set:
\begin{small}
\begin{align*} %\label{eq:baseset_hash}
\baseset(\hashfn) := \Bigg\{&\vecx(\hashfn) \in [-1,1]^{\dimn}: \vecx(\hashfn) = \sum_{i=1}^{2} c_i \canvec_i(\hashfn); \nonumber \\ &c_1,c_2 \in
\left\{-1,-\frac{\numcen-1}{\numcen},\dots,\frac{\numcen-1}{\numcen}, 1\right\}\Bigg\}.
\end{align*}
\end{small}
Then, we obtain $\baseset = \cup_{h \in \twohashfam} \baseset(h)$ as the set of points at 
which we will recover $\hess f$. Observe that $\baseset$ has the property of discretizing \emph{any}
$2$-dimensional canonical subspace, within $[-1,1]^{\dimn}$ with 
$\abs{\baseset} \leq (2\numcen+1)^2 \abs{\twohashfam} = O(\log \dimn)$.

\paragraph{Estimating sparse gradients.} Note that $\grad{f}$ is at most $\totsparsity$ sparse, 
due to the structure of $f$. We now describe the oracle that we use, 
for estimating sparse gradients.  As $f$ is $\calC^3$ smooth, therefore the third order 
Taylor's expansion of $f$ at $\vecx$, along $\vecv,-\vecv \in \matR^{\dimn}$, with step size
$\gradstep > 0$, and $\zeta = \vecx + \theta \vecv$, 
$\zeta^{\prime} = \vecx - \theta^{\prime} \vecv$; $\theta,\theta^{\prime} \in (0,\gradstep)$ leads to
\begin{align} \label{eq:taylor_exp_f}
&\frac{f(\vecx + \gradstep\vecv) - f(\vecx - \gradstep\vecv)}{2\gradstep} \nonumber \\ &= \dotprod{\vecv}{\grad f(\vecx)} 
+ (\thirdtayrem_3(\zeta) - \thirdtayrem_3(\zeta^{\prime}))/(2\gradstep).
\end{align}
\eqref{eq:taylor_exp_f} corresponds to a noisy-linear measurement of $\grad f(\vecx)$, with $\vecv$.
The ``noise'' here arises on account of the third order terms 
$\thirdtayrem_3(\zeta),\thirdtayrem_3(\zeta^{\prime}) = O(\gradstep^3)$, in the Taylor expansion.
Let $\calV$ denote the set of measurement vectors:
\begin{align*} %\label{eq:samp_direc_set}
\calV := \{v_j \in \matR^{\dimn} : v_{j,q} &= \pm 1/\sqrt{\numdirec} \ \text{w.p.} \ 1/2 \ \text{each};
 \nonumber \\  j&=1,\dots,\numdirec \ \text{and} \ q=1,\dots,{\dimn}\}. 
\end{align*}
Employing \eqref{eq:taylor_exp_f} at each $\vecv_j \in \calV$, we obtain: 
\begin{equation} \label{eq:cs_form}
\vecy = \matV\grad f(\vecx) + \taynoisvec. 
\end{equation}
Here, $\vecy \in \matR^{\numdirec}$ denotes the measurement vector with 
$(\vecy)_j = (f(\vecx + \gradstep\vecv_j) - f(\vecx - \gradstep\vecv_j))/(2\gradstep)$. 
Also, $\matV = [\vecv_1 \dots \vecv_{\numdirec}]^T \in \matR^{\numdirec \times {\dimn}}$ denotes the 
measurement matrix and $\taynoisvec \in \matR^{\numdirec}$ 
denotes the noise terms. We then estimate $\grad f(\vecx)$ via standard $\ell_1$ 
minimization\footnote{Can be solved efficiently using interior point methods \cite{Nesterov94}} \cite{Candes2006,Donoho2006,Wojta2012}. 
%
%\begin{equation} \label{eq:l1_min_prog}
%$\est{\grad}f(\vecx) := \argmin{\vecy = \matV\vecz} \norm{\vecz}_1$.%  
%\end{equation}
%
Estimating sparse gradients via CS, has been considered previously in \cite{Fornasier2010, Tyagi14_nips}, 
albeit using \emph{second order} Taylor expansions, for different function models. 

\subsection{Sampling scheme for estimating $\univsupp$} 
Having obtained an estimate $\est{\bivsupp}$ of $\bivsupp$ we now proceed to estimate $\univsupp$.  
Let $\est{\bivsuppvar}$ denote the set of variables in $\est{\bivsupp}$ and 
$\calP := [\dimn] \setminus \est{\bivsuppvar}$. Assuming $\est{\bivsupp} = \bivsupp$, 
we are now left with a SPAM on the \emph{reduced} variable set $\calP$. Consequently, we employ the 
sampling scheme of \cite{Tyagi14_nips}, 
wherein the gradient of $f$ is estimated at equispaced points, along a diagonal of $[-1,1]^{\dimn}$. 
For $\numcenpair \in \matZ^{+}$, this set is defined as:   
%---------------------------
\begin{align*}
\baseset_{\text{diag}} := \Bigg\{\vecx &= (x \ x \ \cdots \ x) \in \matR^{\dimn}: \nonumber \\ x &\in \left\{-1,-\frac{\numcenpair-1}{\numcenpair},\dots,\frac{\numcenpair-1}{\numcenpair},1\right\}\Bigg\}.
\end{align*}
%---------------------------
Note that $\abs{\baseset_{\text{diag}}} = 2\numcenpair+1$. The motivation for  estimating $\grad f$ at $\vecx \in \baseset_{\text{diag}}$ is 
that we obtain estimates of $\partial_p \phi_p$ at equispaced points within $[-1,1]$, for $p \in \univsupp$. With a sufficiently fine discretization, 
we would ``hit'' the critical regions associated with each $\partial_p \phi_p$, as defined in Assumption \ref{assum:actvar_iden}. 
By applying a thresholding operation, we would then be able to identify each $p \in \univsupp$.

To this end, consider the set of sampling directions: 
\begin{align*} %\label{eq:samp_direc_s1_fin}
\calVpp := \{\vecvpp_j \in \matR^{\dimn} : \vpp_{j,q} &= \pm 1/\sqrt{\numdirecpp} \ \text{w.p.} \ 1/2 \ \text{each};
 \nonumber \\  j&=1,\dots,\numdirecpp \ \text{and} \ q=1,\dots,{\dimn}\},  
\end{align*}
and let $\gradstepp > 0$ denote the step size. For each $\vecx \in \baseset_{\text{diag}}$, we will query $f$ at 
points: $(\vecx + \gradstepp \vecvpp_j)_{\calP}, (\vecx - \gradstepp \vecvpp_j)_{\calP}$; $\vecvpp_j \in \calVpp$, \emph{restricted} to $\calP$. 
Then, as described earlier, we can form a linear system consisting of $\numdirecpp$ equations, and solve it via $\ell_1$ minimization 
to obtain the gradient estimate.
The complete procedure for estimating $\univsupp,\bivsupp$, is described formally in Algorithm \ref{algo:gen_overlap}. 
%
%-----------------------------------------
% Algorithm for General overlap case
%-----------------------------------------
\begin{algorithm*}[!ht]
\caption{Algorithm for estimating $\univsupp,\bivsupp$} \label{algo:gen_overlap} 
\begin{algorithmic}[1] 
\State \textbf{Input:} $\numdirec,\numdirecp, \numcen, \numcenpair \in \matZ^{+}$; $\gradstep, \hessstep, \gradstepp > 0$; 
$\hesssamperr > 0, \derivsamperrpp > 0$.
\State \textbf{Initialization:} $\est{\univsupp}, \est{\bivsupp} = \emptyset$.
\State \textbf{Output:} Estimates $\est{\bivsupp}$, $ \est{\univsupp}$. \\
\hrulefill
\State Construct $(\dimn,2)$-hash family $\twohashfam$ and sets $\calV,\calV^{\prime}$. \label{algover:s2_step_1}
\For{$\hashfn \in \twohashfam$}
 	\State Construct the set $\baseset(\hashfn)$. 
	\For {$i = 1,\dots,(2\numcen+1)^2$ and $\vecx_i \in \baseset(\hashfn)$} \label{algover:s2_step_2}
		%\For {$j=1,\dots,\numdirec$}
			\State $(\vecy_i)_j = \frac{f(\vecx_i + \gradstep \vecv_j)-f(\vecx_i - \gradstep \vecv_j)}{2\gradstep}$; $j=1,\dots,\numdirec$; $\vecv_j \in \calV$. \label{algover:s2_query_1}
		%\EndFor
		\State $\est{\grad} f(\vecx_i) := \argmin{\vecy_i = \matV\vecz} \norm{\vecz}_1$. \label{algover:s2_grad_base}
		\For{$p = 1,\dots,\numdirecp$} 
			\State $(\vecy_{i,p})_j = \frac{f(\vecx_i + \hessstep\vecvp_p + \gradstep \vecv_j)-f(\vecx_i + \hessstep\vecvp_p - \gradstep \vecv_j)}{2\gradstep}$; \label{algover:s2_query_2}
			$j=1,\dots,\numdirec$; $\vecvp_p \in \calVp$.  \hfill \textsc{Estimation of } $\bivsupp$
			\State $\est{\grad} f(\vecx_i + \hessstep\vecvp_p) := \argmin{\vecy_{i,p} = \matV \vecz} \norm{\vecz}_1$.\label{algover:s2_grad_1}	
		\EndFor
		\For{$q = 1,\dots,\dimn$}
			\State $(\vecy_q)_j = \frac{(\est{\grad} f(\vecx_i + \hessstep\vecvp_j) - \est{\grad} f(\vecx_i))_q}{\hessstep}$; 
			$j=1,\dots,\numdirecp$. \label{algover:s2_grad_2}
			\State $\est{\grad} \partial_q f(\vecx_i) := \argmin{\vecy_q = \matV^{\prime} \vecz} \norm{\vecz}_1$.  \label{algover:s2_grad_hess_row}
			\State $\est{\bivsupp} = \est{\bivsupp} \cup \set{(q,q^{\prime}) : q^{\prime} \in \set{q+1,\dots,d} \ \& \ \abs{(\est{\grad} \partial_q f(\vecx_i))_{q^{\prime}}} > \hesssamperr}$.
		\EndFor
	\EndFor 
\EndFor \\
\hrulefill
\State Construct the sets $\baseset_{\text{diag}}, \calVpp$ and initialize $\calP := [\dimn] \setminus \est{\bivsuppvar}$.
\For {$i=1,\dots,(2\numcenpair+1)$ and $\vecx_i \in \baseset_{\text{diag}}$} \label{algover:s1_step} 
	%\For {$j=1,\dots,\numdirecpp$}
		\State $(\vecy_i)_j = \frac{f((\vecx_i + \gradstepp \vecvpp_j)_{\calP})-f((\vecx_i - \gradstepp \vecvpp_j)_{\calP})}{2\gradstepp}$; 
		$j=1,\dots,\numdirecpp$; $\vecv_j \in \calVpp$. \label{algover:s1_grad}
		\State $(\est{\grad} f((\vecx_i)_{\calP}))_{\calP} := \argmin{\vecy_i = (\matVpp)_{\calP}(\vecz)_{\calP}} \norm{(\vecz)_{\calP}}_1$. \hfill 
		\textsc{Estimation of } $\univsupp$
		\State $\est{\univsupp} = \est{\univsupp} \cup \set{q \in \calP : \abs{((\est{\grad} f((\vecx_i)_{\calP})_q} > \derivsamperrpp}.$
	%\EndFor
\EndFor
\end{algorithmic}
\end{algorithm*}

\section{Theoretical guarantees for noiseless case} \label{sec:noiseless_query_res}
Next, we provide sufficient conditions on our sampling parameters that guarantee exact recovery 
of $\univsupp, \bivsupp$, in the noiseless query setting. 
This is stated in the following Theorem. All proofs are deferred to the appendix. 
%
%------------------------------ 
%    Main theorem   
%------------------------------
\begin{theorem} \label{thm:gen_overlap}
$\exists$ positive constants $\{c_i^{\prime}\}_{i=1}^{3}, \{C_i\}_{i=1}^{3}$ so that if: $\numcen \geq \critintmeas_2^{-1},$
$\numdirec > c_1^{\prime} \totsparsity \log\left(\dimn/\totsparsity\right),$ and 
$\numdirecp > c_2^{\prime} \maxdegree \log(\dimn/\maxdegree),$ then the following holds. 
Denoting $a = \frac{(4\maxdegree+1)\smconst_3}{2\sqrt{\numdirecp}}$, 
$b = \frac{C_1\sqrt{\numdirecp}((4\maxdegree+1)\totsparsity)\smconst_3}{3\numdirec}$, 
$a^{\prime} = \frac{\idenconst_2}{4aC_2}$, let $\gradstep, \hessstep$ satisfy: 
%
%\begin{align*}
$\gradstep^2 < ({a^{\prime}}^2 a)/b$ and  $$\quad \hessstep \in (a^{\prime} - \sqrt{{a^{\prime}}^2 - (b\gradstep^2/a)}, 
a^{\prime} + \sqrt{{a^{\prime}}^2 - (b\gradstep^2/a)}).$$ 
%\end{align*}
%
We then have for  
$\hesssamperr = C_2 (a\hessstep + \frac{b\gradstep^2}{\hessstep})$, 
that $\est{\bivsupp} = \bivsupp$ w.h.p.
Provided $\est{\bivsupp} = \bivsupp$, if $\numcenpair \geq \critintmeas_1^{-1},$
$\numdirecpp > c_3^{\prime} (\totsparsity-\abs{\est{\bivsuppvar}}) \log(\frac{\abs{\calP}}{\totsparsity-\abs{\est{\bivsuppvar}}})$ and 
${\gradstepp}^2 < \frac{3\numdirecpp \idenconst_1}{C_3 (\totsparsity-\abs{\est{\bivsuppvar}}) \smconst_3},$ then  
$\derivsamperrpp = \frac{C_3 (\totsparsity-\abs{\est{\bivsuppvar}}) {\gradstepp}^2 \smconst_3}{6\numdirecpp}$, implies 
$\est{\univsupp} = \univsupp$ w.h.p.
\end{theorem}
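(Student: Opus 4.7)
The plan is to prove the theorem in two stages matching the two phases of Algorithm \ref{algo:gen_overlap}, and within each stage to propagate a worst-case perturbation bound through two nested compressive sensing recoveries into the thresholding step.

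\textbf{Stage 1 (recovering $\bivsupp$).} First I would fix any $\vecx \in \baseset$ and analyze the inner gradient recovery in \eqref{eq:cs_form}. Because $\grad f$ is $\totsparsity$-sparse and the entries of $\matV$ are scaled Bernoulli, the matrix $\matV$ satisfies the restricted isometry property of order $2\totsparsity$ with high probability as soon as $\numdirec \gtrsim \totsparsity \log(\dimn/\totsparsity)$, yielding the condition on $c_1'$. The Taylor noise $\taynoisvec$ obeys $\|\taynoisvec\|_\infty = O(\gradstep^2 \smconst_3)$ via \eqref{eq:taylor_exp_f}, which, combined with the standard $\ell_1$ recovery guarantee (e.g.\ Wojtaszczyk), gives an error bound on $\est{\grad} f$ of the form $\|\est{\grad} f - \grad f\|_2 \lesssim \sqrt{\numdirec}\, \gradstep^2 \smconst_3 / \numdirec$, which is exactly what enters $b$. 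Next I would run the same argument on the outer system \eqref{eq:hessrow_est_linfin}, where each row of $\hess f(\vecx)$ is $(\maxdegree+1)$-sparse; the RIP condition becomes $\numdirecp \gtrsim \maxdegree \log(\dimn/\maxdegree)$, giving $c_2'$. The row-recovery noise has two contributions: $\hessestnoisa$ with $\|\hessestnoisa\|_\infty = O(\hessstep \smconst_3)$ (contributing the $a\hessstep$ term) and $\hessestnoisb$ obtained by dividing gradient-estimation errors by $\hessstep$ (contributing the $b\gradstep^2/\hessstep$ term). Assembling these yields the per-entry bound
\begin{equation*}
\|\est{\grad}\partial_q f(\vecx) - \grad\partial_q f(\vecx)\|_\infty \;\le\; C_2\Bigl(a\hessstep + \frac{b\gradstep^2}{\hessstep}\Bigr) \;=\; \hesssamperr,
\end{equation*}
so the threshold $\hesssamperr$ is exactly the CS error budget.

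\textbf{From the error bound to exact recovery of $\bivsupp$.} With the threshold in place, I would establish two directions. For the \emph{no false positives} direction, if $(q,q') \notin \bivsupp$ then $\partial_q \partial_{q'} f \equiv 0$ by \eqref{eq:unique_mod_rep}, and the error bound keeps $|(\est{\grad}\partial_q f(\vecx))_{q'}| \le \hesssamperr$. For the \emph{no false negatives} direction, I need to show that for every $(l,l') \in \bivsupp$ some $\vecx \in \baseset$ drives $|\partial_l \partial_{l'} \phi_{\lpair}(x_l,x_{l'})|$ above $2\hesssamperr$. This is where I invoke the hash family and the grid: by definition of $\twohashfam$ there exists $\hashfn$ with $\hashfn(l) \neq \hashfn(l')$, so the points $\baseset(\hashfn)$ restricted to the coordinates $(l,l')$ form a $(2\numcen+1)\times(2\numcen+1)$ regular grid of $[-1,1]^2$; since $\numcen \geq \critintmeas_2^{-1}$, this grid must intersect the rectangle $\calI_l \times \calI_{l'}$ from Assumption \ref{assum:pair_iden}. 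On all other coordinates, the hashing forces the two active variables into disjoint groups, so the bivariate components not involving $(l,l')$ do not confound the $(l,l')$-entry. At such a hitting point, $|\partial_l \partial_{l'} \phi_{\lpair}| > \idenconst_2$; the requirement $\hesssamperr < \idenconst_2/2$ reduces, after substituting $\hesssamperr = C_2(a\hessstep + b\gradstep^2/\hessstep)$, to a quadratic inequality in $\hessstep$ whose discriminant is nonnegative iff $\gradstep^2 < {a'}^2 a/b$, and whose admissible range for $\hessstep$ is precisely the interval stated in the theorem. A union bound over $\baseset$ (of size $O(\log \dimn)$) and over the $\dimn$ rows preserves the high-probability statement.

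\textbf{Stage 2 (recovering $\univsupp$).} Conditioning on $\est{\bivsupp} = \bivsupp$, so $\calP = [\dimn]\setminus \bivsuppvar$, the restriction of $f$ to any diagonal point $\vecx \in \baseset_{\text{diag}}$ reduces the gradient $(\grad f(\vecx))_{\calP}$ to a $(\totsparsity - |\bivsuppvar|)$-sparse vector whose nonzeros are exactly $\partial_p \phi_p(x_p)$ for $p \in \univsupp$. I would repeat the Stage~1 inner analysis once more on the reduced system with $\matVpp$: RIP of order $2(\totsparsity - |\est{\bivsuppvar}|)$ gives the sample count $c_3'$, and the $\ell_1$ recovery error on the reduced gradient is bounded by a constant multiple of $(\totsparsity - |\est{\bivsuppvar}|){\gradstepp}^2 \smconst_3/\numdirecpp$, which matches $\derivsamperrpp$. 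No-false-positives follows immediately since $\partial_p \phi_p \equiv 0$ for $p \notin \univsupp$. No-false-negatives follows from Assumption \ref{assum:actvar_iden}: the diagonal grid with $\numcenpair \geq \critintmeas_1^{-1}$ must hit each $\calI_p$, and there $|\partial_p \phi_p| > \idenconst_1 > 2\derivsamperrpp$ under the stated upper bound on ${\gradstepp}^2$.

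\textbf{Main obstacle.} The delicate step is Stage~1: I have two nested CS recoveries whose error budgets \emph{fight each other} through the step sizes. Taking $\hessstep$ small kills the $a\hessstep$ Taylor term but amplifies the propagated gradient noise $b\gradstep^2/\hessstep$, while taking $\gradstep$ small to compensate requires estimating gradients near the floating-point floor. Balancing these is the source of the quadratic inequality characterizing the admissible $(\gradstep,\hessstep)$ window, and getting the constants right is what drives the entire theorem; additionally, I must be careful that the hashing argument still works when a variable in $\univsupp$ coincides with a variable in $\bivsuppvar$, which is why the ANOVA decomposition \eqref{eq:unique_mod_rep} (with $\univsupp \cap \bivsuppvar = \emptyset$) is required before Stage~1 is invoked.
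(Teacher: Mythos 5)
Your proposal is correct and follows essentially the same route as the paper's proof: bound the Taylor remainder and propagated gradient-estimation noise to get the two contributions $a\hessstep$ and $b\gradstep^2/\hessstep$ to the Hessian-row recovery error via the uniform $\ell_1$-recovery guarantee, set the threshold equal to that error budget, use the hash family plus the $\numcen \geq \critintmeas_2^{-1}$ grid to hit each critical rectangle from Assumption \ref{assum:pair_iden}, and reduce $\hesssamperr < \idenconst_2/2$ to the stated quadratic inequality in $\hessstep$ with discriminant condition on $\gradstep$; Stage 2 mirrors the paper's reduced-variable argument. The only cosmetic differences are that the paper derives the sharper constants $(4\maxdegree+1)\totsparsity$ and $(4\maxdegree+1)$ by a case analysis on the type of the variable $q$, and that the paper relies on the uniform (for-all-signals) recovery property of the fixed matrices $\matV,\matV'$ rather than a union bound over points and rows — both choices lead to the same conclusion.
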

\begin{remark} 
We note that the condition on $\gradstepp$ is less strict than in \cite{Tyagi14_nips} for identifying 
$\univsupp$. This is because in \cite{Tyagi14_nips}, the gradient is estimated via a forward difference procedure, 
while we perform a central difference procedure in \eqref{eq:taylor_exp_f}.
\end{remark}
\paragraph{Query complexity.} Estimating $\grad f(\vecx)$ at some fixed $\vecx$ requires 
$2\numdirec = O(\totsparsity\log \dimn)$ queries. Estimating $\hess f(\vecx)$ involves computing an 
additional $\numdirecp = O(\maxdegree \log \dimn)$ gradient vectors in a neighborhood of 
$\vecx$ -- implying $O(\numdirec\numdirecp) = O(\totsparsity\maxdegree(\log \dimn)^2)$ point queries. 
This consequently implies a total query complexity of 
$O(\totsparsity\maxdegree(\log \dimn)^2 \abs{\baseset}) = O(\critintmeas_2^{-2}\totsparsity\maxdegree(\log \dimn)^3)$, 
for estimating $\bivsupp$. 
We make an additional $O(\critintmeas_1^{-1} (\totsparsity-\abs{\est{\bivsuppvar}}) \log (\dimn - \abs{\est{\bivsuppvar}}))$ queries of $f$, 
in order to estimate $\univsupp$. Therefore, the overall query complexity for estimating 
$\univsupp,\bivsupp$ is $O(\critintmeas_2^{-2}\totsparsity\maxdegree(\log \dimn)^3)$.

$\twohashfam$ can be constructed in $\text{poly}(d)$ time. For each $\vecx \in \baseset$, we first solve $\numdirecp + 1$ 
linear programs (Steps \ref{algover:s2_grad_base}, \ref{algover:s2_grad_1}), each solvable in $\text{poly}(\numdirec, d)$ time. 
We then solve $d$ linear programs (Step \ref{algover:s2_grad_hess_row}), with each taking $\text{poly}(\numdirecp, d)$ time. 
This is done at $\abs{\baseset} = O(\critintmeas_2^{-2} \log d)$ points, hence the 
overall \emph{computation cost} for estimation of $\bivsupp$ (and later $\univsupp$) is polynomial in: the number of queries, and $d$. 
Lastly, we note that \cite{Bandeira12} also estimates sparse Hessians via CS, albeit for the function optimization problem. 
Their scheme entails a sample complexity\footnote{See \cite[Corollary $4.1$]{Bandeira12}} of 
$O(\totsparsity\maxdegree (\log(\totsparsity\maxdegree))^2 (\log d)^2)$ for estimating $\hess f(\vecx)$; 
this is worse by a $O((\log(\totsparsity\maxdegree))^2)$ term compared to our method.

\paragraph{Recovering the components of the model.} Having estimated $\univsupp, \bivsupp$, 
we can now estimate each underlying component in \eqref{eq:unique_mod_rep} by sampling $f$ along the 
\emph{subspace} corresponding to the component. Using these samples, one can then construct via standard techniques, 
a spline based quasi interpolant \cite{deBoor78} that \emph{uniformly} approximates 
the component. This is shown formally in the appendix.
%-----------------------------------------------
% Impact of noie on performance of algorithm
%
\section{Impact of noise} \label{sec:noise_impact}
We now consider the case where the point queries are corrupted with external noise. This means that at query $\vecx$, 
we observe $f(\vecx) + \exnoisep$, where $\exnoisep \in \matR$ denotes external noise. 

In order to estimate $\grad f(\vecx)$, 
we obtain the samples : $f(\vecx + \gradstep \vecv_j) + \exnoisep_{j,1}$ and $f(\vecx - \gradstep \vecv_j) + \exnoisep_{j,2}$; 
$j = 1,\dots,\numdirec$. 
This changes \eqref{eq:cs_form} to the linear system $\vecy = \matV\grad f(\vecx) + \taynoisvec + \exnoisevec$, where 
$\exnoise_{j} = (\exnoisep_{j,1} - \exnoisep_{j,2})/(2\gradstep)$. 
Hence, the step-size $\gradstep$ needs to be chosen carefully now
 -- a small value would blow up the external noise component, while a large value would increase perturbation
 due to the higher order Taylor's terms.
%---------------------------------
% Arbitrary bounded noise
%---------------------------------
\paragraph{Arbitrary bounded noise.} In this scenario, we assume the external noise to be arbitrary and bounded, meaning 
that $\abs{\exnoisep} < \exnoisemag$, for some finite $\exnoisemag \geq 0$. If $\exnoisemag$ is too large, 
then we would expect recovery 
of $\univsupp, \bivsupp$ to be impossible as the structure of $f$ would be destroyed. 

We show in Theorem \ref{thm:gen_overlap_arbnois} 
that if $\exnoisemag < \exnoisemag_1 = O\left(\nicefrac{\idenconst_2^{3}}{(\smconst_3^2 \maxdegree^{2} \sqrt{\totsparsity})}\right)$, 
then Algorithm \ref{algo:gen_overlap} recovers $\bivsupp$ with 
appropriate choice of sampling parameters. Furthermore, assuming $\bivsupp$ is recovered exactly, 
and provided $\exnoisemag$ additionally  
satisfies $\exnoisemag < \exnoisemag_2 = O\left(\nicefrac{\idenconst_1^{3/2}}{\sqrt{(\totsparsity-\abs{\est{\bivsuppvar}})\smconst_3}}\right)$, 
then the algorithm also recovers $\univsupp$ exactly. 
In contrast to Theorem \ref{thm:gen_overlap}, the step size $\gradstep$ cannot be chosen arbitrarily small now, 
due to external noise. 
%----------------------------------------------
% Theorem for abitrary bounded noise
%----------------------------------------------
\begin{theorem} \label{thm:gen_overlap_arbnois}
%Assuming notation in Theorem \ref{thm:gen_overlap}, 
Let $\numcen, \numcenpair, \numdirec, \numdirecp, \numdirecpp$ be as defined in 
Theorem \ref{thm:gen_overlap}. Say $\exnoisemag < \exnoisemag_1 = O\left(\frac{\idenconst_2^{3}}{\smconst_3^2 \maxdegree^{2} \sqrt{\totsparsity}}\right).$
Denoting $b^{\prime} = 2C_1\sqrt{\numdirec\numdirecp}$, $\exists 0 < A_1 < A_2$ and $0 < A_3 < A_4$ so that for $\gradstep \in (A_1,A_2)$, 
$\hessstep \in (A_3, A_4)$ and $\hesssamperr = C_2 (a\hessstep+ \frac{b \gradstep^2}{\hessstep} + \frac{b^{\prime}\exnoisemag}{\gradstep\hessstep}),$
we have $\est{\bivsupp} = \bivsupp$ w.h.p. 
Given $\est{\bivsupp} = \bivsupp$, 
denote $a_1 = \nicefrac{(\totsparsity-\abs{\est{\bivsuppvar}}) \smconst_3}{(6\numdirecpp)}$, $b_1 = \sqrt{\numdirecpp}$ 
and say $\exnoisemag < \exnoisemag_2 =  O\left(\frac{\idenconst_1^{3/2}}{\sqrt{(\totsparsity-\abs{\est{\bivsuppvar}})\smconst_3}}\right).$
$\exists 0<A_5<A_6$ so that $\gradstepp \in (A_5,A_6)$, $\derivsamperrpp = C_3(a_1 {\gradstepp}^2 + \frac{b_1\exnoisemag}{\gradstepp})$
implies $\est{\univsupp} = \univsupp$ w.h.p.
\end{theorem}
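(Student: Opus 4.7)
The plan is to mirror the proof architecture of Theorem~\ref{thm:gen_overlap}, but carefully track how the external noise $\exnoisep$ propagates through the two nested compressive sensing stages (gradient recovery, then Hessian-row recovery), and to determine the admissible intervals for $\gradstep, \hessstep, \gradstepp$ by balancing the three error contributions: the Taylor remainder, the propagated CS error, and the amplified external noise.

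First, I would redo the analysis for the gradient estimate. Each entry of the measurement vector now reads $(\vecy_i)_j = \tfrac{1}{2\gradstep}\bigl(f(\vecx+\gradstep\vecv_j)-f(\vecx-\gradstep\vecv_j)\bigr) + \tfrac{1}{2\gradstep}(\exnoisep_{j,1}-\exnoisep_{j,2})$, so the CS linear system becomes $\vecy = \matV \grad f(\vecx) + \taynoisvec + \exnoisevec$ with $\norm{\taynoisvec}_2 = O(\gradstep^2\smconst_3)$ and $\norm{\exnoisevec}_2 \leq \sqrt{\numdirec}\,\exnoisemag/\gradstep$. Since $\matV/\sqrt{\numdirec}$ satisfies RIP with high probability once $\numdirec \gtrsim \totsparsity\log(\dimn/\totsparsity)$, the standard $\ell_1$ stability bound gives $\norm{\est{\grad}f(\vecx) - \grad f(\vecx)}_2 \leq \derivsamperr$ with $\derivsamperr = O\!\bigl(\gradstep^2\totsparsity\smconst_3/\numdirec + \sqrt{\totsparsity}\,\exnoisemag/\gradstep\bigr)$, uniformly over all $\vecx$ used by the algorithm (union bound over the $O(\log\dimn)$ sampling points).

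Next, I would plug this into the Hessian-row recovery. In \eqref{eq:hessrow_est_linfin} the vector $\hessestnoisb$ now inherits the external noise via $\est{\grad} f$, yielding a linear system $\vecy_q = \matV^{\prime}\grad \partial_q f(\vecx) + \hessestnoisa + \hessestnoisb$ whose total noise norm is $O\!\bigl(\sqrt{\numdirecp}\,\hessstep\smconst_3 + \sqrt{\numdirecp}\,\derivsamperr/\hessstep\bigr)$. Substituting the bound on $\derivsamperr$ and again invoking RIP for $\matV^{\prime}$ (which holds once $\numdirecp \gtrsim \maxdegree\log(\dimn/\maxdegree)$, since $\grad \partial_q f(\vecx)$ is $(\maxdegree{+}1)$-sparse), the $\ell_\infty$ guarantee from $\ell_1$ minimization gives
\begin{equation*}
\norm{\est{\grad}\partial_q f(\vecx) - \grad \partial_q f(\vecx)}_{\infty} \leq C_2\!\left(a\hessstep + \frac{b\gradstep^2}{\hessstep} + \frac{b^{\prime}\exnoisemag}{\gradstep\hessstep}\right) =: \hesssamperr,
\end{equation*}
with $a, b, b^{\prime}$ as in the statement. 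For thresholding at level $\hesssamperr$ to exactly separate $\bivsupp$ from its complement, I need $\hesssamperr < \idenconst_2/2$ at some point $\vecx \in \baseset$ hitting the critical rectangle guaranteed by Assumption~\ref{assum:pair_iden}; the hash family $\twohashfam$ together with the grid refinement $\numcen \geq \critintmeas_2^{-1}$ ensures such a point exists.

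The heart of the argument — and the main obstacle — is then a two-variable optimization: find $\gradstep, \hessstep > 0$ so that $a\hessstep + b\gradstep^2/\hessstep + b^{\prime}\exnoisemag/(\gradstep\hessstep) < \idenconst_2/(2C_2)$. Treating $\hessstep$ first as a quadratic in the noiseless case (as in Theorem~\ref{thm:gen_overlap}) and then adding the external-noise term, I would first pick $\gradstep$ to minimize $b\gradstep^2 + b^{\prime}\exnoisemag/\gradstep$ (optimal at $\gradstep \sim (\exnoisemag)^{1/3}$), then solve the residual quadratic inequality in $\hessstep$. Feasibility of both steps yields non-empty intervals $(A_1,A_2)$ and $(A_3,A_4)$ precisely when $\exnoisemag < \exnoisemag_1 = O\!\bigl(\idenconst_2^{3}/(\smconst_3^2\maxdegree^2\sqrt{\totsparsity})\bigr)$, after plugging in $a, b, b^{\prime}$ in terms of $\maxdegree,\totsparsity,\smconst_3,\numdirec,\numdirecp$.

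Finally, assuming $\est{\bivsupp}=\bivsupp$, the $\univsupp$-recovery stage is an entirely analogous but simpler one-level analysis on the reduced variable set $\calP$. The noisy CS measurement on $(\vecx_i)_{\calP}$ yields $\norm{(\est{\grad}f((\vecx_i)_{\calP}))_{\calP} - (\grad f((\vecx_i)_{\calP}))_{\calP}}_{\infty} \leq C_3(a_1\gradstepp^2 + b_1\exnoisemag/\gradstepp)$, and the required threshold $\derivsamperrpp < \idenconst_1/2$ together with the diagonal sampling set $\baseset_{\text{diag}}$ and Assumption~\ref{assum:actvar_iden} yields exact recovery. Minimizing $a_1\gradstepp^2 + b_1\exnoisemag/\gradstepp$ over $\gradstepp$ (optimal at $\gradstepp \sim \exnoisemag^{1/3}$) produces the second noise tolerance $\exnoisemag < \exnoisemag_2 = O\!\bigl(\idenconst_1^{3/2}/\sqrt{(\totsparsity-\abs{\est{\bivsuppvar}})\smconst_3}\bigr)$, and the corresponding interval $(A_5,A_6)$ for $\gradstepp$. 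A union bound over all $O(\log\dimn)$ CS subproblems absorbs the "w.h.p." claim.
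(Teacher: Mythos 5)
Your proposal is correct and follows essentially the same route as the paper: propagate the external noise through the two CS stages to get $\hesssamperr = C_2(a\hessstep + b\gradstep^2/\hessstep + b'\exnoisemag/(\gradstep\hessstep))$, reduce $\hesssamperr < \idenconst_2/2$ to a quadratic in $\hessstep$ whose feasibility is a cubic condition in $\gradstep$, and observe that this cubic is satisfiable precisely when $\exnoisemag < \exnoisemag_1$ (and analogously for $\gradstepp$ and $\exnoisemag_2$). The only difference is cosmetic: the paper makes the intervals $(A_1,A_2)$, $(A_3,A_4)$, $(A_5,A_6)$ explicit via the negative-discriminant criterion and the trigonometric form of the cubic's roots, whereas you establish non-emptiness by minimizing over the step size — which is equivalent and suffices for the existence claim in the statement.
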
 
\paragraph{Stochastic noise.} We now assume the point queries to be corrupted with i.i.d. Gaussian noise, so that $\exnoisep \sim \calN(0,\sigma^2)$ 
with variance $\sigma^2$. We consider resampling each point query a sufficient number of times, and averaging the 
values. During the $\bivsupp$ estimation phase, 
we resample each query $N_1$ times so that $\exnoisep \sim \calN(0,\sigma^2/N_1)$. 
For any $0 < \exnoisemag < \exnoisemag_1$, if $N_1$ is suitably large, then we can uniformly bound $\abs{\exnoisep} < \exnoisemag$ -- via standard tail 
bounds for Gaussians -- over all noise samples, with high probability.
Consequently, we can use the result of Theorem \ref{thm:gen_overlap_arbnois} for estimating $\bivsupp$. 
The same reasoning applies to Step \ref{algover:s1_grad}, \textit{i.e.}, the $\univsupp$ estimation phase, 
where we resample each query $N_2$ times. 
%----------------------------------------------
% Theorem for Stochastic noise
%----------------------------------------------
\begin{theorem} \label{thm:gen_overlap_gaussnois}
%Assuming notation in Theorem \ref{thm:gen_overlap}, 
Let $\numcen, \numcenpair, \numdirec, \numdirecp, \numdirecpp$ be as defined in 
Theorem \ref{thm:gen_overlap}. For any $\exnoisemag < \exnoisemag_1$, $0 < p_1 < 1$, 
say we resample each query in Steps \ref{algover:s2_query_1}, \ref{algover:s2_query_2}, 
$N_1 > \frac{\sigma^2}{\exnoisemag^2} \log (\frac{\sqrt{2} \sigma}{\exnoisemag p_1}\numdirec(\numdirecp+1)(2\numcen+1)^2\abs{\twohashfam})$
times, and take the average. For $\gradstep, \hessstep, \hesssamperr$ as in Theorem \ref{thm:gen_overlap_arbnois}, 
we have $\est{\bivsupp} = \bivsupp$ with probability at least $1 -p_1 - o(1)$. 
Given $\est{\bivsupp} = \bivsupp$, with $\exnoisemagp < \exnoisemag_2$, $0 < p_2 < 1$, 
say we resample each query in Step \ref{algover:s1_grad},  
$N_2 > \frac{\sigma^2}{{\exnoisemagp}^2} \log(\frac{\sqrt{2} \sigma (2\numcenpair+1)\numdirecpp}{\exnoisemagp p_2})$ times, and take the average. 
Then for $\gradstepp, \derivsamperrpp$ as in Theorem \ref{thm:gen_overlap_arbnois} (with $\exnoisemag$ replaced by $\exnoisemagp$), 
we have $\est{\univsupp} = \univsupp$ with probability at least $1 - p_2 - o(1)$. 
\end{theorem}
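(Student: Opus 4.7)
The plan is to reduce Theorem \ref{thm:gen_overlap_gaussnois} to Theorem \ref{thm:gen_overlap_arbnois} by showing that, with sufficient resampling, the averaged external noise at each query lies uniformly below the thresholds $\exnoisemag_1$ and $\exnoisemag_2$, at which point we are exactly in the setting of the arbitrary--bounded--noise result. The key probabilistic input is that averaging $N$ independent $\calN(0,\sigma^2)$ samples yields a $\calN(0,\sigma^2/N)$ variable, for which the Mills--ratio tail bound gives, for any $\varepsilon > 0$,
\[
\prob(|Z| > \varepsilon) \leq \sqrt{\tfrac{2}{\pi}}\,\frac{\sigma/\sqrt{N}}{\varepsilon}\,\exp\!\left(-\frac{N\varepsilon^2}{2\sigma^2}\right).
\]
This is the per--query tail to which I apply a union bound.

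For the $\bivsupp$ phase I first count the total number of distinct (averaged) point queries. At each $\vecx_i \in \baseset$, Step \ref{algover:s2_query_1} uses $2\numdirec$ point evaluations and Step \ref{algover:s2_query_2} uses $2\numdirec\numdirecp$, for a total of $2\numdirec(\numdirecp+1)$; summing over $|\baseset| \leq (2\numcen+1)^2|\twohashfam|$ base points gives $M_1 = O(\numdirec(\numdirecp+1)(2\numcen+1)^2|\twohashfam|)$ averaged queries. Demanding that the union bound $M_1\cdot\sqrt{2/\pi}\,(\sigma/\varepsilon\sqrt{N_1})\exp(-N_1\varepsilon^2/(2\sigma^2)) \leq p_1$ hold and solving for $N_1$ yields precisely the hypothesis of the theorem, up to absolute constants absorbed into the $\sqrt{2}$ prefactor. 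On this "good event'', which occurs with probability at least $1-p_1$, every averaged query is corrupted by noise of magnitude at most $\varepsilon < \exnoisemag_1$, so the noisy linear systems \eqref{eq:cs_form} and \eqref{eq:hessrow_est_linfin} satisfy the hypotheses of Theorem \ref{thm:gen_overlap_arbnois} with the same $\gradstep, \hessstep, \hesssamperr$. That theorem in turn gives $\est{\bivsupp} = \bivsupp$ w.h.p.\ (probability $1 - o(1)$) from the randomness in $\matV, \matV^{\prime}$ and $\twohashfam$; a final union bound over the two layers of randomness yields $\prob(\est{\bivsupp}=\bivsupp) \geq 1-p_1 - o(1)$.

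The $\univsupp$ phase is handled identically. The relevant query count is $M_2 = O((2\numcenpair+1)\numdirecpp)$ from Step \ref{algover:s1_grad}, and the Gaussian tail plus union bound over $M_2$ averaged queries gives the threshold on $N_2$ stated in the theorem; conditional on $\est{\bivsupp}=\bivsupp$, on the good event the effective noise is uniformly bounded below $\exnoisemag_2$ and Theorem \ref{thm:gen_overlap_arbnois} supplies $\est{\univsupp}=\univsupp$ w.h.p. The only nontrivial bookkeeping is to ensure that the external--noise randomness is independent of the randomness used for $\calV, \calVp, \calVpp$ and $\twohashfam$ (so that the two union bounds combine cleanly into the $1-p_i-o(1)$ guarantees), and that $\varepsilon, \exnoisemagp$ are chosen strictly below $\exnoisemag_1, \exnoisemag_2$ so that the bounded--noise hypotheses of Theorem \ref{thm:gen_overlap_arbnois} hold on the good event. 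I expect no conceptual obstacle beyond this; the substance of the argument is entirely the tail--bound calibration and the reduction to the preceding theorem.
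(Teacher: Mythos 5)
Your proposal follows essentially the same route as the paper: a Gaussian tail bound on the averaged noise, a union bound over all $\numdirec(\numdirecp+1)(2\numcen+1)^2\abs{\twohashfam}$ (resp.\ $(2\numcenpair+1)\numdirecpp$) noise terms, and then a reduction to Theorem \ref{thm:gen_overlap_arbnois}; the argument is correct. One small quantitative point: the paper bounds the \emph{difference} terms $\exnoisep_{j,1}-\exnoisep_{j,2}\sim\calN(0,2\sigma^2/N_1)$ by $2\exnoisemag$ rather than each individual averaged sample by $\exnoisemag$, which yields the exponent $\exp(-\exnoisemag^2 N_1/\sigma^2)$ and hence exactly the stated constant $\sigma^2/\exnoisemag^2$ in $N_1$, whereas your per-query bound gives $\exp(-N_1\exnoisemag^2/(2\sigma^2))$ and so requires an $N_1$ larger by a factor of $2$ -- a discrepancy in the leading coefficient, not merely in the logarithmic prefactor, though it is immaterial to the asymptotic claim and easily repaired by switching to the difference terms.
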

\paragraph{Query complexity.} In the case of arbitrary, but bounded noise, the query complexity remains the same as for the noiseless case. 
In case of i.i.d. Gaussian noise, for estimating $\bivsupp$, we have $\exnoisemag = O(\maxdegree^{-2} \totsparsity^{-1/2})$. 
Choosing $p_1 = \dimn^{-\delta}$ 
for any constant $\delta > 0$ gives us $N_1 = O(\maxdegree^4 \totsparsity \log \dimn)$.
This means that with $O(N_1 \totsparsity\maxdegree(\log \dimn)^3 \abs{\baseset}) = O(\maxdegree^5 \totsparsity^2 (\log \dimn)^4 \critintmeas_2^{-2})$ 
queries, $\est{\bivsupp} = \bivsupp$ holds w.h.p.
Next, for estimating $\univsupp$, we have $\exnoisemagp = O((\totsparsity - \abs{\bivsuppvar})^{-1/2})$. 
Choosing $p_2 = ((\dimn - \abs{\bivsuppvar})^{-\delta})$ 
for any constant $\delta > 0$, we get $N_2 = O((\totsparsity - \abs{\bivsuppvar}) \log (\dimn - \abs{\bivsuppvar}))$. 
This means the query complexity for estimating $\univsupp$ is 
$O(N_2 \critintmeas_1^{-1} (\totsparsity-\abs{\est{\bivsuppvar}}) \log (\dimn - \abs{\est{\bivsuppvar}})) = 
O(\critintmeas_1^{-1} (\totsparsity-\abs{\est{\bivsuppvar}})^2 (\log (\dimn - \abs{\est{\bivsuppvar}}))^2)$.
Therefore, the overall query complexity of Algorithm \ref{algo:gen_overlap} 
for estimating $\univsupp, \bivsupp$ is $O(\maxdegree^5 \totsparsity^2 (\log \dimn)^4 \critintmeas_2^{-2})$. 
\begin{remark}
We saw above that $O(\totsparsity^2 (\log \dimn)^2)$ samples are sufficient for estimating $\univsupp$ 
in presence of i.i.d Gaussian noise. This improves the corresponding bound in \cite{Tyagi14_nips} 
by a $O(\totsparsity)$ factor, and is due to the less strict condition on $\gradstepp$.
\end{remark}
\paragraph{Recovering the components of the model.} 
Having identified $\univsupp, \bivsupp$, we can estimate the 
underlying components in \eqref{eq:unique_mod_rep}, via standard nonparametric regression for ANOVA type models 
\cite{stone94}. Alternately, for each component, we could also sample $f$ along the subspace 
corresponding to the component and then perform regression, to obtain its estimate with 
\emph{uniform} error bounds. This is shown formally in the appendix.

\section{Related work}

\paragraph{Learning SPAMs.} We begin with an overview of results for learning SPAMs, in the regression setting.
\cite{Lin2006} proposed the COSSO algorithm, that extends the Lasso to the reproducing kernel Hilbert space 
(RKHS) setting. \cite{Yuan07anova} generalizes the non negative garrote to the nonparametric setting. 
\cite{Koltch08, Ravi2009, Meier2009} consider least squares methods, regularized by sparsity inducing penalty terms, for learning such models.
\cite{Raskutti2012,Koltch2010} propose a convex program for estimating $f$ (in the RKHS setting) that achieves the minimax optimal 
error rates. \cite{Huang2010} proposes a method based on the adaptive group Lasso. 
These methods are designed for learning SPAMs and cannot handle models of the form \eqref{eq:intro_gspam_form}.

\paragraph{Learning generalized SPAMs.} There exist fewer results for generalized SPAMs of the form \eqref{eq:intro_gspam_form}, 
in the regression setting. The COSSO algorithm \cite{Lin2006} can handle \eqref{eq:intro_gspam_form}, 
however its convergence rates are shown only for the 
case of no interactions. \cite{Rad2010} proposes the VANISH 
algorithm -- a least squares method with sparsity constraints. 
It is shown to be sparsistent, \textit{i.e.}, it asymptotically recovers $\univsupp,\bivsupp$ for $n \rightarrow \infty$. 
They also show a consistency result for estimating $f$, similar to \cite{Ravi2009}. 
\cite{Storlie2011} proposes the ACOSSO method, an adaptive version of the COSSO algorithm, 
which can also handle \eqref{eq:intro_gspam_form}. 
They derive convergence rates and sparsistency results for their method, albeit for the case of no interactions.
\cite{Dala2014} studies a generalization of \eqref{eq:intro_gspam_form} that allows for the presence of 
a sparse number of $m$-wise interaction terms for some additional sparsity parameter $m$. 
While they derive\footnote{In the Gaussian white noise model, which is known to be asymptically equivalent to the regression model as $n \rightarrow \infty$.} 
non-asymptotic $L_2$ error rates for estimating $f$, they do not guarantee unique identification of 
the interaction terms for any value of $m$. 
A special case of \eqref{eq:intro_gspam_form} -- where $\phi_p$'s are linear and each $\phi_{\lpair}$ is of the form $x_l x_{\lp}$ --
has been studied considerably. Within this setting, there exist algorithms that recover $\univsupp,\bivsupp$, along with 
convergence rates for estimating $f$, but only in the limit of large $n$ \cite{Choi2010,Rad2010,Bien2013}. 
\cite{Nazer2010} generalized this to the setting of sparse multilinear systems -- albeit in the noiseless setting -- 
and derived non-asymptotic sampling bounds for identifying the interaction terms. 
However finite sample bounds for the non-linear model \eqref{eq:intro_gspam_form} are not known in general.

\paragraph{Learning generic low-dimensional function models.} There exists related work in approximation theory -- 
which is also the setting considered in this paper -- wherein one assumes 
freedom to query $f$ at any desired set of points within its domain. 
\cite{Devore2011} considers functions depending on an unknown subset $\totsupp$ ($\abs{\totsupp} = \totsparsity$) 
of the variables -- a more general model than \eqref{eq:intro_gspam_form}. 
They provide a choice of query points of size $O(c^k \totsparsity \log d)$ for some constant $c > 1$, 
and algorithms that recover $\totsupp$ w.h.p. \cite{karin2011} 
derives a simpler algorithm with sample complexity  
$O((C_1^{4}/\alpha^4) \totsparsity (\log \dimn)^2)$ for recovering $\totsupp$ w.h.p., 
where $C_1,\alpha$ depend\footnote{$C_1 = \max_{i \in \totsupp} \norm{\partial_i f}_{\infty}$ and 
$\alpha = \min_{i \in \totsupp} \norm{\partial_i f}_1$} on smoothness of $f$. For general $\totsparsity$-variate $f$: 
$\alpha = c^{-\totsparsity}$ for some constant $c > 1$, while for our model \eqref{eq:intro_gspam_form}: $C_1 = O(\maxdegree)$. 
This model was also studied in \cite{Comming2011,Comming2012} in the regression setting -- they proposed an estimator that 
recovers $\totsupp$ w.h.p, with sample complexity $O(c^{\totsparsity} \totsparsity \log d)$.
\cite{Fornasier2010,Tyagi2012_nips} generalize this model to functions $f$ of the form 
$f(\vecx) = g(\matA\vecx)$, for unknown $\matA \in \matR^{k \times d}$. They derive algorithms that approximately 
recover the row-span of $\matA$ w.h.p, with sample complexities typically polynomial in $\dimn$. 

While the above methods could possibly recover $\totsupp$, they are not designed for identifying \emph{interactions} among the variables. 
Specifically, their sample complexities exhibit a worse dependence on $\totsparsity, \maxdegree$ and/or $d$. 
\cite{Tyagi14_nips} provides a sampling scheme that specifically learns SPAMs, with sample complexities  
$O(\totsparsity \log d)$, $O(\totsparsity^3 (\log \dimn)^2)$, in the absence/presence of Gaussian noise, respectively.
%
%----- Simulation results --------
\section{Simulation results} \label{sec:sims}
\paragraph{Dependence on $\dimn$.} We first consider the following experimental setup: $\univsupp =
\left\{1, 2 \right\}$ and $\bivsupp = \{(3, 4), (4, 5)\}$,
which implies $\univsparsity = 2$, $\bivsparsity = 2$, $\maxdegree = 2$ and $\totsparsity = 5$. 
We consider two models:  \vspace{-4mm}
\begin{itemize}
\item [$(i)$] $f_1(\vecx) = 2x_1 - 3x_2^2 + 4x_3x_4 - 5x_4x_5$, \vspace{-2mm}
\item [$(ii)$] $f_2(\vecx) = 10 \sin(\pi \cdot x_1) + 5 e^{-2x_2} + 10\sin(\pi \cdot x_3 x_4) + 5 e^{-2x_4 x_5}$. 
\end{itemize} \vspace{-5mm}
\begin{figure}[!ht]
\begin{center}
   \includegraphics[width=0.52\columnwidth]{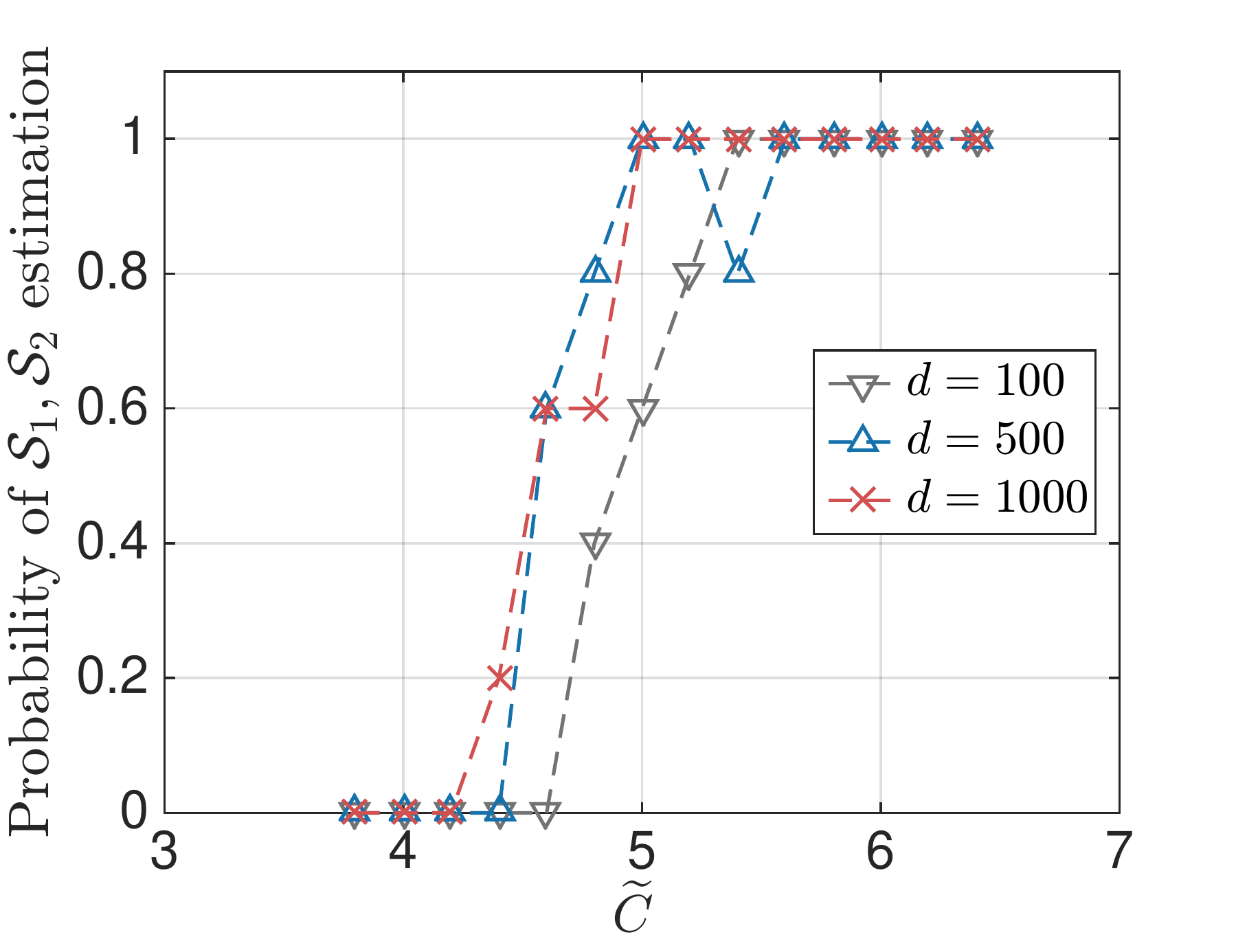} \hspace{-5mm}
   \includegraphics[width=0.52\columnwidth]{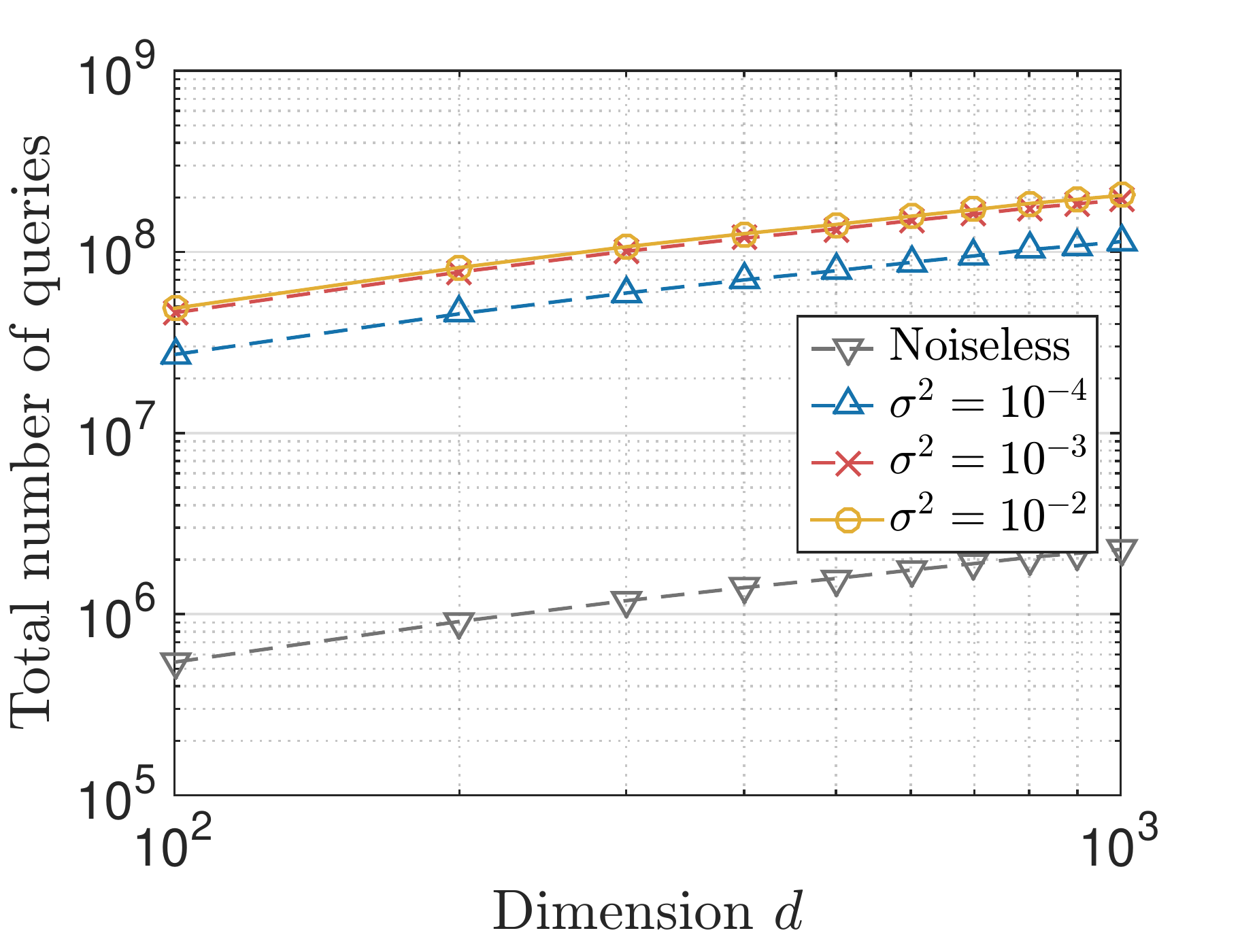} \\
	  \includegraphics[width=0.52\columnwidth]{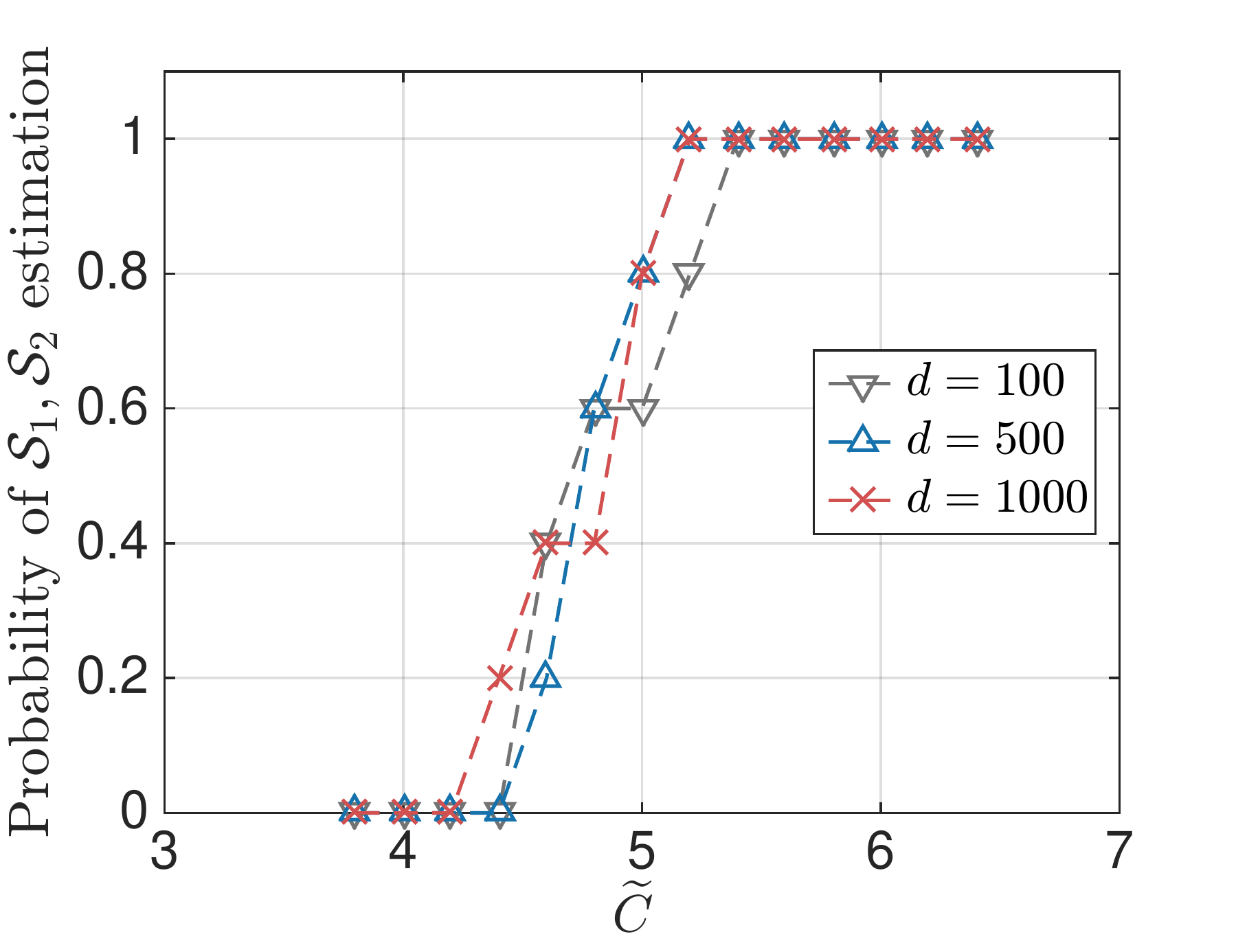} \hspace{-5mm}
  \includegraphics[width=0.52\columnwidth]{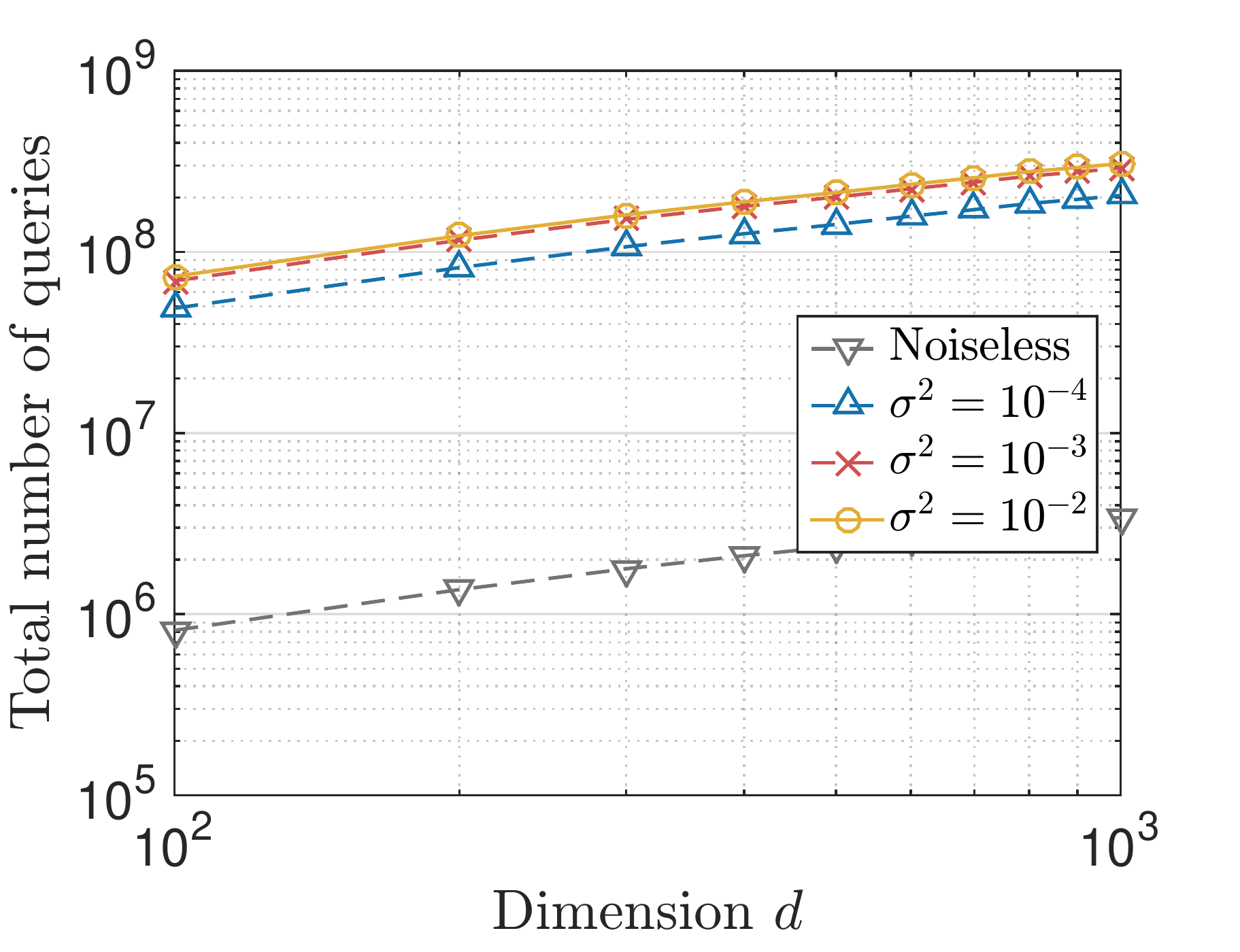} 
\end{center} \vspace{-3mm}
\caption{\small First (resp. second) row is for $f_1$ (resp. $f_2$). Left panel depicts
the success probability of identifying \empty{exactly} $\univsupp,\bivsupp$, in the noiseless case. 
$x$-axis represent the constant $\widetilde{C}$. The right panel depicts 
total queries vs. $\dimn$ for exact recovery, with $\widetilde{C} = 5.6$ and various noise settings.
All results are over $5$ independent Monte Carlo trials.} \label{exp:f1_f2_plots}
\end{figure}
We begin with the relatively simple model $f_1$, for which the problem parameters are set to: 
$\critintmeas_1 = 0.3$, $\critintmeas_2 = 1$, $\idenconst_1 = 2$, $\idenconst_2 = 3$, $\smconst_3 = 6$. 
We obtain $\numcen = 1$, $\numcenpair = 4$. We use the same constant $\widetilde{C}$ 
when we set $\numdirec := \widetilde{C} \totsparsity \log\left(\dimn/\totsparsity\right)$, 
$\numdirecp := \widetilde{C} \maxdegree \log(\dimn/\maxdegree),$ and 
$\numdirecpp := \widetilde{C}  (\totsparsity-\abs{\est{\bivsuppvar}}) \log(\frac{\abs{\calP}}{\totsparsity-\abs{\est{\bivsuppvar}}})$. 
For the construction of the hash functions, we set the size to $|\twohashfam| = C^{\prime} \log d$ with $C^{\prime} = 1.7$, 
leading to $|\mathcal{H}_2^d| \in [8, 12]$ for $10^2 \leq d \leq 10^3$. 
We choose step sizes: $\gradstep, \hessstep, \gradstepp$ and thresholds: $\hesssamperr, \derivsamperrpp$ 
as in Theorem \ref{thm:gen_overlap_arbnois}. As CS solver, we use the ALPS 
algorithm \cite{kyrillidis2011recipes}, an efficient first-order method.

For the noisy setting, we consider the function values to be corrupted with i.i.d. Gaussian noise. 
%We reduce the noise variance by repeating each query $N_1$ and $N_2$ times respectively, and averaging. 
The noise variance values considered are: $\sigma^2 \in \left\{10^{-4}, 10^{-3}, 10^{-2}\right\}$ for which we choose 
resampling factors: $(N_1,N_2) \in \set{(50,20), (85,36), (90,40)}$. 
We see in Fig. \ref{exp:f1_f2_plots}, that for $\widetilde{C} \approx 5.6$ the probability of successful identification (noiseless case)
undergoes a phase transition and becomes close to $1$, for different values of $\dimn$. This validates 
Theorem \ref{thm:gen_overlap}. Fixing $\widetilde{C} = 5.6$, we then see that with the total number of queries 
growing slowly with $\dimn$, we have successful identification. For the noisy case, the total number of queries is roughly 
$10^2$ times that in the noiseless setting, however the scaling with $\dimn$ is similar to the noiseless case. 

We next consider the relatively harder model: $f_2$, where the problem parameters are set to: 
$\critintmeas_1 = \critintmeas_2 = 0.3$, $\idenconst_1 = 8$, $\idenconst_2 = 4$, $\smconst_3 = 35$ and, 
$\numcen = \numcenpair = 4$. We see in Fig. \ref{exp:f1_f2_plots}, a phase transition (noiseless case) at 
$\widetilde{C} = 5.6$ thus validating Theorem \ref{thm:gen_overlap}. For noisy cases, we consider 
$\sigma^2$ as before, and $(N_1,N_2) \in \set{(60,30), (90,40), (95,43)}$. The number of queries is 
seen to be slightly larger than that for $f_1$.

\paragraph{Dependence on $\totsparsity$.} We now demonstrate the scaling of the total number of queries versus 
the sparsity $\totsparsity$ for identification of $\univsupp,\bivsupp$. Consider the model 
{%\small 
%$\begin{align}
$f_3(\vecx) = \sum_{i = 1}^T (\alpha_1 \vecx_{(i-1)5 + 1} - \alpha_2\vecx_{(i-1)5 + 2}  ^2  
+ \alpha_3 \vecx_{(i-1)5 + 3} \vecx_{(i-1)5 + 4} - \alpha_4 \vecx_{(i-1)5 + 4} \vecx_{(i-1)5 + 5})$ %\label{eq:f_diff_k_exp}
%\end{align} 
} 
where $\vecx \in \matR^{\dimn}$ for $d  = 500$. Here, $\alpha_i \in [2, 5], \forall i$; \textit{i.e.}, 
we randomly selected $\alpha_i$'s within range and kept the values fixed for all $5$ Monte Carlo iterations. 
Note that $\maxdegree = 2$ and the sparsity $\totsparsity = 5T$; we consider $T \in \left\{1, 2, \dots, 10\right\}$. 
We set $\critintmeas_1 = 0.3$, $\critintmeas_2 = 1$, $\idenconst_1 = 2$, $\idenconst_2 = 3$, $\smconst_3 = 6$ and $\widetilde{C} = 5.6$. 
For the noisy cases, we consider $\sigma^2$ as before, and choose the same values for $(N_1,N_2)$ as for $f_1$. 
In Figure \ref{fig:exp_k} we see that the number of queries scales as $\sim \totsparsity \log(\dimn/\totsparsity)$, and 
is roughly $10^2$ more in the noisy case as compared to the noiseless setting.
\begin{figure}[!ht]
	\begin{center}
		\includegraphics[width=0.37\textwidth]{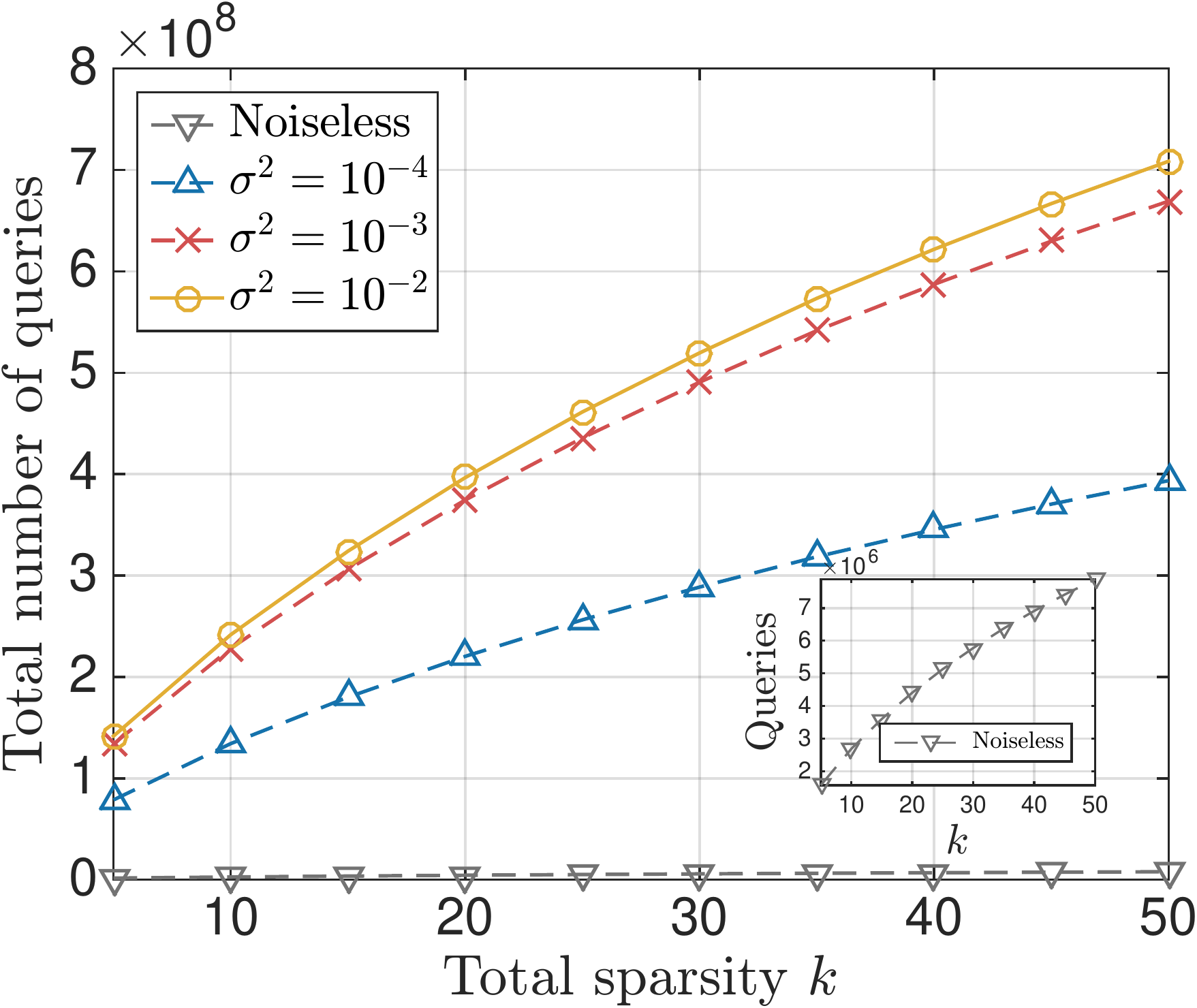} 
	\end{center} \vspace{-3mm}
	\caption{\small Total number of queries versus $\totsparsity$ for $f_3$. %\eqref{eq:f_diff_k_exp}. 
	This is shown for both noiseless and noisy cases (i.i.d Gaussian).} %with variances 
%	$\sigma^2 \in \left\{10^{-4}, 10^{-3}, 10^{-2}\right\}$.} 
\label{fig:exp_k}
\end{figure}
\paragraph{Dependence on $\maxdegree$.}
We now demonstrate the scaling of the total queries versus 
the maximum degree $\maxdegree$ for identification of $\univsupp,\bivsupp$. Consider the model 
{%\small
%\begin{align}
$f_4(\vecx) = \alpha_1 \vecx_{1} - \alpha_2\vecx_{2}^2 + \sum_{i = 1}^{T} (\alpha_{3,i} \vecx_{3} \vecx_{i+3}) 
+ \sum_{i=1}^{5} (\alpha_{4,i}\vecx_{2+2i}\vecx_{3+2i}).$ %\label{eq:f_diff_rho_exp}
%\end{align} 
}
We choose $d = 500$, $\widetilde{C} = 6$, $\alpha_i \in [2,\dots, 5], \forall i$ (as earlier) and set  
$\critintmeas_1 = 0.3$, $\critintmeas_2 = 1$, $\idenconst_1 = 2$, $\idenconst_2 = 3$, $\smconst_3 = 6$.
For $T \geq 2$, we have $\maxdegree = T$; we choose $T \in \set{2,3,\dots,10}$. 
Also note that $\totsparsity = 13$ throughout. For the noisy cases, we consider $\sigma^2$ as before, and choose 
$(N_1,N_2) \in \set{(70,40), (90,50), (100,70)}$. In Figure \ref{fig:exp_rho}, 
we see that the number of queries scales as $\sim \maxdegree \log(\dimn/\maxdegree)$, and 
is roughly $10^2$ more in the noisy case as compared to the noiseless setting.
\begin{figure}[!ht]
	\begin{center}
		\includegraphics[width=0.37\textwidth]{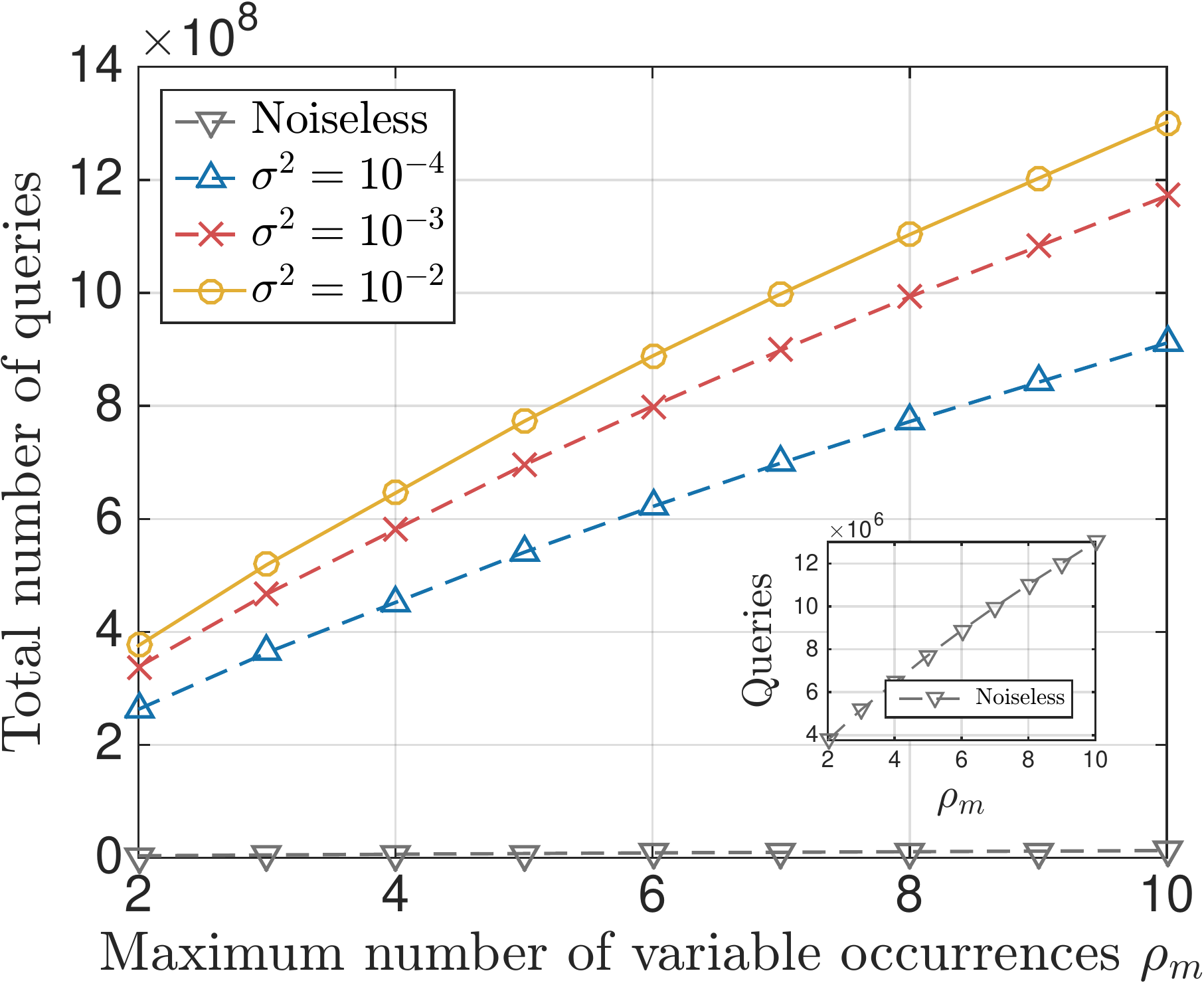} 
	\end{center} \vspace{-3mm}
	\caption{\small Total number of queries versus $\maxdegree$ 
	for $f_4$. %\eqref{eq:f_diff_rho_exp}. 
	This is shown for both noiseless and noisy cases (i.i.d Gaussian).} %with variances 
	%$\sigma^2 \in \left\{10^{-4}, 10^{-3}, 10^{-2}\right\}$.} 
	\label{fig:exp_rho}
\end{figure}
%----- Concluding remarks --------
\section{Concluding remarks} \label{sec:concl_rems}
We proposed a sampling scheme for learning a generalized SPAM %, which can include a small number of 
%second order interaction terms. In particular, we proposed a randomized algorithm 
and provided finite sample bounds 
for recovering the underlying structure of such models. We also considered the setting where the point queries are corrupted 
with noise and analyzed sampling conditions for the same.  It would be interesting to improve the sampling bounds that we obtained, 
and under similar assumptions. We leave this for future work.
\paragraph{Acknowledgements.} This research was supported in part by SNSF grant CRSII$2$\_$147633$. 

\clearpage
\bibliographystyle{unsrt}

\bibliography{SPAM}

\newpage
\onecolumn
\appendix %appendix
\setcounter{section}{0}
\setcounter{remark}{0}

\begin{centering}
\large \bf Supplementary Material : Learning Sparse Additive Models
with Interactions in High Dimensions.
\end{centering}

In this supplementary material, we prove the results stated in the paper. In Section \ref{sec:anova_uniq_rep}, 
we show that the model representation \eqref{eq:unique_mod_rep} is a unique representation for $f$ of the form \eqref{eq:gspam_form}.
In Section \ref{sec:proof}, we prove the main results of this paper namely: Theorem \ref{thm:gen_overlap}, 
Theorem \ref{thm:gen_overlap_arbnois} and Theorem \ref{thm:gen_overlap_gaussnois}. In Section \ref{sec:est_comp} 
we discuss how the individual components of the model \eqref{eq:unique_mod_rep} can be estimated once $\univsupp, \bivsupp$ are 
known. This is shown for both the noiseless as well as the noisy setting.

%
% Show the uniqueness of representation of model
%
\section{Model uniqueness} \label{sec:anova_uniq_rep}

We show here that the model representation \eqref{eq:unique_mod_rep} is a unique representation for $f$ of the form \eqref{eq:gspam_form}.
We first note that any measurable $f:\matR^{\dimn} \rightarrow \matR$, admits a unique ANOVA decomposition (cf., \cite{Gu02}) of the form:
\begin{equation}
f(x_1,\dots,x_{\dimn}) = c + \sum_{\alpha} f_{\alpha}(x_{\alpha}) + \sum_{\alpha < \beta} f_{\alpha\beta} + \sum_{\alpha < \beta < \gamma} f_{\alpha\beta\gamma} + \cdots 
\end{equation}
Indeed, for any probability measure $\mu_{\alpha}$ on $\matR$; $\alpha = 1,\dots,\dimn$, 
let $\calE_{\alpha}$ denote the averaging operator, defined as
\begin{equation}
\calE_{\alpha}(f)(\vecx) := \int_{\matR} f(x_1,\dots,x_{\dimn}) d\mu_{\alpha}. 
\end{equation}
Then the components of the model can be written as : $c = (\prod_{\alpha} \calE_{\alpha}) f$, 
$f_{\alpha} = ((I-\calE_{\alpha})\prod_{\beta \neq \alpha} \calE_{\beta}) f$, 
$f_{\alpha\beta} = ((I-\calE_{\alpha})(I-\calE_{\beta})\prod_{\gamma \neq \alpha,\beta}\calE_{\gamma}) f$, and so on.
For our purpose, $\mu_{\alpha}$ is considered to be the uniform probability measure on $[-1,1]$. 
This is because we are interested in estimating $f$ within $[-1,1]^{\dimn}$. Given this, we now find the ANOVA decomposition of 
$f$ defined in \eqref{eq:gspam_form}. 

As a sanity check, let us verify that $f_{\alpha\beta\gamma} \equiv 0$ for all $\alpha < \beta < \gamma$. 
Indeed if $p \in \univsupp$, then at least two of $\alpha < \beta < \gamma$ will not be equal to $p$. Similarly for any $\lpair \in \bivsupp$, 
at least one of $\alpha,\beta,\gamma$ will not be equal to $l$ and $\lp$. This implies $f_{\alpha\beta\gamma} \equiv 0$. 
The same reasoning easily applies for high order components of the ANOVA decomposition. 

That $c = \expec[f] = \sum_{p \in \univsupp} \expec_p[\phi_p] + \sum_{\lpair \in \bivsupp} \expec_{\lpair}[\phi_{\lpair}]$ is readily seen. 
Next, we have that 
\begin{equation} \label{eq:first_ord_univ}
(I-\calE_{\alpha})\prod_{\beta \neq \alpha} \calE_{\beta} \phi_p = \left\{
\begin{array}{rl}
0 \quad ; & \alpha \neq p, \\
\phi_p - \expec_p[\phi_p] \quad ; & \alpha = p
\end{array} \right\}; \quad p \in \univsupp.
\end{equation}
\begin{equation} \label{eq:first_ord_biv}
(I-\calE_{\alpha})\prod_{\beta \neq \alpha} \calE_{\beta} \phi_{\lpair} = \left\{
\begin{array}{rl}
\expec_{\lp}[\phi_{\lpair}] - \expec_{\lpair}[\phi_{\lpair}] \quad ; & \alpha = l, \\
\expec_{l}[\phi_{\lpair}] - \expec_{\lpair}[\phi_{\lpair}] \quad ; & \alpha = \lp, \\
0 \quad ; & \alpha \neq l,\lp,
\end{array} \right\}; \quad \lpair \in \bivsupp.
\end{equation}
\eqref{eq:first_ord_univ}, \eqref{eq:first_ord_biv} give us the first order components of $\phi_p, \phi_{\lpair}$ respectively.
One can next verify, using the same arguments as earlier, that for any $\alpha < \beta$: 
\begin{equation} \label{eq:sec_ord_univ}
(I-\calE_{\alpha})(I-\calE_{\beta})\prod_{\gamma \neq \alpha,\beta} \calE_{\gamma} \phi_p = 0; \quad \forall p \in \univsupp. 
\end{equation}
Lastly, we have for any $\alpha < \beta$ that the corresponding second order component of $\phi_{\lpair}$ is given by:
\begin{equation} \label{eq:sec_ord_biv}
(I-\calE_{\alpha})(I-\calE_{\beta})\prod_{\gamma \neq \alpha,\beta} \calE_{\gamma} \phi_{\lpair} = \left\{
\begin{array}{rl}
\phi_{\lpair} - \expec_l[\phi_{\lpair}] \\ - \expec_{\lp}[\phi_{\lpair}] + \expec_{\lpair}[\phi_{\lpair}] \quad ; & \alpha = l, \beta = \lp, \\
0 \quad ; & \text{otherwise} 
\end{array} \right\}; \quad \lpair \in \bivsupp.
\end{equation}
We now make the following observations regarding the variables in $\univsupp \cap \bivsuppvar$.
\begin{enumerate}
\item For each $l \in \univsupp \cap \bivsuppvar$ such that: $\degree(l) = 1$, and $\lpair \in \bivsupp$, we can simply merge $\phi_l$ 
with $\phi_{\lpair}$. Thus $l$ is no longer in $\univsupp$. 

\item For each $l \in \univsupp \cap \bivsuppvar$ such that: $\degree(l) > 1$, we can add the first order component for $\phi_l$ 
with the total first order component corresponding to all $\phi_{\lpair}$'s and $\phi_{\lpairi}$'s. 
Hence again, $l$ will no longer be in $\univsupp$. 
\end{enumerate}
Therefore all $q \in \univsupp \cap \bivsuppvar$ can essentially be merged with $\bivsupp$. Keeping this re-arrangement 
in mind, we can to begin with, assume in \eqref{eq:gspam_form} that $\univsupp \cap \bivsuppvar = \emptyset$.
Then with the help of \eqref{eq:first_ord_univ}, \eqref{eq:first_ord_biv}, \eqref{eq:sec_ord_univ}, \eqref{eq:sec_ord_biv}, 
we have that any $f$ of the form \eqref{eq:gspam_form} (with $\univsupp \cap \bivsuppvar = \emptyset$), can be uniquely written as:
\begin{equation}
f(x_1,\dots,x_d) = c + \sum_{p \in \univsupp}\phitil_{p} (x_p) + \sum_{\lpair \in \bivsupp} \phitil_{\lpair} \xlpair + 
\sum_{q \in \bivsuppvar: \degree(q) > 1} \phitil_{q} (x_q); \quad \univsupp \cap \bivsuppvar = \emptyset,
\end{equation}
where
\begin{align}
c &= \sum_{p \in \univsupp} \expec_p[\phi_p] + \sum_{\lpair \in \bivsupp} \expec_{\lpair}[\phi_{\lpair}], \label{eq:mod_mean} \\
\phitil_{p} &= \phi_p - \expec_p[\phi_p]; \quad \forall p \in \univsupp, \label{eq:mod_s1}
\end{align}
\begin{equation} \label{eq:mod_s2_biv}
\phitil_{\lpair} = \left\{
\begin{array}{rl}
\phi_{\lpair} - \expec_{\lpair}[\phi_{\lpair}] ; & \degree(l), \degree(l^{\prime}) = 1, \\
\phi_{\lpair} - \expec_{l}[\phi_{\lpair}] ; &  \degree(l) = 1, \degree(l^{\prime}) > 1, \\
\phi_{\lpair} - \expec_{l^{\prime}}[\phi_{\lpair}] ; & \degree(l) > 1, \degree(l^{\prime}) = 1, \\
\phi_{\lpair} - \expec_{l}[\phi_{\lpair}] - \expec_{l^{\prime}}[\phi_{\lpair}] + \expec_{\lpair}[\phi_{\lpair}] ; & \degree(l) > 1, \degree(l^{\prime}) > 1,
\end{array} \right. 
\end{equation}
\begin{align} 
\text{and} \quad \phitil_{q} &= \sum_{q^{\prime}: (q,q^{\prime}) \in \bivsupp} (\expec_{q^{\prime}}[\phi_{(q,q^{\prime})}] - 
\expec_{(q,q^{\prime})}[\phi_{(q,q^{\prime})}]) \nonumber \\
&+ \sum_{q^{\prime}: (q^{\prime},q) \in \bivsupp} (\expec_{q^{\prime}}[\phi_{(q^{\prime},q)}] - 
\expec_{(q^{\prime},q)}[\phi_{(q^{\prime},q)}]); \quad \forall q \in \bivsuppvar: \degree(q) > 1. \label{eq:mod_s2_uni}
\end{align}
%
 % Model uniqueness proof
\section{Proofs}{\label{sec:proof}}
%-----------------------------------------------------------------
% Proof of Theorem 1: Main Theorem for nrecovering S1 and S2 in 
% noiseless case
%-----------------------------------------------------------------
\subsection{Proof of Theorem \ref{thm:gen_overlap}} \label{subsec:proof_thm_genover}
The proof makes use of the following key theorem from \cite{Fornasier2010}, for stable approximation via $\ell_1$ minimization: 
$\triangle(\vecy) = \argmin{\vecy = \matV\vecz}\norm{\vecz}_1$. While the first part is standard (see for example
\cite{Baraniuk2008_simple}), 
the second result was stated in \cite{Fornasier2010} as a specialization of Theorem 1.2 
from \cite{Wojta2012} to the case of Bernoulli measurement matrices.
%---------------------------------------------
% Error bound for recovering sparse signals
% in presence of bounded noise.
%---------------------------------------------
\begin{theorem}[\cite{Wojta2012,Fornasier2010}] \label{thm:sparse_recon_bound}
Let $\matV$ be a $\numdirec \times \dimn$ random matrix with all entries being Bernoulli i.i.d random variables scaled with $1/\sqrt{\numdirec}$. 
Then the following results hold.
\begin{enumerate}
\item Let $0 < \ripconst < 1$. Then there are two positive constants $c_1,c_2 > 0$, such that the matrix $\matV$ has the Restricted Isometry Property
\begin{equation}
(1-\ripconst) \norm{\vecw}_2^2 \leq \norm{\matV\vecw}_2^2 \leq (1+\ripconst) \norm{\vecw}_2^2 \label{eq:RIP}
\end{equation}
for all $\vecw \in \matR^{\dimn}$ such that $\#$supp($\vecw$) $\leq c_2 \numdirec / \log(\dimn/\numdirec)$ with probability at least $1-e^{-c_1 \numdirec}$.

\item Let us suppose $\dimn > (\log 6)^2 \numdirec$. Then there are positive constants $C, c_1^{\prime}, c_2^{\prime} > 0$ such that 
with probability at least $1 - e^{-c_1^{\prime} \numdirec} - e^{-\sqrt{\numdirec\dimn}}$ the matrix $\matV$ has the following property.
For every $\vecw \in \matR^{\dimn}$, $\taynoisvec \in \matR^{\numdirec}$ and every natural number 
$\totsparsity \leq c_2^{\prime} \numdirec / \log(\dimn/\numdirec)$, we have
\begin{equation} 
\norm{\triangle(\matV\vecw + \taynoisvec) - \vecw}_2 \leq C \left(\totsparsity^{-1/2} \sigma_{\totsparsity}(\vecw)_{1} + 
\max\set{\norm{\taynoisvec}_2, \sqrt{\log \dimn}\norm{\taynoisvec}_{\infty}}\right), \label{eq:sparse_recon_err}
\end{equation}
where
\begin{equation*}
\sigma_{\totsparsity}(\vecw)_{1} := \inf\set{\norm{\vecw - \vecz}_1 : \#\text{supp}(\vecz) \leq \totsparsity}
\end{equation*}
is the best $\totsparsity$-term approximation of $\vecw$.
\end{enumerate}
\end{theorem}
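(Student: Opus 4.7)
The plan is to establish the two parts by combining classical $\varepsilon$-net concentration arguments for the RIP with Wojtaszczyk's $\ell_1$-quotient property for Bernoulli ensembles, the latter being the mechanism that upgrades the standard $\ell_2$ noise bound to one involving $\sqrt{\log \dimn}\norm{\taynoisvec}_{\infty}$. For part $(1)$, I fix a support $T \subset [\dimn]$ with $\abs{T} \leq \totsparsity$ and a unit vector $\vecw$ supported on $T$: then $\norm{\matV\vecw}_2^2$ is a normalized sum of $\numdirec$ i.i.d.\ copies of $(\sum_{i \in T}\epsilon_i w_i)^2$, where $\epsilon_i \in \set{\pm 1}$ are symmetric Bernoullis, and each summand is sub-exponential with mean $1$. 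A Bernstein-type inequality yields $\prob[\abs{\norm{\matV\vecw}_2^2 - 1} > \ripconst/2] \leq 2\exp(-c\ripconst^2 \numdirec)$ for an absolute constant $c > 0$. Taking a $(\ripconst/8)$-net of the unit sphere in $\matR^{T}$ of cardinality at most $(24/\ripconst)^{\totsparsity}$ and applying the standard ``lift from net to sphere'' step transfers the estimate to all unit $\vecw$ supported on $T$, losing only a constant factor. A further union bound over the $\binom{\dimn}{\totsparsity} \leq (e\dimn/\totsparsity)^{\totsparsity}$ supports succeeds provided $\totsparsity\log(\dimn/\totsparsity) \lesssim \ripconst^2 \numdirec$, which is ensured by the hypothesis $\totsparsity \leq c_2 \numdirec/\log(\dimn/\numdirec)$ and delivers the claimed failure probability $e^{-c_1\numdirec}$.

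For part $(2)$, I first apply the classical RIP-based stable-recovery analysis (dual certificate / tube-constraint argument of Cand\`es--Tao) to obtain $\norm{\triangle(\matV\vecw+\taynoisvec)-\vecw}_2 \leq C_1 \totsparsity^{-1/2}\sigma_{\totsparsity}(\vecw)_1 + C_2 \norm{\taynoisvec}_2$ whenever the RIP of order $2\totsparsity$ holds with small enough constant, which is valid under $\totsparsity \leq c_2^{\prime}\numdirec/\log(\dimn/\numdirec)$. This already covers the $\norm{\taynoisvec}_2$ branch of the maximum. To obtain the $\sqrt{\log \dimn}\,\norm{\taynoisvec}_{\infty}$ branch, I invoke the $\ell_1$-quotient property of the Bernoulli ensemble: under $\dimn > (\log 6)^2 \numdirec$, with probability at least $1 - e^{-\sqrt{\numdirec\dimn}}$, for every $\mathbf{e} \in \matR^{\numdirec}$ there exists $\vecu \in \matR^{\dimn}$ with $\matV\vecu = \mathbf{e}$ and $\norm{\vecu}_1 \lesssim \sqrt{\numdirec/\log(\dimn/\numdirec)}\,\norm{\mathbf{e}}_2$. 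Splitting the noise as $\taynoisvec = \matV\vecu + (\taynoisvec - \matV\vecu)$ with $\vecu$ supplied by the quotient bound applied to $\taynoisvec$, absorbing $\vecu$ into the sparse-approximation term, and using $\norm{\taynoisvec}_2 \leq \sqrt{\numdirec}\,\norm{\taynoisvec}_{\infty}$ in the quotient factor, I can trade $\norm{\taynoisvec}_2$ for $\sqrt{\log\dimn}\,\norm{\taynoisvec}_{\infty}$ whenever the latter is smaller, yielding the stated $\max$ noise term after routine bookkeeping.

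The main obstacle is the $\ell_1$-quotient property itself: it is the technical crux of Wojtaszczyk's refinement and does not follow from any direct concentration on quadratic forms. Its proof requires dualizing to a geometric estimate on $\matV^{T}$ restricted to Euclidean balls (via Gaussian widths or small-ball / empirical-process techniques tailored to sub-Gaussian coordinates), and the threshold $\dimn > (\log 6)^2 \numdirec$ appears precisely as the regime in which this dual bound attains the sharp $\sqrt{\numdirec/\log(\dimn/\numdirec)}$ scaling needed above. Granting this ingredient from \cite{Wojta2012}, the remainder of the argument assembles from the RIP and standard sparse-recovery machinery with no further surprises, and the overall failure probability is controlled by the sum of the RIP-failure event ($\leq e^{-c_1^{\prime}\numdirec}$) and the quotient-failure event ($\leq e^{-\sqrt{\numdirec\dimn}}$).
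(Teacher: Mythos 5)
First, a point of comparison: the paper does not prove this statement at all -- it is imported from the literature (part $(1)$ is the standard Bernoulli RIP, e.g.\ via Baraniuk et al.'s argument; part $(2)$ is Theorem 1.2 of \cite{Wojta2012}, specialized to Bernoulli matrices in \cite{Fornasier2010}). So your sketch is measured against a citation rather than an in-paper argument. Your part $(1)$ -- Bernstein concentration for $\norm{\matV\vecw}_2^2$ on a fixed support, a net on the sphere, and a union bound over supports -- is exactly the standard proof and is fine.

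Part $(2)$, however, has a genuine error at its crux. The ``plain'' $\ell_1$-quotient property you invoke -- for every $\canvec\in\matR^{\numdirec}$ there is $\vecu$ with $\matV\vecu=\canvec$ and $\norm{\vecu}_1\lesssim\sqrt{\numdirec/\log(\dimn/\numdirec)}\,\norm{\canvec}_2$ -- is \emph{false} for Bernoulli matrices, and not merely with small probability: every entry of $\matV$ has magnitude $1/\sqrt{\numdirec}$, so any solution of $\matV\vecu=\canvec_1$ satisfies $1=\dotprod{\vecv_1}{\vecu}\le\norm{\vecu}_1/\sqrt{\numdirec}$, i.e.\ $\norm{\vecu}_1\ge\sqrt{\numdirec}$, contradicting your bound as soon as $\dimn/\numdirec$ is large. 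The plain quotient property is a Gaussian phenomenon; for Bernoulli (sub-Gaussian) ensembles one only has the quotient property relative to the norm $\max\set{\norm{\cdot}_2,\sqrt{\log\dimn}\,\norm{\cdot}_\infty}$, and this is precisely the origin of the $\sqrt{\log\dimn}\,\norm{\taynoisvec}_\infty$ term in \eqref{eq:sparse_recon_err}: the max is the \emph{larger} of the two quantities, so the stated bound is weaker than a pure $\norm{\taynoisvec}_2$ bound -- it is not obtained by ``trading'' down to whichever term is smaller, and your bookkeeping runs in the wrong direction. Relatedly, your claim that the classical RIP-based stable-recovery analysis ``already covers the $\norm{\taynoisvec}_2$ branch'' does not apply here: the decoder is the equality-constrained, noise-blind program $\triangle(\vecy)=\argmin{\vecy=\matV\vecz}\norm{\vecz}_1$, whereas Cand\`es--Tao-type robustness is for the noise-aware program constrained by $\norm{\matV\vecz-\vecy}_2\le\epsilon$ with $\epsilon\ge\norm{\taynoisvec}_2$. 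For the noise-blind decoder even the $\ell_2$ branch needs the quotient mechanism (write $\taynoisvec=\matV\vecu$ with $\norm{\vecu}_1,\norm{\vecu}_2$ controlled by the max-norm of $\taynoisvec$, apply noiseless instance optimality to $\vecw+\vecu$, then the triangle inequality). With the correct max-norm quotient property from \cite{Wojta2012} substituted for the one you stated, the rest of your assembly does go through and reproduces the cited theorem.
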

%---------------------
% Remark for theorem
%---------------------
\begin{remark}
The proof of the second part of Theorem \ref{thm:sparse_recon_bound} requires \eqref{eq:RIP} to hold, which is the case in our setting with high probability. 
\end{remark}
\begin{remark} \label{rem:l1min_samp_bd}
Since $\numdirec \geq K$ is necessary, note that $K \leq c_2^{\prime} \numdirec / \log(d/\numdirec)$ is satisfied if 
$\numdirec > (1 / c_2^{\prime}) K \log(d/K)$. Also note that $K \log (d/K) > \log d$ in the 
regime\footnote{More precisely, if $d > K^{\frac{K}{K-1}}$.} $K \ll d$.
\end{remark}

We can now prove Theorem \ref{thm:gen_overlap}. 
The proof is divided into the following steps.
%-------------------
\paragraph{Bounding the $\hessestnoisb$ term.}
Since $\grad f(\vecx)$ is at most $\totsparsity$ sparse, therefore
for any $\vecx \in \matR^{\dimn}$ we immediately have from Theorem \ref{thm:sparse_recon_bound}, \eqref{eq:sparse_recon_err}, the following. 
$\exists C_1, c_4^{\prime}> 0, c_1^{\prime} \geq 1$ such that for 
$c_1^{\prime} \totsparsity \log(\frac{\dimn}{\totsparsity}) < \numdirec < \frac{\dimn}{(\log 6)^2}$ we have 
with probability at least $1 - e^{-c_4^{\prime}\numdirec} - e^{-\sqrt{\numdirec\dimn}}$ that 
\begin{equation} \label{eq:grad_est_bd_gen}
\norm{\est{\grad} f(\vecx) - \grad f(\vecx)}_2 \leq C_1 \max\set{\norm{\taynoisvec}_2, \sqrt{\log \dimn}\norm{\taynoisvec}_{\infty}}. 
\end{equation}
Recall that $\taynoisvec = [\taynoissca_1 \dots \taynoissca_{\numdirec}]$
where $\taynoissca_j = \frac{\thirdtayrem_3(\zeta_{j}) - \thirdtayrem_3(\zeta^{\prime}_{j})}{2\gradstep}$, 
for some $\zeta_j,\zeta^{\prime}_j \in \matR^{\dimn}$.
Here $\thirdtayrem_3(\zeta)$ denotes the third order Taylor remainder terms of $f$. By taking the structure of $f$ into account, 
we can uniformly bound 
$\abs{\thirdtayrem_3(\zeta_j)}$ as follows (so the same bound holds for $\abs{\thirdtayrem_3(\zeta^{\prime}_j)}$).
Let us define $\numdegree := \abs{\set{q \in \bivsuppvar: \degree(q) > 1}}$, 
to be the number of variables in $\bivsuppvar$, with degree greater than one. 
\begin{align}
\abs{\thirdtayrem_3(\zeta_j)} &= \frac{\gradstep^3}{6} |\sum_{p \in \univsupp} \partial_p^3 \phi_p(\zeta_{j,p}) v_p^3  + 
\sum_{\lpair \in \bivsupp} ( \partial_l^3 \phi_{\lpair}(\zeta_{j,l}, \zeta_{j,{l^{\prime}}}) v_l^3 + \partial_{l^{\prime}}^3 
\phi_{\lpair}(\zeta_{j,l}, \zeta_{j,{l^{\prime}}}) v_{l^{\prime}}^3) \nonumber \\
&+ \sum_{\lpair \in \bivsupp} (3\partial_l \partial_{l^{\prime}}^2 \phi_{\lpair}(\zeta_{j,l}, \zeta_{j,{l^{\prime}}}) v_l v_{l^{\prime}}^2 + 
3\partial_l^2 \partial_{l^{\prime}} \phi_{\lpair}(\zeta_{j,l}, \zeta_{j,{l^{\prime}}}) v_l^2 v_{l^{\prime}}) \nonumber\\ 
&+ \sum_{q \in \bivsuppvar: \degree(q)>1} \partial_q^3 \phi_q(\zeta_{j,q}) v_q^3| \\
&\leq \frac{\gradstep^3}{6} \left(\frac{\univsparsity \smconst_3}{\numdirec^{3/2}} + 
\frac{2\bivsparsity\smconst_3}{\numdirec^{3/2}} + \frac{\alpha\smconst_3}{\numdirec^{3/2}} + \frac{6\bivsparsity\smconst_3}{\numdirec^{3/2}} \right) \\
&= \frac{\gradstep^3}{6}\frac{(\univsparsity + \alpha + 8\bivsparsity)\smconst_3}{\numdirec^{3/2}}.  \label{eq:temp_bd_1}
%\leq \ \frac{\gradstep^3(4\totsparsity + \alpha)\smconst_3}{6\numdirec^{3/2}}.
\end{align}
Using the fact $2\bivsparsity = \sum_{l \in \bivsuppvar: \degree(l) > 1} \degree(l) + (\abs{\bivsuppvar} - \numdegree)$, we can observe that 
$2\bivsparsity \leq \maxdegree\numdegree + (\abs{\bivsuppvar} - \numdegree) = \abs{\bivsuppvar} + (\maxdegree-1)\numdegree$. Plugging this in 
\eqref{eq:temp_bd_1}, and using the fact $\alpha \leq \totsparsity$ (since we do not assume $\alpha$ to be known), we obtain
\begin{align}
\abs{\thirdtayrem_3(\zeta_j)} &\leq \frac{\gradstep^3}{6}\frac{(\univsparsity + \numdegree + 4\abs{\bivsuppvar} + 4(\maxdegree-1)\numdegree)\smconst_3}{\numdirec^{3/2}} \\ 
&\leq \frac{\gradstep^3(4\totsparsity + (4\maxdegree-3)\numdegree)\smconst_3}{6\numdirec^{3/2}} 
\leq \frac{\gradstep^3((4\maxdegree+1)\totsparsity)\smconst_3}{6\numdirec^{3/2}}. 
\end{align}
This in turn implies that $\norm{\taynoisvec}_{\infty} \leq \frac{\gradstep^2((4\maxdegree+1)\totsparsity)\smconst_3}{6\numdirec^{3/2}}$. 
Using the fact $\norm{\taynoisvec}_{2} \leq \sqrt{\numdirec}\norm{\taynoisvec}_{\infty}$, we thus obtain for the stated choice of $\numdirec$ (cf. Remark \ref{rem:l1min_samp_bd}) that 
\begin{equation} \label{eq:grad_est_over_bd}
\norm{\est{\grad} f(\vecx) - \grad f(\vecx)}_2 \leq \frac{C_1\gradstep^2((4\maxdegree+1)\totsparsity)\smconst_3}{6\numdirec}, 
\quad \forall \vecx \in [-(1+r),1+r]^{\dimn}.
\end{equation}
Recall that $[-(1+r),1+r]^{\dimn}, r > 0$, denotes the enlargement around $[-1,1]^{\dimn}$, in which the smoothness 
properties of $\phi_p, \phi_{\lpair}$ are defined in Section \ref{sec:problem}. 
Since $\est{\grad} f(\vecx) = \grad f(\vecx) + \vecw(\vecx)$, therefore $\norm{\vecw(\vecx)}_{\infty} \leq \norm{\est{\grad} f(\vecx) - \grad f(\vecx)}_2$. 
Using the definition of $\hessestnoisb \in \matR^{\numdirecp}$ from \eqref{eq:hessrow_est_linfin}, we then have that 
$\norm{\hessestnoisb}_{\infty} \leq \frac{C_1\gradstep^2((4\maxdegree+1)\totsparsity)\smconst_3}{3\numdirec\hessstep}$.
%--------------------
\paragraph{Bounding the $\hessestnoisa$ term.}
We will bound $\norm{\hessestnoisa}_{\infty}$. To this end, we see from \eqref{eq:hessrow_est_linfin} that it 
suffices to uniformly bound $\abs{{\vecvp}^{T} \hess \partial_q f(\zeta) \vecvp}$, 
over all: $q \in \univsupp \cup \bivsuppvar$, $\vecvp \in \calVp$, $\zeta \in [-(1+r),(1+r)]^{\dimn}$. Note that
\begin{equation}
{\vecvp}^T \hess \partial_q f(\zeta) \vecvp = \sum_{l=1}^{\dimn} {v_l^{\prime}}^2 (\hess \partial_q f (\zeta))_{l,l} + 
\sum_{i \neq j = 1}^{\dimn} v_i^{\prime}v_j^{\prime} (\hess \partial_q f (\zeta))_{i,j}.
\end{equation}
We have the following three cases, depending on the type of $q$.
\begin{enumerate}
\item $\mathbf{q \in \univsupp.}$
\begin{equation}
{\vecvp}^T \hess \partial_q f(\zeta) \vecvp = {v^{\prime}_{q}}^2 \partial_q^3 \phi_q(\zeta_q) \Rightarrow 
\abs{{\vecvp}^T \hess \partial_q f(\zeta) \vecvp} \leq \frac{\smconst_3}{\numdirecp}.
\end{equation}

\item $\mathbf{\qpair \in \bivsupp}$, $\mathbf{\degree(q) = 1.}$
\begin{align}
{\vecvp}^T \hess \partial_q f(\zeta) \vecvp &= {v^{\prime}_{q}}^2 \partial_q^3 \phi_{\qpair}(\zeta_q,\zeta_{q^{\prime}}) + 
{v^{\prime}_{q^{\prime}}}^2 \partial_{q^{\prime}}^2 \partial_q \phi_{\qpair}(\zeta_q,\zeta_{q^{\prime}}) \\ 
&+ 
2 v^{\prime}_{q} v^{\prime}_{q^{\prime}} \partial_{q^{\prime}} \partial_q^2 \phi_{\qpair}(\zeta_q,\zeta_{q^{\prime}}), \\
\Rightarrow \abs{{\vecvp}^T \hess \partial_q f(\zeta) \vecvp} &\leq \frac{4\smconst_3}{\numdirecp}.
\end{align}

\item $\mathbf{q \in \bivsuppvar}$, $\mathbf{\degree(q) > 1.}$
\begin{align}
{\vecvp}^T \hess \partial_q f(\zeta) \vecvp &= {v^{\prime}_{q}}^2(\partial_q^3 \phi_q(\zeta_q) + 
\sum_{\qpair \in \bivsupp} \partial_q^3 \phi_{\qpair}(\zeta_{q},\zeta_{q^{\prime}}) \nonumber \\ &+ 
\sum_{\qpairi \in \bivsupp} \partial_q^3 \phi_{\qpairi}(\zeta_{q^{\prime}},\zeta_{q})) 
+ \sum_{\qpair \in \bivsupp} {v^{\prime}_{q^{\prime}}}^2 \partial_{q^{\prime}}^2 \partial_q \phi_{\qpair}(\zeta_{q},\zeta_{q^{\prime}}) \nonumber \\ &+ 
\sum_{\qpairi \in \bivsupp} {v^{\prime}_{q^{\prime}}}^2 \partial_{q^{\prime}}^2 \partial_q \phi_{\qpairi}(\zeta_{q^{\prime}},\zeta_{q}) 
+ 2\sum_{\qpair \in \bivsupp} v^{\prime}_{q} v^{\prime}_{q^{\prime}} \partial_{q^{\prime}} \partial_q^2 \phi_{\qpair}(\zeta_{q},\zeta_{q^{\prime}}) \nonumber \\ &+ 
2\sum_{\qpairi \in \bivsupp} v^{\prime}_{q} v^{\prime}_{q^{\prime}} \partial_{q^{\prime}} \partial_q^2 \phi_{\qpairi}(\zeta_{q^{\prime}},\zeta_{q}), \\
\Rightarrow \abs{{\vecvp}^T \hess \partial_q f(\zeta) \vecvp} &\leq \frac{1}{\numdirecp}((\maxdegree+1)\smconst_3 + \maxdegree\smconst_3 + 2\maxdegree\smconst_3)
= \frac{(4\maxdegree+1)\smconst_3}{\numdirecp}.
\end{align}
\end{enumerate}
We can now uniformly bound $\norm{\hessestnoisa}_{\infty}$ as follows.
\begin{equation}
\norm{\hessestnoisa}_{\infty} := \max_{j=1,\dots,\numdirecp} \frac{\hessstep}{2} \abs{{{\vecvp_j}^T \hess \partial_q f(\zeta_j) \vecvp_j}} 
\leq \frac{\hessstep(4\maxdegree+1)\smconst_3}{2\numdirecp}. 
\end{equation}
%---------------------
% Estimating S_2
%---------------------
\paragraph{Estimating $\bivsupp$.}
We now proceed towards estimating $\bivsupp$. To this end, we estimate $\grad \partial_q f(\vecx)$ for each $q=1,\dots,\dimn$ and $\vecx \in \baseset$.
Since $\grad \partial_q f(\vecx)$ is at most $(\maxdegree+1)$-sparse, therefore Theorem \ref{thm:sparse_recon_bound}, \eqref{eq:sparse_recon_err}, 
immediately yield the following.
$\exists C_2, c_5^{\prime}> 0, c_2^{\prime} \geq 1$ such that for 
$c_2^{\prime} \maxdegree \log(\frac{\dimn}{\maxdegree}) < \numdirecp < \frac{\dimn}{(\log 6)^2}$ we have 
with probability at least $1 - e^{-c_5^{\prime}\numdirecp} - e^{-\sqrt{\numdirecp\dimn}}$ that 
\begin{equation} \label{eq:hessrow_est_bd_gen}
\norm{\est{\grad} \partial_q f(\vecx) - \grad \partial_q f(\vecx)}_2 \leq C_2 \max\set{\norm{\hessestnoisa+\hessestnoisb}_2, \sqrt{\log \dimn}\norm{\hessestnoisa+\hessestnoisb}_{\infty}}. 
\end{equation} 
Since $\norm{\hessestnoisa+\hessestnoisb}_{\infty} \leq \norm{\hessestnoisa}_{\infty} + \norm{\hessestnoisb}_{\infty}$, 
therefore using the bounds on $\norm{\hessestnoisa}_{\infty},\norm{\hessestnoisb}_{\infty}$ and noting that 
$\norm{\hessestnoisa+\hessestnoisb}_2 \leq \sqrt{\numdirecp}\norm{\hessestnoisa+\hessestnoisb}_{\infty}$, we obtain for the stated choice of $\numdirecp$ (cf. Remark \ref{rem:l1min_samp_bd}) that
\begin{equation} \label{eq:hessrow_est_bd_gen_1}
\norm{\est{\grad} \partial_q f(\vecx) - \grad \partial_q f(\vecx)}_2 \leq \underbrace{C_2\left(\frac{\hessstep(4\maxdegree+1)\smconst_3}{2\sqrt{\numdirecp}} 
+ \frac{C_1\sqrt{\numdirecp}\gradstep^2((4\maxdegree+1)\totsparsity)\smconst_3}{3\numdirec\hessstep} \right)}_{\hesssamperr}.
\end{equation}
for $q=1,\dots,\dimn$, and $\forall \vecx \in [-1,1]^{\dimn}$. We next note that \eqref{eq:hessrow_est_bd_gen_1} trivially leads to the bound
\begin{equation}
\est{\partial_q\partial_{q^{\prime}}} f(\vecx) \in [\partial_q\partial_{q^{\prime}} f(\vecx) -\hesssamperr, \partial_q\partial_{q^{\prime}} f(\vecx) + \hesssamperr]; \quad
q,q^{\prime} = 1,\dots,\dimn. 
\end{equation}
Now if $q \notin \bivsuppvar$ then clearly $\est{\partial_q\partial_{q^{\prime}}} f(\vecx) \in [-\hesssamperr, \hesssamperr]$; $\forall \vecx \in [-1,1]^{\dimn}, q \neq q^{\prime}$.
On the other hand, if $\qpair \in \bivsupp$ then 
\begin{equation}
\est{\partial_q\partial_{q^{\prime}}} f(\vecx) \in [\partial_q\partial_{q^{\prime}} \phi_{\qpair}(x_q,x_{q^{\prime}}) -\hesssamperr, \partial_q\partial_{q^{\prime}} \phi_{\qpair}(x_q,x_{q^{\prime}}) + \hesssamperr]. 
\end{equation}
If furthermore $\numcen \geq \critintmeas_2^{-1}$, then due to the construction of $\baseset$, $\exists \vecx \in \baseset$ so that 
$\abs{\est{\partial_q\partial_{q^{\prime}}} f(\vecx)} \geq \idenconst_2 - \hesssamperr$. Hence if $\hesssamperr < \idenconst_2/2$ holds, 
the we would have $\abs{\est{\partial_q\partial_{q^{\prime}}} f(\vecx)} > \idenconst_2/2$, leading to the identification of $\qpair$.
Since this is true for each $\qpair \in \bivsupp$, hence it follows that $\est{\bivsupp} = \bivsupp$. Now, $\hesssamperr < \idenconst_2/2$ is equivalent to 
\begin{align}
\underbrace{\frac{(4\maxdegree+1)\smconst_3}{2\sqrt{\numdirecp}}}_{a} \hessstep
+ \underbrace{\left(\frac{C_1\sqrt{\numdirecp}((4\maxdegree+1)\totsparsity)\smconst_3}{3\numdirec}\right)}_{b}\frac{\gradstep^2}{\hessstep} &< \frac{\idenconst_2}{2C_2} \\
\Leftrightarrow a \hessstep^2 - \frac{\idenconst_2}{2C_2}\hessstep + b\gradstep^2 &< 0 \\
\Leftrightarrow \hessstep \in ((\idenconst_2/(4aC_2)) - \sqrt{(\idenconst_2/(4aC_2))^2 - (b\gradstep^2/a)} , (\idenconst_2/(4aC_2)) &+ \sqrt{(\idenconst_2/(4aC_2))^2 - (b\gradstep^2/a)} ). 
\label{eq:hessstep_bd_gen_overlap}
\end{align}
Lastly, we see that the bounds in \eqref{eq:hessstep_bd_gen_overlap} are valid if:
\begin{equation} \label{eq:gradstep_bd_gen_over}
\gradstep^2 < \frac{\idenconst_2^2}{16 a b C_2^2} = \frac{3 \idenconst_2^2 \numdirec}{8C_1 C_2^2 \smconst_3^2(4\maxdegree+1)((4\maxdegree+1)\totsparsity)}.
\end{equation}
%

%------------------
\paragraph{Estimating $\univsupp$.} 
With $\calP := [\dimn] \setminus \est{\bivsuppvar}$, we have via Taylor's expansion of $f$ at $j=1,\dots,\numdirecpp$:
\begin{equation} \label{eq:taylor_exp_f_2}
\frac{f((\vecx + \gradstepp\vecvpp_j)_{\calP}) - f((\vecx - \gradstepp\vecvpp_j)_{\calP})}{2\gradstepp} = \dotprod{(\vecvpp_j)_{\calP}}{(\grad f((\vecx)_{\calP}))_{\calP}} 
+ \underbrace{\frac{\thirdtayrem_3((\zeta_j)_{\calP}) - \thirdtayrem_3((\zeta_j^{\prime})_{\calP})}{2\gradstepp}}_{\taynoissca_j}.
\end{equation}
\eqref{eq:taylor_exp_f_2} corresponds to linear measurements of the $(\totsparsity-\abs{\est{\bivsuppvar}})$ sparse vector: $(\grad f(\vecx_{\calP}))_{\calP}$.
Note that we effectively perform $\ell_1$ minimization over $\matR^{\abs{\calP}}$. 
Therefore for any $\vecx \in \matR^{\dimn}$ we immediately have from Theorem \ref{thm:sparse_recon_bound}, \eqref{eq:sparse_recon_err}, the following. 
$\exists C_3, c_6^{\prime}> 0, c_3^{\prime} \geq 1$ such that for 
$c_3^{\prime} (\totsparsity-\abs{\est{\bivsuppvar}}) \log(\frac{\abs{\calP}}{\totsparsity-\abs{\est{\bivsuppvar}}}) < \numdirecpp < \frac{\abs{\calP}}{(\log 6)^2}$, we have 
with probability at least $1 - e^{-c_6^{\prime}\numdirecpp} - e^{-\sqrt{\numdirecpp\abs{\calP}}}$ that 
\begin{equation} \label{eq:grad_est_bd_gen_1}
\norm{(\est{\grad} f((\vecx)_{\calP}))_{\calP} - (\grad f((\vecx)_{\calP}))_{\calP}}_2 \leq C_3 \max\set{\norm{\taynoisvec}_2, \sqrt{\log \abs{\calP}} \norm{\taynoisvec}_{\infty}},
\end{equation}
where $\vecn = [n_1 \cdots n_{\numdirecpp}]$. We now uniformly bound $\thirdtayrem_3((\zeta_j)_{\calP})$ for all $j=1,\dots,\numdirecpp$ and $\zeta_j \in [-(1+r),1+r]^{\dimn}$ as follows.
\begin{align}
\thirdtayrem_3((\zeta_j)_{\calP}) = \frac{{\gradstepp}^3}{6}\sum_{p \in \univsupp \cap \calP} \partial_p^3 \phi_p(\zeta_{j,p}){\vpp_{j,p}}^3 \quad 
\Rightarrow \abs{\thirdtayrem_3((\zeta_j)_{\calP})} \leq \frac{(\totsparsity-\abs{\est{\bivsuppvar}}) {\gradstepp}^3\smconst_3}{6 \numdirecpp^{3/2}}.
\end{align}
This in turn implies that $\norm{\taynoisvec}_{\infty} \leq \frac{(\totsparsity-\abs{\est{\bivsuppvar}}) {\gradstepp}^2\smconst_3}{6 \numdirecpp^{3/2}}$ and 
$\norm{\taynoisvec}_2 \leq \sqrt{\numdirecpp}\norm{\taynoisvec}_{\infty} \leq \frac{(\totsparsity-\abs{\est{\bivsuppvar}}) {\gradstepp}^2\smconst_3}{6 \numdirecpp}$. 
Plugging these bounds in \eqref{eq:grad_est_bd_gen_1}, we obtain for the stated choice of $\numdirecpp$ (cf. Remark \ref{rem:l1min_samp_bd}) that 
\begin{equation} \label{eq:grad_est_bd_gen_2}
\norm{(\est{\grad} f((\vecx)_{\calP}))_{\calP} - (\grad f((\vecx)_{\calP}))_{\calP}}_2 \leq 
\underbrace{\frac{C_3 (\totsparsity-\abs{\est{\bivsuppvar}}) {\gradstepp}^2\smconst_3}{6 \numdirecpp}}_{\derivsamperrpp} ; \quad \vecx \in [-1,1]^{\dimn}. 
\end{equation}
Finally, using the same arguments as before, we have that $\derivsamperrpp < \idenconst_1/2$ or equivalently 
${\gradstepp}^2 < \frac{3\numdirecpp \idenconst_1}{C_3 (\totsparsity-\abs{\est{\bivsuppvar}}) \smconst_3}$ is sufficient to recover $\univsupp$. This 
completes the proof.
% \end{proof}

%-----------------------------------------------------------------
% Proof of Theorem 2: Main Theorem for recovering S1 and S2 in 
% presence of arbitrary bounded noise
%-----------------------------------------------------------------
\subsection{Proof of Theorem \ref{thm:gen_overlap_arbnois}} \label{subsec:proof_thm_genover_arbnoise}

We prove a more detailed version of Theorem \ref{thm:gen_overlap_arbnois}, stated below.

%----------------------------------------------
% Theorem for abitrary bounded noise
%----------------------------------------------
\begin{theorem} \label{thm:gen_overlap_arbnois_det}
Assuming notation in Theorem \ref{thm:gen_overlap}, let $\numcen, \numcenpair, \numdirec, \numdirecp, \numdirecpp$ be as defined in 
Theorem \ref{thm:gen_overlap}. Say $\exnoisemag < \exnoisemag_1 = \frac{\idenconst_2^{3}}{192\sqrt{3} C_1 C_2^3 \sqrt{a^3 b \numdirecp \numdirec}}$. 
Denoting $\theta_1 = \cos^{-1}(-\exnoisemag / \exnoisemag_1)$, $b^{\prime} = 2C_1\sqrt{\numdirec\numdirecp}$, 
we have for  
%
%\begin{align}
$\gradstep \in (\sqrt{4{a^{\prime}}^2 a/(3b)}\cos(\theta_1/3 - 2\pi/3) , \sqrt{4{a^{\prime}}^2 a/(3b)}\cos(\theta_1/3))$ and  
$\hessstep \in (a^{\prime} - \sqrt{{a^{\prime}}^2 - \left((b\gradstep^2 + b^{\prime} \exnoisemag)/a\right)}$, 
$a^{\prime} + \sqrt{{a^{\prime}}^2 - \left((b\gradstep^2 + b^{\prime} \exnoisemag)/a\right)})$  
%\end{align}
%
that $\hesssamperr = C_2 \left(a\hessstep+ \frac{b \gradstep^2}{\hessstep} + \frac{b^{\prime}\exnoisemag}{\gradstep\hessstep}\right)$
implies $\est{\bivsupp} = \bivsupp$ with high probability. Given $\est{\bivsupp} = \bivsupp$, 
denote $a_1 = \frac{(\totsparsity-\abs{\est{\bivsuppvar}}) \smconst_3}{6\numdirecpp}$, $b_1 = \sqrt{\numdirecpp}$ 
and say $\exnoisemag < \exnoisemag_2 = \frac{\idenconst_1^{3/2}}{3\sqrt{6 a_1 C_3^3 b_1^2}}$. 
For $\theta_2 = \cos^{-1}(-\exnoisemag / \exnoisemag_2)$, 
let $\gradstepp \in (2\sqrt{\idenconst_1/(6 a_1 C_3)} \cos(\theta_2/3 - 2\pi/3), 2\sqrt{\idenconst_1/(6 a_1 C_3)} \cos(\theta_2/3))$. 
Then $\derivsamperrpp = C_3(a_1 {\gradstepp}^2 + \frac{b_1\exnoisemag}{\gradstepp})$
implies $\est{\univsupp} = \univsupp$ with high probability.
\end{theorem}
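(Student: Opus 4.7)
The plan is to mirror the noiseless proof of Theorem~\ref{thm:gen_overlap}, tracking the additional perturbation caused by bounded query noise through both the gradient and Hessian-row estimation stages, and then to solve the resulting step-size constraints, which turn out to be depressed cubics amenable to Cardano's trigonometric formula.

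First I would revisit gradient estimation. In the noisy setting equation~\eqref{eq:cs_form} becomes $\vecy = \matV\grad f(\vecx) + \taynoisvec + \exnoisevec$, where $\exnoise_j = (\exnoisep_{j,1} - \exnoisep_{j,2})/(2\gradstep)$ obeys $\norm{\exnoisevec}_{\infty} \leq \exnoisemag/\gradstep$. Inserting this into Theorem~\ref{thm:sparse_recon_bound} and reusing the Taylor-remainder bound from the noiseless proof, the gradient error gains an additive term of order $\sqrt{\numdirec}\,\exnoisemag/\gradstep$. Propagating into the Hessian-row system~\eqref{eq:hessrow_est_linfin}, the $\hessestnoisb$ block inherits this gradient error amplified by $1/\hessstep$, while $\hessestnoisa$ is bounded exactly as before. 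A second application of Theorem~\ref{thm:sparse_recon_bound} to each $(\maxdegree+1)$-sparse Hessian row, combined with the estimate $\norm{\hessestnoisa+\hessestnoisb}_{2} \leq \sqrt{\numdirecp}\,\norm{\hessestnoisa+\hessestnoisb}_{\infty}$, then yields the claimed per-row bound
\begin{equation*}
\hesssamperr = C_2\Bigl(a\hessstep + \frac{b\gradstep^2}{\hessstep} + \frac{b^{\prime}\exnoisemag}{\gradstep\hessstep}\Bigr), \qquad b^{\prime} = 2C_1\sqrt{\numdirec\numdirecp}.
\end{equation*}

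Next, as in the noiseless case, the thresholding argument together with $\numcen \geq \critintmeas_2^{-1}$ implies $\est{\bivsupp}=\bivsupp$ provided $\hesssamperr < \idenconst_2/2$. Rearranging gives a quadratic in $\hessstep$,
\begin{equation*}
a\hessstep^2 - (\idenconst_2/(2C_2))\hessstep + b\gradstep^2 + b^{\prime}\exnoisemag/\gradstep < 0,
\end{equation*}
which is feasible iff its discriminant is positive, i.e., iff
\begin{equation*}
4ab\gradstep^3 - (\idenconst_2/(2C_2))^2\gradstep + 4ab^{\prime}\exnoisemag < 0.
\end{equation*}
This is a depressed cubic in $\gradstep$. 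A direct computation at its local extrema shows three real roots exist precisely when $\exnoisemag < \exnoisemag_1$ as stated, the cubic is negative on the open interval between its two positive roots, and Cardano's trigonometric form translates those roots into $\sqrt{4{a^{\prime}}^2 a/(3b)}\cos(\theta_1/3 - 2\pi/3)$ and $\sqrt{4{a^{\prime}}^2 a/(3b)}\cos(\theta_1/3)$ with $\theta_1 = \cos^{-1}(-\exnoisemag/\exnoisemag_1)$, matching the statement; indeed one checks that $-\exnoisemag/\exnoisemag_1$ is exactly the cosine argument appearing in the standard formula after substituting $a^{\prime} = \idenconst_2/(4aC_2)$. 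Given such a $\gradstep$, the admissible range for $\hessstep$ is then the interval between the two roots of the quadratic, which produces the stated expression.

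The $\univsupp$ phase is entirely analogous but simpler: there is no intermediate Hessian step, so the single gradient recovery on $\calP$ gives $\derivsamperrpp = C_3\bigl(a_1 {\gradstepp}^2 + b_1\exnoisemag/\gradstepp\bigr)$. The requirement $\derivsamperrpp < \idenconst_1/2$, after multiplying through by $\gradstepp$, becomes a depressed cubic $a_1\gradstepp^3 - (\idenconst_1/(2C_3))\gradstepp + b_1\exnoisemag < 0$ whose solvability forces $\exnoisemag < \exnoisemag_2$ and whose two positive roots yield the stated trigonometric interval for $\gradstepp$. The main obstacle is the cubic bookkeeping: verifying that the closed-form thresholds $\exnoisemag_1,\exnoisemag_2$ and the trigonometric root expressions match those in the theorem exactly after substituting the constants $a,b,b^{\prime},a_1,b_1$ and $a^{\prime}$. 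The high-probability content is handled by a union bound over the polynomially many invocations of Theorem~\ref{thm:sparse_recon_bound}, each succeeding with probability $1-O(d^{-c})$, exactly as in the noiseless proof.
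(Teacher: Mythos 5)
Your proposal is correct and follows essentially the same route as the paper's proof: propagate the bounded-noise term $\norm{\exnoisevec}_\infty \le \exnoisemag/\gradstep$ through the two $\ell_1$-recovery stages to obtain the modified thresholds $\hesssamperr, \derivsamperrpp$, reduce $\hesssamperr < \idenconst_2/2$ (resp.\ $\derivsamperrpp < \idenconst_1/2$) to a quadratic in $\hessstep$ whose feasibility is a depressed cubic in $\gradstep$ (resp.\ a depressed cubic in $\gradstepp$), and read off $\exnoisemag_1, \exnoisemag_2$ and the trigonometric root intervals from the cubic discriminant and Cardano's formula. The only differences are cosmetic (you keep the cubic unnormalized where the paper divides through by $4ab$), so no gap.
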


\begin{proof}

We begin by establishing the conditions pertaining to the estimation of $\bivsupp$. Then  
we prove the conditions for estimation of $\univsupp$.
%--------------------
% Estimation of S_2
%--------------------
\paragraph{Estimation of $\bivsupp$.} We first note that the linear system \eqref{eq:cs_form} 
now has the form: $\vecy = \matV\grad f(\vecx) + \taynoisvec + \exnoisevec$ where 
$\exnoise_{j} = (\exnoisep_{j,1} - \exnoisep_{j,2})/(2\gradstep)$ represents the external noise
component, for $j=1,\dots,\numdirec$. Observe that $\norm{\exnoisevec}_{\infty} \leq \exnoisemag/\gradstep$.
Using the bounds on $\norm{\taynoisvec}_{\infty}, \norm{\taynoisvec}_{2}$ from Section \ref{subsec:proof_thm_genover}, 
we then observe that \eqref{eq:grad_est_over_bd} changes to:
\begin{equation} \label{eq:grad_est_over_bd_arbnois}
\norm{\est{\grad} f(\vecx) - \grad f(\vecx)}_2 \leq C_1\left(\frac{\gradstep^2((4\maxdegree+1)\totsparsity)\smconst_3}{6\numdirec} + \frac{\exnoisemag\sqrt{\numdirec}}{\gradstep}\right), 
\quad \forall \vecx \in [-(1+r),1+r]^{\dimn}.
\end{equation}
As a result, we then have that 
\begin{equation}
\norm{\hessestnoisb}_{\infty} \leq C_1\left(\frac{\gradstep^2((4\maxdegree+1)\totsparsity)\smconst_3}{3\numdirec\hessstep} + \frac{2\exnoisemag\sqrt{\numdirec}}{\gradstep\hessstep}\right).
\end{equation}
Now note that the bound on $\norm{\hessestnoisa}_{\infty}$ is unchanged from Section \ref{subsec:proof_thm_genover} i.e., 
$\norm{\hessestnoisa}_{\infty} \leq \frac{\hessstep(4\maxdegree+1)\smconst_3}{2\numdirecp}$. As a consequence, we see that 
\eqref{eq:hessrow_est_bd_gen_1} changes to:
\begin{equation} \label{eq:hessrow_est_bd_gen_arbnois}
\norm{\est{\grad} \partial_q f(\vecx) - \grad \partial_q f(\vecx)}_2 \leq \underbrace{C_2\left(\frac{\hessstep(4\maxdegree+1)\smconst_3}{2\sqrt{\numdirecp}} 
+ C_1\frac{\sqrt{\numdirecp}\gradstep^2((4\maxdegree+1)\totsparsity)\smconst_3}{3\numdirec\hessstep} + \frac{2C_1\exnoisemag\sqrt{\numdirec\numdirecp}}{\gradstep\hessstep}\right)}_{\hesssamperr}.
\end{equation}
With $a$ and $b$ as stated in the Theorem,  we then see that $\hesssamperr < \idenconst_2/2$ is equivalent to
\begin{equation}
a\hessstep^2 - \frac{\idenconst_2}{2C_2}\hessstep + \left(b\gradstep^2 + \frac{2C_1\exnoisemag\sqrt{\numdirec\numdirecp}}{\gradstep}\right) < 0.
\end{equation}
which in turn is equivalent to 
\begin{equation}
\hessstep \in \left(\frac{\idenconst_2}{4aC_2} - \sqrt{\left(\frac{\idenconst_2}{4aC_2}\right)^2 - 
\left(\frac{b\gradstep^3 + 2C_1\exnoisemag\sqrt{\numdirec\numdirecp}}{a\gradstep}\right)} , 
\frac{\idenconst_2}{4aC_2} + \sqrt{\left(\frac{\idenconst_2}{4aC_2}\right)^2 - \left(\frac{b\gradstep^3 + 2C_1\exnoisemag\sqrt{\numdirec\numdirecp}}{a\gradstep}\right)}\right).
\end{equation}
For the above bound to be valid, we require 
\begin{align}
\frac{b\gradstep^2}{a} + \frac{2C_1\exnoisemag\sqrt{\numdirec\numdirecp}}{a\gradstep} &< \frac{\idenconst_2^2}{16a^2C_2^2}, \\
\Leftrightarrow \gradstep^3 - \frac{\idenconst_2^2}{16abC_2^2}\gradstep + \frac{2C_1\exnoisemag\sqrt{\numdirec\numdirecp}}{b} &< 0 \label{eq:cub_gradstep}
\end{align}
to hold. \eqref{eq:cub_gradstep} is a cubic inequality. A cubic equation of the form: $y^3 + py + q = 0$, has $3$ distinct real roots 
if its discriminant $\frac{p^3}{27} + \frac{q^2}{4} < 0$. Note that for this to be possible, $p$ must be negative, which is the case in \eqref{eq:cub_gradstep}. 
Applying this to \eqref{eq:cub_gradstep} leads to the condition:
$\exnoisemag < \frac{\idenconst_2^{3}}{192\sqrt{3} C_1 C_2^3 \sqrt{a^3 b \numdirecp \numdirec}} = \exnoisemag_1$. 
Furthermore, the $3$ distinct real roots are given by:
\begin{equation} 
y_1 = 2\sqrt{-p/3}\cos(\theta/3), \ y_2 = -2\sqrt{-p/3}\cos(\theta/3 + \pi/3), \ y_3 = -2\sqrt{-p/3}\cos(\theta/3 - \pi/3) \label{eq:cub_roots}
\end{equation}
where $\theta = \cos^{-1}\left(\frac{-q/2}{\sqrt{-p^3/27}}\right)$. Applying this to \eqref{eq:cub_gradstep} 
then leads to $\theta_1 = \cos^{-1}(-\exnoisemag/\exnoisemag_1)$.
For $0 < \exnoisemag < \exnoisemag_1$ we have $\pi/2 < \theta_1 < \pi$ which implies $0 < y_2 < y_1$ and $y_3 < 0$. In particular 
if $q > 0$, then one can verify that $y^3 + py + q < 0$ holds if $y \in (y_2,y_1)$. 
Applying this to \eqref{eq:cub_gradstep}, we consequently obtain: 
\begin{equation}
\gradstep \in \left(\sqrt{\frac{\idenconst_2^2}{12 a b C_2^2}}\cos(\theta_1/3 - 2\pi/3) , \sqrt{\frac{\idenconst_2^2}{12 a b C_2^2}}\cos(\theta_1/3)\right). 
\end{equation}
%
%--------------------
% Estimation of S_1
%--------------------

\paragraph{Estimation of $\univsupp$.} We now prove the conditions for estimation of $\univsupp$. 
First note that \eqref{eq:taylor_exp_f_2} now changes to:
\begin{equation} \label{eq:taylor_exp_f_3}
\frac{f((\vecx + \gradstepp\vecvpp_j)_{\calP}) - f((\vecx - \gradstepp\vecvpp_j)_{\calP})}{2\gradstepp} = \dotprod{(\vecvpp_j)_{\calP}}{(\grad f((\vecx)_{\calP}))_{\calP}} 
+ \underbrace{\frac{\thirdtayrem_3((\zeta_j)_{\calP}) - \thirdtayrem_3((\zeta_j^{\prime})_{\calP})}{2\gradstepp}}_{\taynoissca_j} + \underbrace{\frac{\exnoisep_{j,1} - \exnoisep_{j,2}}{2\gradstepp}}_{\exnoise_j}, 
\end{equation}
for $j=1,\dots,\numdirecpp$. Denoting $\exnoisevec = [\exnoise_1 \cdots \exnoise_{\numdirecpp}]$, we have $\norm{\exnoisevec}_{\infty} \leq \exnoisemag/\gradstepp$. 
As the bounds on $\norm{\taynoisvec}_{2}, \norm{\taynoisvec}_{\infty}$ are unchanged, therefore \eqref{eq:grad_est_bd_gen_3} now changes to:
\begin{equation} \label{eq:grad_est_bd_gen_3}
\norm{(\est{\grad} f((\vecx)_{\calP}))_{\calP} - (\grad f((\vecx)_{\calP}))_{\calP}}_2 \leq 
\underbrace{C_3 \left(\frac{(\totsparsity-\abs{\est{\bivsuppvar}}) {\gradstepp}^2\smconst_3}{6 \numdirecpp} + \frac{\exnoisemag\sqrt{\numdirecpp}}{\gradstepp}\right)}_{\derivsamperrpp}  ; \quad \vecx \in [-1,1]^{\dimn}. 
\end{equation}
Denoting $a_1 = \frac{(\totsparsity-\abs{\est{\bivsuppvar}}) \smconst_3}{6\numdirecpp}$, $b_1 = \sqrt{\numdirecpp}$, we then see from 
\eqref{eq:grad_est_bd_gen_3} that the condition $\derivsamperrpp < \idenconst_1/2$ is equivalent to 
\begin{equation} \label{eq:s1_noise_cub}
{\gradstepp}^3 - \frac{\idenconst_1}{2 a_1 C_3}\gradstepp + \frac{b_1 \exnoisemag}{a_1} < 0.
\end{equation}
As discussed earlier for estimation of $\bivsupp$, the cubic equation corresponding to \eqref{eq:s1_noise_cub} has 
$3$ distinct real roots if its discriminant is negative. This then 
leads to the condition $\exnoisemag < \frac{\idenconst_1^{3/2}}{3\sqrt{6 a_1 C_3^3 b_1^2}} = \exnoisemag_2$. 
Then by using the expressions for the roots of the cubic from \eqref{eq:cub_roots}, one can verify 
that \eqref{eq:s1_noise_cub} holds if 
\begin{equation}
\gradstepp \in (2\sqrt{\idenconst_1/(6 a_1 C_3)} \cos(\theta_2/3 - 2\pi/3), 2\sqrt{\idenconst_1/(6 a_1 C_3)} \cos(\theta_2/3))
\end{equation}
with $\theta_2 = \cos^{-1}(-\exnoisemag/\exnoisemag_2)$. This completes the proof. 

\end{proof}

%-----------------------------------------------------------------
% Proof of Theorem 3: Main Theorem for recovering S1 and S2 in 
% presence of Gaussian noise
%-----------------------------------------------------------------
\subsection{Proof of Theorem \ref{thm:gen_overlap_gaussnois}} \label{subsec:proof_thm_genover_gauss}
We first derive conditions for estimating $\bivsupp$, and then for $\univsupp$.
\paragraph{Estimating $\bivsupp$.} Upon resampling $N_1$ times and averaging, we have for the
noise vector $\vecz \in \matR^{\numdirec}$ where  
\begin{equation}
\vecz = \left[\frac{(\exnoisep_{1,1} - \exnoisep_{1,2})}{2\gradstep} \cdots \frac{(\exnoisep_{\numdirec,1} - \exnoisep_{\numdirec,2})}{2\gradstep} \right], 
\end{equation}
that $\exnoisep_{j,1}, \exnoisep_{j,2} \sim \calN(0,\sigma^2/N_1)$ are i.i.d. Note that it is in fact sufficient to guarantee that 
$\abs{\exnoisep_{j,1} - \exnoisep_{j,2}} < 2\exnoisemag$ holds $\forall j=1,\dots,\numdirec$, and across all points where 
$\grad f$ is estimated. Indeed, we can then simply use the proof in Section \ref{subsec:proof_thm_genover_arbnoise}, 
for the setting of arbitrary bounded noise. 
To this end, note that $\exnoisep_{j,1}-\exnoisep_{j,2} \sim \calN(0,\frac{2\sigma^2}{N_1})$. 
It can be shown for $X \sim \calN(0,1)$ that:
\begin{equation}
\prob(\abs{X} > t) \leq \frac{2 e^{-t^2/2}}{t}, \quad \forall t > 0.
\end{equation}
Since $\exnoisep_{j,1}-\exnoisep_{j,2} = \sigma\sqrt{\frac{2}{N_1}} X$ therefore for any $\exnoisemag > 0$ we have that:
\begin{align}
\prob(\abs{\exnoisep_{j,1} - \exnoisep_{j,2}} > 2\exnoisemag) &= \prob\left(\abs{X} > \frac{2\exnoisemag}{\sigma}\sqrt{\frac{N_1}{2}}\right) \\
&\leq \frac{\sigma}{\exnoisemag}\sqrt{\frac{2}{N_1}} \exp\left(-\frac{\exnoisemag^2 N_1}{\sigma^2}\right) \\
&\leq \frac{\sqrt{2}\sigma}{\exnoisemag} \exp\left(-\frac{\exnoisemag^2 N_1}{\sigma^2}\right).
\end{align}
Now to estimate $\grad f(\vecx)$ we have $\numdirec$ many ``difference'' terms: $\exnoisep_{j,1} - \exnoisep_{j,2}$. We additionally estimate 
$\numdirecp$ many gradients at each $\vecx$ implying a total of $\numdirec(\numdirecp + 1)$ difference terms. As this is done for each 
$\vecx \in \baseset$, therefore we have a total of $\numdirec(\numdirecp + 1)(2\numcen+1)^2\abs{\twohashfam}$ many difference terms. 
Taking a union bound over all of them, we have for any $p_1 \in (0,1), \exnoisemag > 0$ 
that the choice $N_1 > \frac{\sigma^2}{\exnoisemag^2} \log (\frac{\sqrt{2} \sigma}{\exnoisemag p_1}\numdirec(\numdirecp+1)(2\numcen+1)^2\abs{\twohashfam})$ 
implies that the magnitudes of all difference terms are bounded by $2\exnoisemag$, with probability at least 
$1-p_1$. Thereafter, we can simply follow the proof in Section \ref{subsec:proof_thm_genover_arbnoise}, 
for estimating $\bivsupp$ in the presence of arbitrary bounded noise.

\paragraph{Estimating $\univsupp$.} In this case, we resample each query $N_2$ times and average -- therefore the variance of the noise 
terms gets scaled by $N_2$. We now have $\abs{\baseset_{\text{diag}}} \numdirecpp = (2\numcenpair+1) \numdirecpp$ many ``difference'' terms corresponding to Gaussian noise. Therefore, taking a union bound over all of them, we have for any $p_2 \in (0,1), \exnoisemagp > 0$ 
that the choice $N_2 > \frac{\sigma^2}{{\exnoisemagp}^2} \log(\frac{\sqrt{2} \sigma (2\numcenpair+1)\numdirecpp}{\exnoisemagp p_2})$ 
implies that the magnitudes of all difference terms are bounded by $2\exnoisemagp$, with probability at least 
$1-p_2$. Thereafter, we can simply follow the proof in Section \ref{subsec:proof_thm_genover_arbnoise}, 
for estimating $\univsupp$ in the presence of arbitrary bounded noise. The only change there would be to replace $\exnoisemag$ by $\exnoisemagp$.  % Proof of results in paper
%-------------------------------------------
% Learning individual components of model
%-------------------------------------------
\section{Learning individual components of model} \label{sec:est_comp}
Recall from \eqref{eq:unique_mod_rep} the unique representation of the model:
\begin{equation} 
f(x_1,\dots,x_d) = c + \sum_{p \in \univsupp}\phi_{p} (x_p) + \sum_{\lpair \in \bivsupp} \phi_{\lpair} \xlpair + \sum_{q \in \bivsuppvar: \degree(q) > 1} \phi_{q} (x_q), 
\end{equation}
where $\univsupp \cap \bivsuppvar = \emptyset$. Having estimated the sets $\univsupp$ and $\bivsupp$, 
we now show how the individual univariate and bivariate functions in the model can be estimated. 
We will see this for the settings of noiseless, as well as noisy (arbitrary, bounded noise and stochastic noies) point queries.

%-----------------------
% Noiseless case
%------------------------
\subsection{Noiseless queries}
In this scenario, we obtain the exact value $f(\vecx)$ at each query $\vecx \in \matR^{\dimn}$.
Let us first see how each $\phi_p$; $p \in \univsupp$ can be estimated.
For some $-1 = t_1 < t_2 < \dots < t_n = -1 $, consider the set
\begin{equation} \label{eq:univ_est_set}
\baseset_p := \left\{\vecx_i \in \matR^{\dimn}: (\vecx_i)_j =  \left\{
\begin{array}{rl}
t_i ; & j = p, \\
0 ; & j \neq p 
\end{array} \right\} ; 1 \leq i \leq n; 1 \leq j \leq \dimn\right\}; \quad p \in \univsupp.
\end{equation}
We obtain the samples $\set{f(\vecx_i)}_{i=1}^{n}$; $\vecx_i \in \baseset_p$. Here $f(\vecx_i) = \phi_p(t_i) + C$ with $C$ being a constant that 
depends on the other components in the model. Given the samples, one can then employ spline based ``quasi interpolant operators'' \cite{deBoor78}, 
to obtain an estimate $\phitil_p :[-1,1] \rightarrow \matR$, to $\phi_p + C$. 
Construction of such operators can be found for instance in \cite{deBoor78} (see also \cite{Gyorfi2002}). 
One can suitably choose the $t_i$'s and construct quasi interpolants that approximate any $C^m$ 
smooth univariate function with optimal $\Linfnorm[-1,1]$ error rate $O(n^{-m})$ \cite{deBoor78, Gyorfi2002}. 
Having obtained $\phitil_p$, we then define 
\begin{equation} \label{eq:univ_est}
\est{\phi}_p := \phitil_p -\expec_p[\phitil_p]; \quad p \in \univsupp, 
\end{equation}
to be the estimate of $\phi_p$. The bivariate components corresponding to each $\lpair \in \bivsupp$ 
can be estimated in a similar manner as above. To this end, for some 
strictly increasing sequences: $(-1 = \tp_1,\tp_2,\dots,\tp_{n_1} = 1)$, $(-1 = t_1,t_2,\dots,t_{n_1} = 1)$, 
consider the set 
\begin{equation} \label{eq:biv_est_set}
\baseset_{\lpair} := \left\{\vecx_{i,j} \in \matR^{\dimn}: (\vecx_{i,j})_q =  \left\{
\begin{array}{rl}
\tp_i ; & q = l, \\
t_j ; & q = l^{\prime}, \\
0 ; & q \neq l,l^{\prime} 
\end{array} \right\}  ; 1 \leq i,j \leq n_1; 1 \leq q \leq \dimn \right\}; \quad \lpair \in \bivsupp.
\end{equation}
We then obtain the samples $\set{f(\vecx_{i,j})}_{i,j=1}^{n_1}$; $\vecx_{i,j} \in \baseset_{\lpair}$ where
\begin{align} \label{eq:biv_part_fn_exp}
 f(\vecx_{i,j}) &= \phi_{\lpair}(\tp_i,t_j) + \sum_{\substack{l_1:(l,l_1) \in \bivsupp \\ l_1 \neq \lp}} \phi_{(l,l_1)} (\tp_i,0) + \sum_{\substack{l_1:(l_1,l) \in \bivsupp \\ l_1 \neq \lp}} 
\phi_{(l_1,l)} (0,\tp_i) \nonumber \\
&+ \sum_{\substack{\lp_1:(\lp,\lp_1) \in \bivsupp \\ \lp_1 \neq l}} \phi_{(\lp,\lp_1)} (t_j,0) + \sum_{\substack{\lp_1:(\lp_1,\lp) \in \bivsupp \\ \lp_1 \neq l}} \phi_{(\lp_1,\lp)} (0,t_j) + 
\phi_l(\tp_i) + \phi_{\lp}(t_j) + C, \\
&= g_{\lpair}(\tp_i,t_j) + C,
\end{align}
with $C$ being a constant. \eqref{eq:biv_part_fn_exp} is a general expression -- if for example $\degree(l) = 1$, then 
the terms $\phi_l,\phi_{(l,l_1)},\phi_{(l_1,l)}$ will be zero.
Given this, we can again obtain estimates $\phitil_{\lpair}:[-1,1]^2 \rightarrow \matR$ to $g_{\lpair} + C$, 
via spline based quasi interpolants.
Let us denote $n = n_1^2$ to be the total number of samples of $f$. For an appropriate choice of $(\tp_i,t_j)$'s, 
one can construct bivariate quasi interpolants that approximate any $C^m$ smooth bivariate function, with optimal 
$\Linfnorm[-1,1]^2$ error rate $O(n^{-m/2})$ \cite{deBoor78, Gyorfi2002}. 
Subsequently, we define the final estimates $\est{\phi}_{\lpair}$ to $\phi_{\lpair}$ as follows.
\begin{equation} \label{eq:biv_est}
\est{\phi}_{\lpair} :=  \left\{
\begin{array}{rl}
\phitil_{\lpair} - \expec_{\lpair}[\phitil_{\lpair}] ; & \degree(l), \degree(l^{\prime}) = 1, \\
\phitil_{\lpair} - \expec_{l}[\phitil_{\lpair}] ; &  \degree(l) = 1, \degree(l^{\prime}) > 1, \\
\phitil_{\lpair} - \expec_{l^{\prime}}[\phitil_{\lpair}] ; & \degree(l) > 1, \degree(l^{\prime}) = 1, \\
\phitil_{\lpair} - \expec_{l}[\phitil_{\lpair}] - \expec_{l^{\prime}}[\phitil_{\lpair}] + \expec_{\lpair}[\phitil_{\lpair}] ; & \degree(l) > 1, \degree(l^{\prime}) > 1.
\end{array} \right. 
\end{equation}
Lastly, we require to estimate the univariate's : $\phi_l$ for each $l \in \bivsuppvar$ such that $\degree(l) > 1$.
As above, for some strictly increasing sequences: $(-1 = \tp_1,\tp_2,\dots,\tp_{n_1} = 1)$, $(-1 = t_1,t_2,\dots,t_{n_1} = 1)$, 
consider the set 
\begin{equation} \label{eq:biv_univ_est_set}
\baseset_{l} := \Biggl\{\vecx_{i,j} \in \matR^{\dimn}: (\vecx_{i,j})_q =  \left\{
\begin{array}{rl}
\tp_i ; & q = l, \\
t_j ; & q \neq l \ \& \ q \in \bivsuppvar, \\
0; & q \notin \bivsuppvar, 
\end{array} \right\} ; \\ 1 \leq i,j \leq n_1; 1 \leq q \leq \dimn \Biggr\}; \quad l \in \bivsuppvar: \degree(l) > 1.
\end{equation}
We obtain $\set{f(\vecx_{i,j})}_{i,j=1}^{n_1}$; $\vecx_{i,j} \in \baseset_{l}$ where this time 
\begin{align}
 f(\vecx_{i,j}) &= \phi_l(\tp_i) + \sum_{\degree(\lp) > 1, \lp \neq l} \phi_{\lp}(t_j) + \sum_{\lp:\lpair \in \bivsupp} \phi_{\lpair}(\tp_i,t_j) \\
&+ \sum_{\lp:\lpairi \in \bivsupp} \phi_{\lpairi}(t_j,\tp_i) + \sum_{\qpair \in \bivsupp : q,\qp \neq l} \phi_{\qpair}(t_j,t_j) + C \\
&= g_l(\tp_i,t_j) + C
\end{align}
for a constant, $C$. Denoting $n = n_1^2$ to be the total number of samples of $f$, we can again obtain an estimate 
$\phitil_l(x_l,x)$ to $g_l(x_l,x) + C$, with $\Linfnorm[-1,1]^2$ error rate $O(n^{-3/2})$.
Then with $\phitil_l$ at hand, we define the estimate $\est{\phi}_l: [-1,1] \rightarrow \matR$ as
\begin{equation} \label{eq:biv_univ_est}
\est{\phi}_l := \expec_x[\phitil_l] - \expec_{(l,x)}[\phitil_l]; \quad l \in \bivsuppvar : \degree(l) > 1. 
\end{equation}
The following proposition formally describes the error rates for the aforementioned estimates.

%-----------------------------------------------
% Component estimation in noiseless scenario
%
\begin{proposition} \label{prop:no_nois_est_comp}
For $C^3$ smooth components $\phi_p, \phi_{\lpair},\phi_{l}$,  let $\est{\phi}_p$, $\est{\phi}_{\lpair}, \est{\phi}_{l}$ 
be the respective estimates as defined in \eqref{eq:univ_est}, \eqref{eq:biv_est} and \eqref{eq:biv_univ_est} respectively.
Also, let $n$ denote the number of queries (of $f$) made per component. We then have that: 
\begin{enumerate}
\item $\norm{\est{\phi}_p - \phi_p}_{\Linfnorm[-1,1]} = O(n^{-3}); \forall p \in \univsupp$, 
\item $\norm{\est{\phi}_{\lpair} - \phi_{\lpair}}_{\Linfnorm[-1,1]^2} = O(n^{-3/2}); \forall \lpair \in \bivsupp$, and 
\item $\norm{\est{\phi}_{l} - \phi_{l}}_{\Linfnorm[-1,1]} = O(n^{-3/2}); \forall l \in \bivsuppvar: \degree(l) > 1$.
\end{enumerate}
\end{proposition}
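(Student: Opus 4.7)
My plan is to treat the three claims uniformly in three steps: (i) bound the $\Linfnorm$ quasi-interpolation error of the underlying sampled function, (ii) verify that the prescribed centering operation, applied to the exact sampled function, recovers the target component, and (iii) show that the centering does not inflate the approximation error.

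For step (i), observe that by Assumption \ref{assum:smooth} each $\phi_p, \phi_{\lpair}$ (and hence the univariate $\phi_l$ of a degree $>1$ variable, obtained by summing $C^3$ components) is $C^3$. Consequently, the functions actually being sampled---namely $\phi_p + C$ on $[-1,1]$, $g_{\lpair} + C$ on $[-1,1]^2$, and $g_l + C$ on $[-1,1]^2$---are themselves $C^3$ smooth on their respective domains. Choosing the nodes $t_i$ (and $\tp_i$) to be quasi-uniform knots suitable for de~Boor--style spline quasi-interpolants, the classical results cited in \cite{deBoor78,Gyorfi2002} give the optimal rates $\norm{\phitil_p - (\phi_p+C)}_{\Linfnorm[-1,1]} = O(n^{-3})$ in the univariate case and, with $n = n_1^2$ tensor-product samples, $\norm{\phitil_{\lpair} - (g_{\lpair}+C)}_{\Linfnorm[-1,1]^2} = O(n^{-3/2})$ in the bivariate case (and similarly for $\phitil_l$).

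For step (ii), I appeal to the ANOVA identities from \eqref{eq:mod_mean}--\eqref{eq:mod_s2_uni}. For $\phi_p$ with $p \in \univsupp$, the centering $g \mapsto g - \expec_p[g]$ applied to $\phi_p + C$ yields $\phi_p - \expec_p[\phi_p] = \phi_p$ since $\expec_p[\phi_p] = 0$. For $\phi_{\lpair}$ with $\lpair \in \bivsupp$, expression \eqref{eq:biv_part_fn_exp} writes $g_{\lpair}$ as $\phi_{\lpair}(\tp_i,t_j)$ plus univariate-in-$x_l$ and univariate-in-$x_{\lp}$ nuisance terms; the four cases of \eqref{eq:biv_est} are exactly matched to the four cases in the unique ANOVA form \eqref{eq:mod_s2_biv}, so applying the appropriate combination of partial expectations kills all nuisance terms and leaves $\phi_{\lpair}$ intact. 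For $\phi_l$ with $\degree(l) > 1$, computing $\expec_x[g_l(x_l,x)]$ eliminates every bivariate summand involving $l$ (it has zero mean in its non-$l$ argument by the unique form) as well as every pure-in-$x$ summand (leaving only its mean, which is absorbed into the constant), producing $\phi_l(x_l) + C'$; subtracting $\expec_{(l,x)}[\phitil_l]$ then removes $C'$, using $\expec_l[\phi_l] = 0$.

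For step (iii), every averaging operator $\calE_\alpha$ (integration against uniform measure on $[-1,1]$) is a contraction on $\Linfnorm$: $\norm{\calE_\alpha h}_{\Linfnorm} \leq \norm{h}_{\Linfnorm}$. Therefore each centering map used in \eqref{eq:univ_est}, \eqref{eq:biv_est}, \eqref{eq:biv_univ_est} is $\Linfnorm$-bounded by an absolute constant $c \leq 4$, giving $\norm{\est{\phi} - \phi}_{\Linfnorm} \leq c\, \norm{\phitil - (\,g+C\,)}_{\Linfnorm}$ and hence the advertised rates.

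The main obstacle is the case analysis in step (ii) for $\phi_{\lpair}$: depending on whether $\degree(l)$ and $\degree(\lp)$ equal or exceed one, $g_{\lpair}$ contains different collections of nuisance univariate and bivariate terms, and the corresponding centering in \eqref{eq:biv_est} must be shown to kill precisely those terms without perturbing the target. However, the algebra is mechanical and directly mirrors the derivation of \eqref{eq:mod_s2_biv} from the ANOVA uniqueness argument in Section \ref{sec:anova_uniq_rep}, so it only requires careful bookkeeping rather than any new idea.
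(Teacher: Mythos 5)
Your proposal is correct and follows essentially the same route as the paper's proof: bound the quasi-interpolant error at rate $O(n^{-3})$ (univariate) or $O(n^{-3/2})$ (bivariate with $n=n_1^2$), verify via the ANOVA identities that the prescribed centering applied to the exact sampled function returns the target component (the paper's \eqref{eq:temp4} and the analogous identity $\expec_x[g_l]-\expec_{(l,x)}[g_l]=\phi_l$), and use that the averaging operators are $\Linfnorm$-contractions so the centering only multiplies the error by a bounded constant. The only cosmetic difference is that you phrase the last step as an operator-norm bound while the paper carries it out termwise via $\abs{\expec[z]}\leq\expec[\abs{z}]$ and the triangle inequality.
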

\begin{proof}

\begin{enumerate}
\item $\mathbf{p \in \univsupp}$.

We have for $\phitil_p$ that $\norm{\phitil_p - (\phi_p + C)}_{\Linfnorm[-1,1]} = O(n^{-3})$. 
Denoting $\phitil_p(x_p) - (\phi_p(x_p) + C) = z_p(x_p)$, this means $\abs{z_p(x_p)} = O(n^{-3})$, $\forall x_p \in [-1,1]$.
Now $\abs{\expec_p[\phitil_p - (\phi_p + C)]} = \abs{\expec_p[\phitil_p] - C} = \abs{\expec_p[z_p]} \leq \expec_p[\abs{z_p}] = O(n^{-3})$. 

Lastly, we have that:
\begin{align}
\norm{\est{\phi}_p - \phi_p}_{\Linfnorm[-1,1]} &= \norm{\phitil_p - \expec_p[\phitil_p] - \phi_p}_{\Linfnorm[-1,1]} \\ 
&= \norm{\phitil_p - (\phi_p + C) - (\expec_p[\phitil_p] - C)}_{\Linfnorm[-1,1]} \\ &= O(n^{-3}).
\end{align}

\item $\mathbf{\lpair \in \bivsupp}$.

We only consider the case where $\degree(l), \degree(\lp) > 1$ as proofs for the other cases are similar. 
Now for $\phitil_{\lpair}$ we have that $\norm{\phitil_{\lpair} - (g_{\lpair} + C)}_{\Linfnorm[-1,1]^2} = O(n^{-3/2})$. 
Denoting $\phitil_{\lpair}\xlpair - (g_{\lpair}\xlpair + C) = z_{\lpair} \xlpair$, this means 
$\abs{z_{\lpair}\xlpair} = O(n^{-3/2})$, $\forall \xlpair \in [-1,1]^2$. 
Consequently, one can easily verify that:

\begin{align}
 \norm{\expec_l[\phitil_{\lpair}] - (\expec_l[g_{\lpair}] + C)}_{\Linfnorm[-1,1]} = O(n^{-3/2}), \label{eq:temp5}\\
 \norm{\expec_{\lp}[\phitil_{\lpair}] - (\expec_{\lp}[g_{\lpair}] + C)}_{\Linfnorm[-1,1]} = O(n^{-3/2}), \label{eq:temp6}\\
 \norm{\expec_{\lpair}[\phitil_{\lpair}] - (\expec_{\lpair}[g_{\lpair}] + C)}_{\Linfnorm} = O(n^{-3/2}). \label{eq:temp7}
\end{align}

Now note that using the form for $g_{\lpair}$ from \eqref{eq:biv_part_fn_exp}, we have that 

\begin{align}
\expec_{l}[g_{\lpair}] &= \sum_{\substack{l_1:(l,l_1) \in \bivsupp \\ l_1 \neq \lp}} \expec_{l}[\phi_{(l,l_1)} (x_l,0)] + \sum_{\substack{l_1:(l_1,l) \in \bivsupp \\ l_1 \neq \lp}} 
\expec_l[\phi_{(l_1,l)} (0,x_l)] + \sum_{\substack{\lp_1:(\lp,\lp_1) \in \bivsupp \\ \lp_1 \neq l}} \phi_{(\lp,\lp_1)} (x_{\lp},0) \nonumber \\ 
&+ \sum_{\substack{\lp_1:(\lp_1,\lp) \in \bivsupp \\ \lp_1 \neq l}} \phi_{(\lp_1,\lp)} (0,x_{\lp}) + 
\phi_{\lp}(x_{\lp}) + C, \quad \text{and} \label{eq:temp1} \\
\expec_{\lp}[g_{\lpair}] &= \sum_{\substack{l_1:(l,l_1) \in \bivsupp \\ l_1 \neq \lp}} \phi_{(l,l_1)} (x_l,0) + \sum_{\substack{l_1:(l_1,l) \in \bivsupp \\ l_1 \neq \lp}} 
\phi_{(l_1,l)} (0,x_l) 
+ \sum_{\substack{\lp_1:(\lp,\lp_1) \in \bivsupp \\ \lp_1 \neq l}} \expec_{\lp} [\phi_{(\lp,\lp_1)} (x_{\lp},0)] \nonumber \\ 
&+ \sum_{\substack{\lp_1:(\lp_1,\lp) \in \bivsupp \\ \lp_1 \neq l}} \expec_{\lp}\phi_{(\lp_1,\lp)} (0,x_{\lp}) + 
\phi_l(x_l) + C, \quad \text{and} \label{eq:temp2} \\
\expec_{\lpair}[g_{\lpair}] &= \sum_{\substack{l_1:(l,l_1) \in \bivsupp \\ l_1 \neq \lp}} \expec_{l}[\phi_{(l,l_1)} (x_l,0)] + \sum_{\substack{l_1:(l_1,l) \in \bivsupp \\ l_1 \neq \lp}} 
\expec_l[\phi_{(l_1,l)} (0,x_l)] \nonumber \\ 
&+ \sum_{\substack{\lp_1:(\lp,\lp_1) \in \bivsupp \\ \lp_1 \neq l}} \expec_{\lp} [\phi_{(\lp,\lp_1)} (x_{\lp},0)]  
+ \sum_{\substack{\lp_1:(\lp_1,\lp) \in \bivsupp \\ \lp_1 \neq l}} \expec_{\lp}\phi_{(\lp_1,\lp)} (0,x_{\lp}) +  C. \label{eq:temp3}
\end{align}
We then have from \eqref{eq:biv_part_fn_exp}, \eqref{eq:temp1}, \eqref{eq:temp2}, \eqref{eq:temp3} that 
\begin{equation}
g_{\lpair} - \expec_{l}[g_{\lpair}] - \expec_{\lp}[g_{\lpair}] + \expec_{\lpair}[g_{\lpair}] = \phi_{\lpair}. \label{eq:temp4}
\end{equation}
Using \eqref{eq:temp5}, \eqref{eq:temp6}, \eqref{eq:temp7}, \eqref{eq:temp4}, and \eqref{eq:biv_est} it then follows that:
\begin{align}
\norm{\est{\phi}_{\lpair} - \phi_{\lpair}}_{\Linfnorm[-1,1]^2} = O(n^{-3/2}).
\end{align}

\item $\mathbf{l \in \bivsuppvar: \degree(l) > 1}$.

In this case, for $\phitil_l : [-1,1]^2 \rightarrow \matR$, we have that 
$\norm{\phitil_l -(g_l + C)}_{\Linfnorm[-1,1]^2} = O(n^{-3/2})$, with 
\begin{align}
g_{l}(x_l,x) =  \phi_l(x_l) &+ \sum_{\degree(\lp) > 1, \lp \neq l} \phi_{\lp}(x) + \sum_{\lp:\lpair \in \bivsupp} \phi_{\lpair}(x_l,x) \nonumber \\
&+ \sum_{\lp:\lpairi \in \bivsupp} \phi_{\lpairi}(x,x_l) + \sum_{\qpair \in \bivsupp : q,\qp \neq l} \phi_{\qpair}(x,x). \label{eq:temp11}
\end{align}
From \eqref{eq:temp11}, we see that: 
\begin{align}
\expec_{x}[g_l(x_l,x)] &= \phi_l(x_l) + \sum_{\qpair \in \bivsupp : q,\qp \neq l} \expec_x[\phi_{\qpair}(x,x)], \\
\text{and} \quad \expec_{(l,x)}[g_l(x_l,x)] &= \sum_{\qpair \in \bivsupp : q,\qp \neq l} \expec_{x}[\phi_{\qpair}(x,x)]. 
\end{align}
Hence clearly, $\expec_{x}[g_l(x_l,x)] - \expec_{(l,x)}[g_l(x_l,x)] = \phi_l(x_l)$. One can also easily verify that 
\begin{align}
\norm{\expec_{x}[\phitil_l] - (\expec_{x}[g_l] + C)}_{\Linfnorm[-1,1]} &= O(n^{-3/2}), \\ 
\norm{\expec_{(l,x)}[\phitil_l] - (\expec_{(l,x)}[g_l] + C)}_{\Linfnorm}  &= O(n^{-3/2}). 
\end{align}
Therefore it follows that 
\begin{align}
\norm{\est{\phi}_{l} - \phi_{l}}_{\Linfnorm[-1,1]} 
&= \norm{(\expec_{x}[\phitil_l] - \expec_{(l,x)}[\phitil_l]) - (\expec_{x}[g_l] - \expec_{(l,x)}[g_l])}_{\Linfnorm[-1,1]} \\
&\leq \norm{\expec_{x}[\phitil_l] - (\expec_{x}[g_l] + C)}_{\Linfnorm[-1,1]} + \norm{\expec_{(l,x)}[\phitil_l] - (\expec_{(l,x)}[g_l] + C)}_{\Linfnorm} \\
&= O(n^{-3/2}).
\end{align}
This completes the proof.
\end{enumerate}
\end{proof}

%---------------------
% Noisy case
%---------------------
\subsection{Noisy queries}
We now look at the case where for each query $\vecx \in \matR^d$, we obtain a noisy value $f(\vecx) + \exnoisep$.

\paragraph{Arbitrary bounded noise.} We begin with the scenario where $\exnoisep_i$ is arbitrary and bounded 
with $\abs{\exnoisep_i} < \exnoisemag; \ \forall i$. Since the noise is arbitrary in nature, therefore we simply proceed 
\emph{as in the noiseless case}, i.e., by approximating each component via a quasi-interpolant. As the magnitude of the noise is bounded 
by $\exnoisemag$, it results in an additional $O(\exnoisemag)$ term in the approximation error rates of Proposition \ref{prop:no_nois_est_comp}. 

To see this for the univariate case, let us denote $Q: C(\matR) \rightarrow \spacespl$ to be a quasi-interpolant operator. 
This a linear operator, with $C(\matR)$ denoting the space of continuous functions defined over $\matR$ and $\spacespl$ denoting a univariate 
spline space. Consider $u \in C^m[-1,1]$ for some positive integer $m$, and let $g:[-1,1] \rightarrow \matR$ be an arbitrary continuous function with 
$\norm{g}_{\Linfnorm[-1,1]} < \exnoisemag$. Denote $\est{u} = u + g$ to be the ``corrupted'' version of $u$, 
and let $n$ be the number of samples of $\est{u}$ used by $Q$. We then have by linearity of $Q$ that: 

\begin{equation}
\norm{Q(\est{u}) - u}_{\Linfnorm[-1,1]} = \norm{Q(u) + Q(g) - u}_{\Linfnorm[-1,1]} \leq 
\underbrace{\norm{Q(u) - u}_{\Linfnorm[-1,1]}}_{= O(n^{-m})} + \norm{Q}\underbrace{\norm{g}_{\Linfnorm[-1,1]}}_{\leq \norm{Q}\exnoisemag}, 
\end{equation}

with $\norm{Q}$ being the operator norm of $Q$. One can construct $Q$ with $\norm{Q}$ 
bounded\savefootnote{foot:quasinterp_ref}{For instance, see Theorems $14.4, 15.2$ in \cite{Gyorfi2002}}
from above by a constant depending only on $m$. The above argument can be extended easily to the multivariate case. We state this for the bivariate 
case for completeness. Denote $Q_1: C(\matR^2) \rightarrow \spacespl$ to be a quasi-interpolant operator, with $\spacespl$ denoting a bivariate spline space. 
Consider $u_1 \in C^m[-1,1]^2$ for some positive integer $m$, and let $g_1:[-1,1] \rightarrow \matR$ be an arbitrary continuous function with 
$\norm{g_1}_{\Linfnorm[-1,1]^2} < \exnoisemag$. Let $\est{u}_1 = u_1 + g_1$ and let $n$ be the number of samples of $\est{u_1}$ used by $Q_1$. 
We then have by linearity of $Q_1$ that: 

\begin{equation}
\norm{Q_1(\est{u_1}) - u_1}_{\Linfnorm[-1,1]^2} = \norm{Q_1(u_1) + Q_1(g_1) - u_1}_{\Linfnorm[-1,1]^2} \leq 
\underbrace{\norm{Q_1(u_1) - u_1}_{\Linfnorm[-1,1]^2}}_{= O(n^{-m/2})} + \norm{Q_1}\underbrace{\norm{g_1}_{\Linfnorm[-1,1]^2}}_{\leq \norm{Q_1}\exnoisemag}, 
\end{equation}

with $\norm{Q_1}$ being the operator norm of $Q_1$. As for the univariate case, one can construct $Q_1$ with $\norm{Q_1}$ 
bounded\repeatfootnote{foot:quasinterp_ref} from above by a constant depending only on $m$. 

Let us define our final estimates $\est{\phi}_p$, $\est{\phi}_{\lpair}$ and $\est{\phi}_{l}$ as in 
\eqref{eq:univ_est}, \eqref{eq:biv_est} and \eqref{eq:biv_univ_est}, respectively. 
The following proposition formally states the error bounds, for this particular noise model.

%---------------------------------------------------
% Component estimation for arbitrary bounded noise
%---------------------------------------------------
\begin{proposition}[Arbitrary bounded noise] \label{prop:arb_nois_est_comp}
For $C^3$ smooth components $\phi_p, \phi_{\lpair},\phi_{l}$,  let $\est{\phi}_p$, $\est{\phi}_{\lpair}, \est{\phi}_{l}$ 
be the respective estimates as defined in \eqref{eq:univ_est}, \eqref{eq:biv_est} and \eqref{eq:biv_univ_est} respectively.
Also, let $n$ denote the number of noisy queries (of $f$) made per component with the external noise magnitude being bounded 
by $\exnoisemag$. We then have that 

\begin{enumerate}
\item $\norm{\est{\phi}_p - \phi_p}_{\Linfnorm[-1,1]} = O(n^{-3}) + O(\exnoisemag); \forall p \in \univsupp$, 
\item $\norm{\est{\phi}_{\lpair} - \phi_{\lpair}}_{\Linfnorm[-1,1]^2} = O(n^{-3/2}) + O(\exnoisemag); \forall \lpair \in \bivsupp$, and 
\item $\norm{\est{\phi}_{l} - \phi_{l}}_{\Linfnorm[-1,1]} = O(n^{-3/2}) + O(\exnoisemag); \forall l \in \bivsuppvar: \degree(l) > 1$. 
\end{enumerate}
\end{proposition}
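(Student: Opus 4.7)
The plan is to mirror the structure of the proof of Proposition \ref{prop:no_nois_est_comp}, inserting an additional $O(\exnoisemag)$ term at each step. The key observation is already stated in the paragraph preceding the proposition: quasi-interpolant operators $Q$ (and their bivariate analogues $Q_1$) are linear, and have operator norms bounded by a constant depending only on the smoothness index $m$. So if $\est{u} = u + g$ with $\norm{g}_{\Linfnorm} < \exnoisemag$, then by the triangle inequality $\norm{Q(\est{u}) - u}_{\Linfnorm} \leq \norm{Q(u) - u}_{\Linfnorm} + \norm{Q}\exnoisemag = O(n^{-m}) + O(\exnoisemag)$, and analogously for the bivariate case with rate $O(n^{-m/2})$.

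Given this, I would proceed component by component, exactly paralleling the noiseless proof. For $p \in \univsupp$, the noisy queries $f(\vecx_i) + \exnoisep_i$ ($\vecx_i \in \baseset_p$) are samples of $\phi_p(t_i) + C + \exnoisep_i$, so that the resulting univariate quasi-interpolant $\phitil_p$ satisfies $\norm{\phitil_p - (\phi_p + C)}_{\Linfnorm[-1,1]} = O(n^{-3}) + O(\exnoisemag)$ by the linearity argument above (with $m = 3$). Since $\abs{\expec_p[\phitil_p] - C} \leq \expec_p[\abs{\phitil_p - (\phi_p + C)}] = O(n^{-3}) + O(\exnoisemag)$, the same triangle inequality step used in the noiseless proof gives the claimed bound on $\norm{\est{\phi}_p - \phi_p}_{\Linfnorm[-1,1]}$.

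For $\lpair \in \bivsupp$, the bivariate quasi-interpolant $\phitil_{\lpair}$ applied to the noisy samples satisfies $\norm{\phitil_{\lpair} - (g_{\lpair} + C)}_{\Linfnorm[-1,1]^2} = O(n^{-3/2}) + O(\exnoisemag)$, where $g_{\lpair}$ is the auxiliary function defined in \eqref{eq:biv_part_fn_exp}. Taking expectations $\expec_l$, $\expec_{\lp}$, $\expec_{\lpair}$ preserves these $\Linfnorm$ bounds (since $\abs{\expec[h]} \leq \expec[\abs{h}] \leq \norm{h}_{\Linfnorm}$), and then the algebraic identity \eqref{eq:temp4} that $g_{\lpair} - \expec_l[g_{\lpair}] - \expec_{\lp}[g_{\lpair}] + \expec_{\lpair}[g_{\lpair}] = \phi_{\lpair}$ carries over unchanged. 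Combining these via the triangle inequality (four terms, each contributing at most $O(n^{-3/2}) + O(\exnoisemag)$) yields the stated bivariate bound. The three sub-cases where $\degree(l)$ or $\degree(\lp)$ equals one are handled analogously with fewer expectation terms.

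For $l \in \bivsuppvar$ with $\degree(l) > 1$, the same argument gives $\norm{\phitil_l - (g_l + C)}_{\Linfnorm[-1,1]^2} = O(n^{-3/2}) + O(\exnoisemag)$, and then $\norm{\est{\phi}_l - \phi_l}_{\Linfnorm[-1,1]} \leq \norm{\expec_x[\phitil_l] - (\expec_x[g_l] + C)}_{\Linfnorm[-1,1]} + \norm{\expec_{(l,x)}[\phitil_l] - (\expec_{(l,x)}[g_l] + C)}_{\Linfnorm}$, using the identity $\expec_x[g_l] - \expec_{(l,x)}[g_l] = \phi_l$ from the noiseless proof. There is no serious obstacle here: the whole proof is a bookkeeping exercise that inserts an additive $O(\exnoisemag)$ at the single step where the quasi-interpolant is applied, and then simply transports this constant through the (finitely many) linear operations of taking expectations and forming the ANOVA-style estimator. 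The only thing to check carefully is that $\norm{Q}, \norm{Q_1}$ are uniformly bounded for the chosen spline family, which is a standard fact from \cite{deBoor78,Gyorfi2002}.
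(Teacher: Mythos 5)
Your proposal is correct and follows exactly the route the paper intends: the paper itself establishes the $\norm{Q(\est{u})-u}_{\Linfnorm} \leq O(n^{-m}) + \norm{Q}\exnoisemag$ bound in the paragraph preceding the proposition and then declares the rest "similar to Proposition \ref{prop:no_nois_est_comp} and hence skipped," which is precisely the bookkeeping you carry out. No gaps.
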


The proof is similar to that of Proposition \ref{prop:no_nois_est_comp} and hence skipped.

\paragraph{Stochastic noise.} We now consider the setting where $\exnoisep_i \sim \calN(0,\sigma^2)$ 
are i.i.d Gaussian random variables. Similar to the noiseless case, 
estimating the individual components again involves sampling $f$ along the 
subspaces corresponding to $\univsupp$, $\bivsupp$. Due to the presence of stochastic noise however, 
we now make use of \emph{nonparametric regression} techniques to compute the estimates. 
While there exist a number of methods that could be used for this 
purpose (cf. \cite{tsyba08}), we only discuss a specific one for clarity of exposition.

To elaborate, we again construct the sets defined in \eqref{eq:univ_est_set},\eqref{eq:biv_est_set} and\eqref{eq:biv_univ_est_set}. 
In particular, we uniformly discretize the domains 
$[-1,1]$ and $[-1,1]^2$, by choosing the respective $t_i$'s and $(\tp_i,t_j)$'s accordingly. 
This is the so called ``fixed design'' setting in nonparametric statistics. 
Upon collecting the samples $\set{f(\vecx_i) + \exnoisep_i}_{i=1}^{n}$ one can then derive estimates $\phitil_p$, $\phitil_{\lpair}, \phitil_l$, to 
$\phi_p + C$, $g_{\lpair} + C$ and $g_l + C$ respectively, by using \emph{local polynomial 
estimators} (cf. \cite{tsyba08, Fan96} and references within). 
It is known that these estimators achieve the (minimax optimal) $\Linfnorm$ error rate: 
$\Omega((n^{-1} \log n)^{\frac{m}{2m+\dimn}})$, for estimating $d$-variate, $C^m$ smooth functions over compact 
domains\footnote{See \cite{tsyba08} for $\dimn=1$, and \cite{Nemi00} for $\dimn \geq 1$}. 
Translated to our setting, we then have that the functions: $\phi_p + C$, $g_{\lpair} + C$ and $g_l + C$ are estimated at the rates: 
$O((n^{-1} \log n)^{\frac{3}{7}})$ and $O((n^{-1} \log n)^{\frac{3}{8}})$ respectively.

Denoting the above intermediate estimates by $\phitil_{p}$, $\phitil_{\lpair}$, $\phitil_l$, we define our final estimates 
$\est{\phi}_p$, $\est{\phi}_{\lpair}$ and $\est{\phi}_{l}$ as in 
\eqref{eq:univ_est}, \eqref{eq:biv_est} and \eqref{eq:biv_univ_est}, respectively. 
The following Proposition describes the error rates of these estimates. 

%---------------------------------------------
% Component estimation for stochastic noise
%---------------------------------------------
\begin{proposition}[i.i.d Gaussian noise] \label{prop:gauss_nois_est_comp}
For $C^3$ smooth components $\phi_p, \phi_{\lpair},\phi_{l}$,  let $\est{\phi}_p$, $\est{\phi}_{\lpair}, \est{\phi}_{l}$ 
be the respective estimates as defined in \eqref{eq:univ_est}, \eqref{eq:biv_est} and \eqref{eq:biv_univ_est} respectively.
Let $n$ denote the number of noisy queries (of $f$) made per component, with noise samples $\exnoisep_1,\exnoisep_2,\dots,\exnoisep_n$ 
being i.i.d Gaussian. Furthermore, let $\expec_{z}[\cdot]$ denote expectation w.r.t the 
joint distribution of $\exnoisep_1,\exnoisep_2,\dots,\exnoisep_n$. We then have that 

\begin{enumerate}
\item $\expec_{z}[\norm{\est{\phi}_p - \phi_p}_{\Linfnorm[-1,1]}] = O((n^{-1} \log n)^{\frac{3}{7}}); \forall p \in \univsupp$, 
\item $\expec_{z}[\norm{\est{\phi}_{\lpair} - \phi_{\lpair}}_{\Linfnorm[-1,1]^2}] = O((n^{-1} \log n)^{\frac{3}{8}}); \forall \lpair \in \bivsupp$, and 
\item $\expec_{z}[\norm{\est{\phi}_{l} - \phi_{l}}_{\Linfnorm[-1,1]}] = O((n^{-1} \log n)^{\frac{3}{8}}); \forall l \in \bivsuppvar: \degree(l) > 1$.
\end{enumerate}
\end{proposition}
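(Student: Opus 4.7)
The plan is to mirror the structure of the proof of Proposition \ref{prop:no_nois_est_comp}, replacing the deterministic quasi-interpolant approximation error with the expected $\Linfnorm$ risk of a local polynomial regression estimator. The substantive black box is the following: for a $C^m$ smooth function $u$ on a compact $d$-dimensional domain, observed with i.i.d.\ Gaussian noise at $n$ fixed-design points on a (near-)uniform grid, a local polynomial estimator $\est{u}$ attains the minimax-optimal rate $\expec_z[\norm{\est{u} - u}_{\Linfnorm}] = O((n^{-1}\log n)^{m/(2m+d)})$ (cf. \cite{tsyba08} for $d=1$ and \cite{Nemi00} for general $d$). Instantiating $m=3$ gives $3/7$ for $d=1$ and $3/8$ for $d=2$, which match the rates claimed.

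First, I would collect noisy samples of $f$ over the grids $\baseset_p$, $\baseset_{\lpair}$ and $\baseset_{l}$ constructed in Section \ref{sec:est_comp}. On these subspaces $f$ reduces (plus noise) to $\phi_p + C$, $g_{\lpair} + C$ and $g_l + C$ respectively, each inheriting $C^3$ smoothness from Assumption \ref{assum:smooth}. Applying the local polynomial estimator on each grid produces intermediate estimates $\phitil_p$, $\phitil_{\lpair}$ and $\phitil_l$ satisfying
\begin{align*}
\expec_z[\norm{\phitil_p - (\phi_p + C)}_{\Linfnorm[-1,1]}] &= O((n^{-1}\log n)^{3/7}), \\
\expec_z[\norm{\phitil_{\lpair} - (g_{\lpair} + C)}_{\Linfnorm[-1,1]^2}] &= O((n^{-1}\log n)^{3/8}), \\
\expec_z[\norm{\phitil_l - (g_l + C)}_{\Linfnorm[-1,1]^2}] &= O((n^{-1}\log n)^{3/8}).
\end{align*}

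Second, I would form the final estimates $\est{\phi}_p, \est{\phi}_{\lpair}, \est{\phi}_{l}$ via the centering/marginalization formulas \eqref{eq:univ_est}, \eqref{eq:biv_est}, \eqref{eq:biv_univ_est}, and transfer the intermediate rates through them. Two elementary facts suffice: for any measurable $h$ on a product of copies of $[-1,1]$, $\norm{\expec_{\cdot}[h]}_{\Linfnorm} \leq \norm{h}_{\Linfnorm}$ by Jensen, and $\expec_z[\norm{X-Y}_{\Linfnorm}] \leq \expec_z[\norm{X}_{\Linfnorm}] + \expec_z[\norm{Y}_{\Linfnorm}]$. Combining these with the algebraic identities already verified in Proposition \ref{prop:no_nois_est_comp}---namely $g_{\lpair} - \expec_l[g_{\lpair}] - \expec_{\lp}[g_{\lpair}] + \expec_{\lpair}[g_{\lpair}] = \phi_{\lpair}$ (plus its degenerate analogues when $\degree(l)$ or $\degree(\lp)$ equals $1$), and $\expec_x[g_l] - \expec_{(l,x)}[g_l] = \phi_l$---each $\est{\phi} - \phi$ can be written as a sum of at most four terms, each of the form (noise-free marginalization of) $\phitil - (g + C)$. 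The unknown constant $C$ cancels under centering exactly as in the noiseless case, and Fubini lets expectation commute with the uniform-measure marginalizations appearing in \eqref{eq:biv_est} and \eqref{eq:biv_univ_est}.

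The main obstacle is therefore not technical: the argument is essentially bookkeeping on top of the minimax rate cited above. The only care needed is in the bivariate-to-univariate reduction for $l \in \bivsuppvar$ with $\degree(l)>1$: even though $\phi_l$ is univariate, the estimator is obtained by marginalizing a bivariate local polynomial estimate $\phitil_l$, so its rate is the bivariate $3/8$ rather than the univariate $3/7$. This is consistent with the statement and inherited directly from the bivariate rate for $\phitil_l$.
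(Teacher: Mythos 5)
Your proposal matches the paper's own proof: the same minimax $\Linfnorm$ rate for fixed-design local polynomial estimators is the black box, and the rates are transferred to the final estimates through the same centering/marginalization identities, with Tonelli/Fubini justifying the interchange of the noise expectation with the uniform-measure marginalizations. No gaps; the observation that the degree-$>1$ univariate components inherit the bivariate $3/8$ rate is exactly the point the paper makes as well.
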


Although the proof is again very similar to that of Proposition \ref{prop:no_nois_est_comp}, there 
are some technical differences. Hence we provide a brief sketch of the proof, avoiding details already 
highlighted in the proof of Proposition \ref{prop:no_nois_est_comp}.

\begin{proof}
\begin{enumerate}
\item $\mathbf{p \in \univsupp}$.

We have for $\phitil_p$ that $\expec_{z}[\norm{\phitil_p - (\phi_p + C)}_{\Linfnorm[-1,1]}] = O((n^{-1} \log n)^{\frac{3}{7}})$. 
Denoting $\phitil_p(x_p) - (\phi_p(x_p) + C) = b_p(x_p)$, this means $\expec_{z}[\abs{b_p(x_p)}] = O((n^{-1} \log n)^{\frac{3}{7}})$. 
Now, 

\begin{equation}
\expec_{z}[\abs{\expec_p[\phitil_p - (\phi_p + C)]}] = \expec_{z}[\abs{\expec_p[b_p]}] \leq \expec_{z}[\expec_p[\abs{b_p}]] = 
\expec_{p}[\expec_z[\abs{b_p(x_p)}]] = O((n^{-1} \log n)^{\frac{3}{7}}). 
\end{equation}

The penultimate equality above involves swapping the order of expectations, which is possible by Tonelli's 
theorem (since $\abs{b_p} > 0$). Then using triangle inequality, it follows that 
$\expec_z[\norm{\est{\phi}_p - \phi_p}_{\Linfnorm[-1,1]}] = O((n^{-1} \log n)^{\frac{3}{7}})$.

\item $\mathbf{\lpair \in \bivsupp}$.

We only consider the case where $\degree(l), \degree(\lp) > 1$ as proofs for the cases are similar. 
For $\phitil_{\lpair}$, we have that 
$\expec_z[\norm{\phitil_{\lpair} - (g_{\lpair} + C)}_{\Linfnorm[-1,1]^2}] = O((n^{-1} \log n)^{\frac{3}{8}})$. 
Denoting $\phitil_{\lpair}\xlpair - (g_{\lpair}\xlpair + C) = b_{\lpair} \xlpair$, this means 
$\expec_z[\abs{b_{\lpair}\xlpair}] = O((n^{-1} \log n)^{\frac{3}{8}})$, $\forall \xlpair \in [-1,1]^2$. 
Using Tonelli's theorem as earlier, one can next verify that:
%--------------
\begin{align}
 \expec_z[\norm{\expec_l[\phitil_{\lpair}] - (\expec_l[g_{\lpair}] + C)}_{\Linfnorm[-1,1]}] = O((n^{-1} \log n)^{\frac{3}{8}}), \label{eq:gauss_temp5} \\
 \expec_z[\norm{\expec_{\lp}[\phitil_{\lpair}] - (\expec_{\lp}[g_{\lpair}] + C)}_{\Linfnorm[-1,1]}] = O((n^{-1} \log n)^{\frac{3}{8}}), \label{eq:gauss_temp6} \\
 \expec_z[\abs{\expec_{\lpair}[\phitil_{\lpair}] - (\expec_{\lpair}[g_{\lpair}] + C)}] = O((n^{-1} \log n)^{\frac{3}{8}}). \label{eq:gauss_temp7}
\end{align}
%---------------
As in the proof of Proposition \ref{prop:no_nois_est_comp}, we obtain from \eqref{eq:gauss_temp5}, \eqref{eq:gauss_temp6}, 
\eqref{eq:gauss_temp7}, \eqref{eq:temp4}, \eqref{eq:biv_est} (via triangle inequality):
\begin{align}
\expec_z[\norm{\est{\phi}_{\lpair} - \phi_{\lpair}}_{\Linfnorm[-1,1]^2}] = O((n^{-1} \log n)^{\frac{3}{8}}).
\end{align}

\item $\mathbf{l \in \bivsuppvar: \degree(l) > 1}$.

In this case, for $\phitil_l : [-1,1]^2 \rightarrow \matR$, we have that 
$\expec_z[\norm{\phitil_l - (g_l + C)}_{\Linfnorm[-1,1]^2}] = O((n^{-1} \log n)^{\frac{3}{8}})$, with 
$g_l(x_l,x)$ as defined in \eqref{eq:temp11}. Using Tonelli's theorem as earlier, one can verify that  
\begin{align}
\expec_z[\norm{\expec_{x}[\phitil_l] - (\expec_{x}[g_l] + C)}_{\Linfnorm[-1,1]}] &= O((n^{-1} \log n)^{\frac{3}{8}}), \\ 
\expec_z[\abs{\expec_{(l,x)}[\phitil_l] - (\expec_{(l,x)}[g_l] + C)}] &= O((n^{-1} \log n)^{\frac{3}{8}}). 
\end{align}
Then using the fact $\expec_{x}[g_l(x_l,x)] - \expec_{(l,x)}[g_l(x_l,x)] = \phi_l(x_l)$, 
we obtain via triangle inequality the bound: 
$\expec_z[\norm{\est{\phi}_{l} - \phi_{l}}_{\Linfnorm[-1,1]}] = O((n^{-1} \log n)^{\frac{3}{8}})$. This completes the proof.
\end{enumerate}
\end{proof} % Recovering individual components of model 

\end{document}